\newcommand*\rot{\rotatebox{90}}
\renewenvironment{figure}[1][]{
  \begin{originalfigure}[#1]
    \begin{mdframed}[linecolor=black!0]
}{
    \end{mdframed}
  \end{originalfigure}
}
\newcommand{\bea}{\begin{eqnarray}}
\newcommand{\eea}{\end{eqnarray}}
\def\({\left(}
\def\){\right)}
\def\[{\left[}
\def\]{\right]}
\definecolor{lightyellow}{rgb}{1.0, 0.95, 0.7}
\definecolor{Blue}{rgb}{0, 0, 0.8}
\definecolor{blue}{rgb}{0,0,1}
\definecolor{darkgreen}{rgb}{0,0.40,0}
\definecolor{firebrick}{rgb}{0.698,0.133,0.133}
\definecolor{colorA}{rgb}{1,0,0}
\definecolor{colorB}{rgb}{0,0.3,1}
\definecolor{colorC}{rgb}{0.9,0.8,0.2}
\definecolor{colorD}{rgb}{0,0.65,0}
\definecolor{lesslightgray}{rgb}{0.5,0.5,0.5}
\definecolor{light-gray}{gray}{0.95}
\let\hat\widehat
\newcommand{\Softmax}{{\rm Softmax}}
\def\R{\mathbb{R}}
\let\cite\citep 
\def\th@remark{%
  \thm@headfont{\bfseries}%
  \normalfont %
  \thm@preskip\topsep \divide\thm@preskip\tw@
  \thm@postskip\thm@preskip
}
\theoremstyle{definition}
\newtheorem{theorem}{Theorem}[section]
\newtheorem{lemma}{Lemma}[section]
\theoremstyle{definition}
\newtheorem{definition}{Definition}[section]
\theoremstyle{remark}
\newtheorem{assumption}{Assumption}[section]
\crefname{theorem}{Theorem}{Theorems}
\crefname{proposition}{Proposition}{Propositions}
\crefname{lemma}{Lemma}{Lemmas}
\crefname{corollary}{Corollary}{Corollaries}
\crefname{definition}{Definition}{Definitions}
\crefname{assumption}{Assumption}{Assumptions}
\crefname{remark}{Remark}{Remarks}
\crefname{problem}{Problem}{Problems}
\crefname{property}{Property}{property}
\numberwithin{equation}{section}
\numberwithin{theorem}{section}
\numberwithin{proposition}{section}
\numberwithin{definition}{section}
\numberwithin{lemma}{section}
\numberwithin{assumption}{section}
\numberwithin{remark}{section}
\let\save@mathaccent\mathaccent
\newcommand*\if@single[3]{%
    \setbox0\hbox{${\mathaccent"0362{#1}}^H$}%
    \setbox2\hbox{${\mathaccent"0362{\kern0pt#1}}^H$}%
    \ifdim\ht0=\ht2 #3\else #2\fi
}
\newcommand*\rel@kern[1]{\kern#1\dimexpr\macc@kerna}
\newcommand*\widebar[1]{\@ifnextchar^{{\wide@bar{#1}{0}}}{\wide@bar{#1}{1}}}
\newcommand*\wide@bar[2]{\if@single{#1}{\wide@bar@{#1}{#2}{1}}{\wide@bar@{#1}{#2}{2}}}
\newcommand*\wide@bar@[3]{%
    \begingroup
    \def\mathaccent##1##2{%
        \let\mathaccent\save@mathaccent
        \if#32 \let\macc@nucleus\first@char \fi
        \setbox\z@\hbox{$\macc@style{\macc@nucleus}_{}$}%
        \setbox\tw@\hbox{$\macc@style{\macc@nucleus}{}_{}$}%
        \dimen@\wd\tw@
        \advance\dimen@-\wd\z@
        \divide\dimen@ 3
        \@tempdima\wd\tw@
        \advance\@tempdima-\scriptspace
        \divide\@tempdima 10
        \advance\dimen@-\@tempdima
        \ifdim\dimen@>\z@ \dimen@0pt\fi
        \rel@kern{0.6}\kern-\dimen@
        \if#31
        \overline{\rel@kern{-0.6}\kern\dimen@\macc@nucleus\rel@kern{0.4}\kern\dimen@}%
        \advance\dimen@0.4\dimexpr\macc@kerna
        \let\final@kern#2%
        \ifdim\dimen@<\z@ \let\final@kern1\fi
        \if\final@kern1 \kern-\dimen@\fi
        \else
        \overline{\rel@kern{-0.6}\kern\dimen@#1}%
        \fi
    }%
    \macc@depth\@ne
    \let\math@bgroup\@empty \let\math@egroup\macc@set@skewchar
    \mathsurround\z@ \frozen@everymath{\mathgroup\macc@group\relax}%
    \macc@set@skewchar\relax
    \let\mathaccentV\macc@nested@a
    \if#31
    \macc@nested@a\relax111{#1}%
    \else
    \def\gobble@till@marker##1\endmarker{}%
    \futurelet\first@char\gobble@till@marker#1\endmarker
    \ifcat\noexpand\first@char A\else
    \def\first@char{}%
    \fi
    \macc@nested@a\relax111{\first@char}%
    \fi
    \endgroup
    }
\let\bar\widebar
\newcommand*{\redefinesymbolwitharg}[1]{%
  \expandafter\let\csname ltx#1\expandafter\endcsname\csname #1\endcsname
  \@namedef{#1}{\@ifnextchar{^}{\@nameuse{#1@}}{\@nameuse{#1@}^{}}}%
  \expandafter\def\csname #1@\endcsname^##1##2{%
     \csname ltx#1\endcsname\ifx!##1!\else^{##1}\fi\mathopen{}\mathclose\bgroup\left(##2\aftergroup\egroup\right)
     }%
}
\definecolor{LightCyan}{rgb}{0.8, 0.9, 1}
\newlength{\Oldarrayrulewidth}
\newcommand{\cellcolorcontents}[2]{%
  \begingroup
  \setlength{\fboxsep}{0pt}%
  \colorbox{#1}{\strut #2}%
  \endgroup
}
\newcolumntype{C}[1]{>{\collectcell\cellcolorcontents{LightCyan}}c<{#1}} %
\definecolor{LightCyan}{rgb}{0.8, 0.9, 1}
\newcolumntype{b}{>{\columncolor{LightCyan}\hspace{0pt}}c}
\definecolor{cadetgrey}{rgb}{0.57, 0.64, 0.69}
\newcolumntype{g}{>{\columncolor{cadetgrey}\hspace{0pt}}c}
\setlist[itemize]{leftmargin=1em, before=\vspace{-0.5em}, after=\vspace{-0.5em}, itemsep=0.1em}
\setlist[enumerate]{leftmargin=1.4em, 
before=\vspace{-0.5em}, after=\vspace{-0.5em}, 
itemsep=0.1em}
\newcommand{\sys}{{\textsc{Germ}}\xspace}
\newcommand{\syt}{\textsc{Germ-T}\xspace}
\newcommand{\syb}{{\textbf{\textsc{Germ}}}\xspace}
\begin{document}
\twocolumn[
\icmltitle{

Fast and Low-Cost Genomic Foundation Models via Outlier Removal
}

\icmlsetsymbol{equal}{*}

\begin{icmlauthorlist}
\icmlauthor{Haozheng Luo}{equal,yyy}
\icmlauthor{Chenghao Qiu}{equal,xxx}
\icmlauthor{Maojiang Su}{yyy}
\icmlauthor{Zhihan Zhou}{yyy}
\icmlauthor{Zoe Mehta}{ttt}
\icmlauthor{Guo Ye}{yyy}
\icmlauthor{Jerry Yao-Chieh Hu}{yyy}
\icmlauthor{Han Liu}{yyy}
\end{icmlauthorlist}

\icmlaffiliation{yyy}{Northwestern University}
\icmlaffiliation{xxx}{Tianjin University}
\icmlaffiliation{ttt}{Vernon Hills High School}

\icmlcorrespondingauthor{Haozheng Luo}{\href{mailto:hluo@u.northwestern.edu}{hluo@u.northwestern.edu}}

\icmlcorrespondingauthor{Chenghao Qiu}{\href{mailto:q1320460765@tju.edu.cn}{q1320460765@tju.edu.cn}}

\icmlcorrespondingauthor{Maojiang Su}{\href{mailto:maojiangsu2030@u.northwestern.edu}{maojiangsu2030@u.northwestern.edu}}

\icmlcorrespondingauthor{Zhihan Zhou}{\href{mailto:zhihanzhou2020@u.northwestern.edu}{zhihanzhou2020@u.northwestern.edu}}

\icmlcorrespondingauthor{Zoe Mehta}{\href{mailto:zoe.mehta@vhhscougars.org}{zoe.mehta@vhhscougars.org}}

\icmlcorrespondingauthor{Guo Ye}{\href{mailto:guoye2018@u.northwestern.edu}{guoye2018@u.northwestern.edu}}

\icmlcorrespondingauthor{Jerry Yao-Chieh Hu}{\href{mailto:jhu@u.northwestern.edu}{jhu@u.northwestern.edu}}

\icmlcorrespondingauthor{Han Liu}{\href{mailto:hanliu@northwestern.edu}{hanliu@northwestern.edu}}

\icmlkeywords{DNA, Outlier-free, Genomic Foundation Models, Outlier Removal}

\vskip 0.3in
]

\printAffiliationsAndNotice{\icmlEqualContribution} %

\newif\ifrevision
\revisiontrue  %

\newcommand{\revise}[1]{%
  \ifrevision
    {\color{black}#1}%
  \else
    #1%
  \fi
}

\begin{abstract}

To address the challenge of scarce computational resources in genomic modeling, we introduce \textsc{\textbf{Germ}}, a genomic foundation model with strong compression performance and fast adaptability. 
\textsc{Germ} improves upon models like DNABERT-2 by eliminating outliers that hinder low-rank adaptation and post-training quantization, enhancing both efficiency and robustness.
We replace the vanilla attention layer with an outlier-free mechanism inspired by associative memory models. By removing outliers during both pre-training and fine-tuning, this approach accelerates adaptation, reduces computational costs, and enhances quantization robustness within acceptable loss margins.
Additionally, we propose \textsc{\textbf{Germ-T}}, a strategy that employs small-step continual learning within the outlier-free framework, leveraging original checkpoints to avoid retraining from scratch.
Empirically, \textsc{Germ} improves fine-tuning performance by {37.98\%} and quantization by {64.34\%} over the baseline model. It also reduces average kurtosis by {92.14\%} and maximum infinity norm by {82.77\%}. Compared to leading methods, \textsc{Germ} consistently delivers superior performance, offering a practical solution for genomic modeling in resource-constrained settings.
Code is available at \url{https://github.com/MAGICS-LAB/GERM}.

\end{abstract}

\section{Introduction}
\label{sec:intro}
We introduce a novel model named \syb by utilizing outlier-free Hopfield layer~\citep{hu2024outlier} to replace traditional attention layer~\citep{vaswani2017attention}. 
\sys offers a quantization-friendly and rapidly adaptable DNA genomic foundation model (GFM), making it ideal for deployment and fine-tuning on resource-constrained devices.

Existing GFMs, such as DNABERT2~\citep{zhou2024dnabert2} and GenomeOcean~\citep{zhou2025genomeocean}, achieve state-of-the-art performance on various genomics tasks. 
However, many GFM users include not only professional computational researchers but also researchers from traditional biomedical labs, who often operate on resource-constrained platforms such as mobile phones, edge devices, and IoT systems.
The large size and high computational cost of these models make them challenging to use in such devices. 
Also, if researchers require the model to adapt to new tasks, it should be able to be fine-tuned on those tasks without demanding substantial computational resources.
Efficient methods, such as low-rank adaptation fine-tuning (e.g., LoRA~\citep{hu2021lora}) and post-training quantization (e.g., SmoothQuant~\citep{xiao2023smoothquant}) help reduce training and inference costs for GFMs. 
However, directly applying these techniques to original models without modification leads to huge performance drops. 
This results from outlier values in the model's attention mechanisms, inherited from pretrained models~\citep{clark2019does, kovaleva2019revealing}. 
Prior studies~\citep{hu2024outlier, bondarenko2024quantizable, clark2019does} show that transformer-based models often direct attention toward less useful tokens, referred to as outliers. 
These outliers cause inefficiencies that reduce the overall model performance.
Additional studies~\citep{wufast,huang2024rolorafinetuningrotatedoutlierfree,hu2024computational} reveal that low-rank adaptation worsens the outlier issue. 
Outliers from both pretrained models and low-rank adaptation fine-tuning processes distort outputs and lower accuracy.

\begin{figure*}
    \centering
    \includegraphics[width=\textwidth]{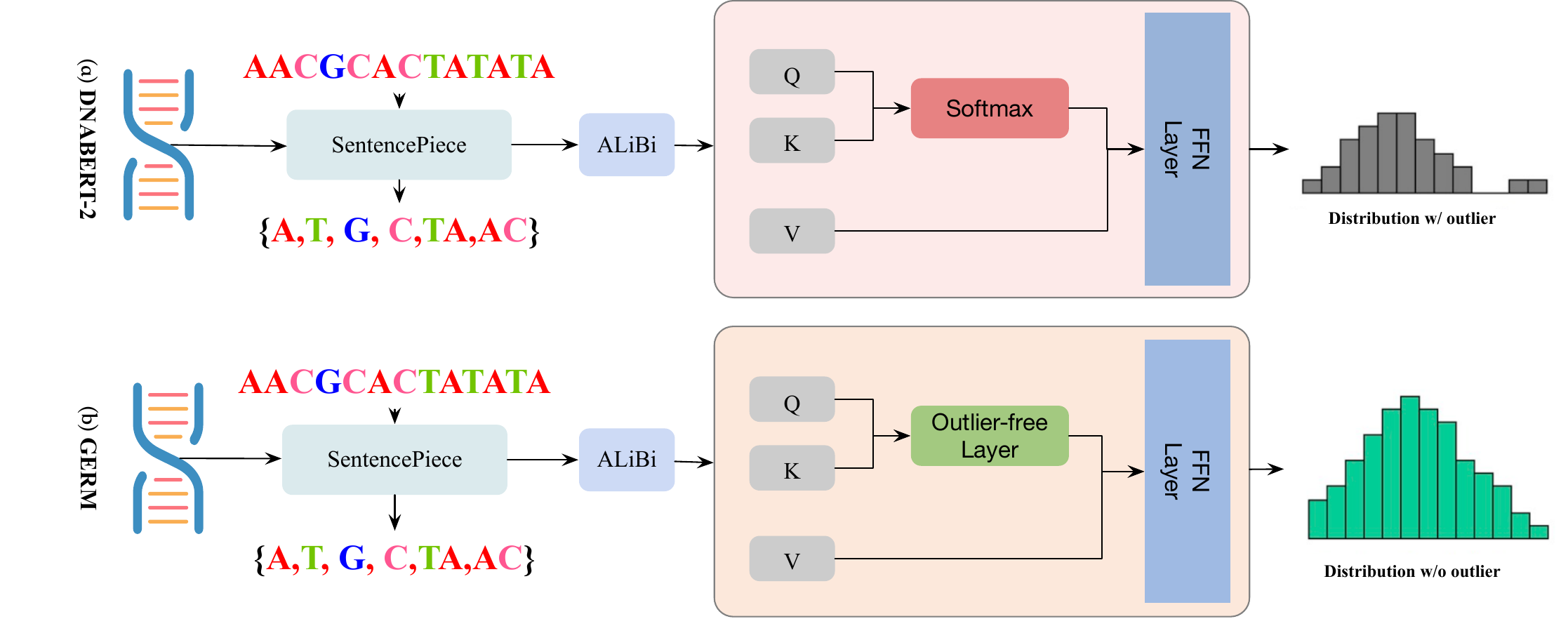}
     \caption{\small\textbf{Structural Comparison of DNABERT-2 and \syb Models.} This diagram illustrates the differences in processing pipelines between DNABERT-2 and \sys. Both DNABERT-2 and \sys use the SciencePiece tokenizer with BPE for tokenization. Following that, both models employ ALiBi for positional encoding in the embedding layer. However, as shown in (a), DNABERT-2's transformer architecture outputs the outliers. We propose replacing the vanilla \Softmax with an outlier-free layer. In (b), the output of the attention mechanism removes outliers from the original output.
    }
    \label{fig:pipeline}
\end{figure*}

To address inefficiencies caused by outliers in transformer-based genomic foundation models, \sys draws inspiration from associative memory models \citep{hu2024outlier,hu2024computational_hop,hu2023sparse,xu2024bishop,wu2024uniform,wu2023stanhop,ramsauer2020hopfield}. 
We replace the standard transformer attention mechanisms with an outlier-free attention layer proposed by \citet{hu2024outlier}, which detects and removes outliers occurring during pretraining and low-rank adaptation (LoRA).

This outlier mitigation in \sys results in a ``triple win'' for genomic foundation models: faster low-rank adaptation, reduced computational demands, and more reliable post-training quantization. 
On resource-constrained devices, incorporating fine-tuning techniques such as QLoRA \cite{dettmers2024qLoRA} and quantization methods like OmniQuant \cite{omniquant} enables efficient fine-tuning and inference with minimal performance degradation. This significantly enhances its accessibility and usability, promoting broader deployment without specialized hardware.

Additionally, addressing the limitations of \cite{hu2024outlier}, particularly its resource-heavy training from scratch, we introduce \syt. \syt adds an outlier-free layer to existing GFM and uses small-step continual training to efficiently achieve near-optimal performance.

\textbf{Contributions.} 
We propose \syb, an outlier-free GFM with enhanced quantization robustness and rapid low-rank adaptation. Our contributions are as follows:
\begin{itemize}
    \item We propose an outlier-free model structure to address and mitigate outliers introduced by pretrained models and low-rank adaptation. 
    This approach enables rapid low-rank adaptation and robust post-training quantization, significantly enhancing the overall performance of the quantized model and model finetuning.
    Notably, our model fine-tunes DNABERT in just 5 minutes on a single NVIDIA GeForce RTX 2080 Ti GPU.
    \item Methodologically, we replace the standard transformer attention mechanism in the GFM with an outlier-free layer to enhance the model’s ability to handle and mitigate outliers during pretraining and fine-tuning. Additionally, we introduce a continual learning strategy as a compromise version to avoid retraining the model from scratch. This strategy ensures suboptimal performance in terms of model quantization robustness and low-rank adaptation.
    \item Experimentally, We evaluate the performance and efficiency of our method using the existing DNABERT-2 model \cite{zhou2024dnabert2} structure. Additionally, we benchmark it against the state-of-the-art low-rank adaptation methods and post-training quantization techniques. Compared to the standard framework, the proposed framework achieves average performance improvements of \textbf{37.98\%} in finetuning and \textbf{64.34\%} in quantization, respectively. Additionally, \sys shows a reduction of \textbf{92.14\%} in the average kurtosis and \textbf{82.77\%} in the maximum infinity norm on average.
\end{itemize}

\subsection*{Related Work}
\paragraph{Quantization.}
Considering the quantized object, exiting foundation models (FMs) quantization can be classified into two fields: weight-only quantization and weight-activation quantization. 
For \textbf{weight-only quantization}, prior studies focus on converting weights to low-bit values. 
For instance, GPTQ~\citep{gptq} uses block-wise reconstruction for 3/4-bit quantization. 
SpQR~
\citep{spqr}, OWQ~\citep{owq}, and AWQ~\citep{awq} emphasize the significance of weights tied to higher-magnitude activations.  
Therefore, SpQR and OWQ employ mixed-precision quantization to safeguard vital weights, while AWQ opts for channel-wise scaling to avoid mixed-precision's hardware inefficiency. 
QLoRA~\citep{dettmers2024qLoRA}, LoftQ \citep{li2023loftq} and QUIP~\citep{quip} restore the capabilities of the quantized model through parameter-efficient fine-tuning. 
For \textbf{weight-activation quantization}, prior studies compress both weights and activations. 
SmoothQuant~\citep{xiao2023smoothquant}, LLM.int8()~\citep{llmint8}, and Outlier Suppression~\citep{wei2022outlier} achieve W8A8 quantization by managing activation outliers. 
LLM.int8() uses mixed-precision decomposition, while the other two employ channel-wise scaling. 
Furthermore, Outlier Suppression+ \citep{wei2023outlier} adds channel-wise shifting to drive W6A6 quantization. 
In comparison to other quantization approaches, including prior works \cite{wei2023outlier, xiao2023smoothquant} that address the outlier issue during quantization, the outlier-free layer in \sys is more effective at managing outliers within the model's attention mechanism.
It provides \sys with a unique advantage in terms of quantization robustness.

\paragraph{Outlier Values in Quantization.}
 Numerous studies \citep{hu2024outlier,ma2024outlieraware,heo2023rethinking,puccetti-etal-2022-outlier, kovaleva2021bert, bondarenko-etal-2021-understanding, luo-etal-2021-positional} observe outlier values in the transformer-based language models such as BERT \citep{devlin2018bert} and early GPT \citep{radford2019language} models.
Since the advent of FMs \citep{zhou2024dnabert2,zhou2024dnabert,zhang2022opt,brown2020language} root in the GPT and BERT, recent studies by \citet{xiao2023smoothquant, ahmadian2023intriguing,llmint8} tackle the existence of outlier values in FMs.
According to them, these outliers exhibit a large magnitude of values at the shared dimensions of hidden states across tokens.
More recently, \citet{bondarenko2024quantizable, sun2024massive,hu2024outlier} explain that the outliers attribute to the vertical pattern in the attention mechanism \citep{xiao2023efficient, kovaleva-etal-2019-revealing}, influencing the performance of FMs.
In particular, \citet{sun2024massive} claim a different type of outlier existing in the hidden states of specific tokens. 
However, most of these studies concentrate on language and vision models, leaving the impact of outliers on genomic foundation models largely unexplored. 
Additionally, methods like \citet{hu2024outlier} require training from scratch to eliminate outliers, which is computationally expensive.

\paragraph{Genomic Foundation Model.}
The majority of genomic foundation models (GFMs) use transformers to model sequence dependencies, similar to BERT~\citep{devlin2018bert} and GPT~\citep{brown2020language} in NLP. 
Specifically, DNABERT~\citep{dnabert2021ji} and DNABERT-2~\citep{zhou2024dnabert2} leverage transformers for DNA sequence analysis by employing masked language modeling and fine-tuning for biological tasks. 
In addition, Nucleotide Transformer~\citep{dalla2024nucleotide} excels at molecular phenotype prediction and variant prioritization, while HyenaDNA~\citep{nguyen2023hyenadna} is optimized for modeling long-range genomic dependencies.
Furthermore, GenomeOcean~\citep{zhou2025genomeocean} provides an efficient 4-billion-parameter genome foundation model for diverse, context-aware DNA sequence generation.
However, these models demand significant computational resources and lack robustness to quantization, rendering them unsuitable for deployment on resource-constrained devices. Specifically, GenomeOcean utilizes 64 NVIDIA A100 80G GPUs over a span of 14 days for training.
This limits accessibility for research labs with limited computational capacity.
\revise{
More recently, Evo~\citep{nguyen2024sequence}, a generative genomic model, integrating Transformer and Hyena operator to efficiently capture long-range dependencies in genomic sequences, achieving a context window of 131k nucleotides. Furthermore, Evo uniquely bridges bridges the DNA-RNA-protein central dogma via cross-modal inference without task-specific supervision.
}

\section{\textsc{Germ}} 
\label{sec:method}

This section introduces the proposed method, which comprises the outlier-free architecture, small-step continual learning, and the DNA genomic foundation model (GFM).
The outlier-free architecture is designed to mitigate challenges posed by outliers during the model fine-tuning process. 
Meanwhile, the small-step continual learning technique extends the training process using smaller learning steps after the initial training, aiming to address and mitigate the outliers present in the original model checkpoints.

In our study, we develop the GFM framework to train DNA sequence-based genomic foundation models that employ Transformer-based architectures such as DNABERT~\cite{dnabert2021ji} and Nucleotide Transformer~\cite{dalla2024nucleotide}. These models use DNA tokenization and Transformer attention mechanisms, making them well-suited for integrating techniques like LoRA and the outlier-free mechanisms proposed in our approach.
Alternatively, models like HyenaDNA~\cite{nguyen2023hyenadna} and Caduceus \cite{schiff2024caduceus} utilize different architectures, such as convolutional layers or the Mamba architecture. While these models introduce novel features, they are not currently the most widely adopted in genomic modeling and require further research.
Therefore, we adopt DNABERT-2 as the baseline in this paper, as it best represents the Transformer-based DNA GFMs central to our study. The proposed outlier-free architecture of \sys is illustrated in \cref{fig:pipeline}.

 \paragraph{Outliers Challenge in Transformer Architecture.} 
\citet{clark2019does,kovaleva2019revealing} reveal that BERT assigns disproportionately high attention weights to certain tokens, such as delimiters and end-of-sentence (\rm{eos}) markers. 
Consequently, these tokens dominate the attention mechanism, overshadowing more informative tokens. Additionally, \citet{kobayashi2020attention} show that tokens with smaller value vectors often receive significantly larger attention weights. 
These phenomena indicate that transformer-based models may focus on less relevant information, leading to inefficient processing. Studies by \citet{hu2024outlier, bondarenko2024quantizable} highlight the underlying cause of the outlier challenge in transformer-based models, proposing that transformers do not require updates when the attention inputs are sufficiently informative. However, the $\Softmax$ function causes low-value tokens to receive disproportionately high attention weights. This leads to a wide range of attention scores and the presence of outliers, which adversely affect the model's performance. Additionally, this issue increases computational and memory demands during training and results in significant performance degradation after model quantization. Consequently, implementing a strategy to address outliers during both the pretraining and fine-tuning stages is crucial.
 Numerous studies address the outlier problem across different model stages, including pre-training \cite{hu2024outlier}, fine-tuning \cite{hu2024computational}, and inference \cite{bondarenko2024quantizable,xiao2023smoothquant}. 
In our study, we extend the work of \citet{hu2024outlier} by utilizing the memory-associated retrieval dynamics function $\Softmax_1$. This function is defined as 
\begin{align*}
    \Softmax_1(S) \coloneqq \frac{\exp(S)}{1+\sum_{i=1}^{L}\exp(S_i)},
\end{align*}
where $S$ is the input to the activation function. 
This approach addresses outlier problems in GFMs. 
Additionally, we provide a theoretical analysis of the expressive guarantee of low-rank adaptation for transformer-based GFMs with $\Softmax_1$ in \cref{app:theory}.

\begin{table*}[!h]
  \centering
  \caption{\small\textbf{Comparing \sys and \syt with DNABERT-2 in a Post-Training Quantisation (PTQ) setting.} We perform experiments on \sys with baseline models using four quantization methods (Traditional W8A8, SmoothQuant, Outlier Suppression, OmniQuant) across three quantization configurations (Weight-8bit-Activation-8bit (W8A8), Weight-6bit-Activation-6bit (W6A6), and Weight-4bit-Activation-4bit (W4A4)). The evaluation metrics include the Matthews Correlation Coefficient (MCC), the difference in MCC (Delta MCC) compared to the official DNABERT-2 checkpoint, the \textit{average kurtosis}, and the \textit{maximum infinity norm} $\norm{\mathbf{x}}_{\infty}$ for outlier values at FP16. The best results are highlighted in bold, while the second-best results are underlined. In most configurations, \sys demonstrates superior fine-tuning performance compared to DNABERT-2. } 
  \label{tab:ptq}
 \begin{NiceTabular}{lccccccc}
  \toprule
  Model & \#Bits & \makecell{Quantization \\ Method} & \makecell{MCC ($\uparrow$)} & \makecell{Delta MCC \\ ($\downarrow$)} & \makecell{Avg Performance \\ Drop ($\downarrow$)} & Avg. Kurtosis ($\downarrow$) & \makecell{Max inf.\\ norm ($\downarrow$)} \\
  \midrule
  Official & 16W/16A & - & 66.11 & - & - & \underline{39.68} & 53.61 \\
  \midrule
  \multirow{10}{*}{\rot{DNABERT-2}} & 16W/16A & \multirow{2}{*}{-} & 59.11 & 7.00 & - & \multirow{10}{*}{270.90} & \multirow{10}{*}{61.64} \\
    & 8W/8A &  & 33.60$\pm$0.41 & 32.51 & 43.81\% & & \\
  \cline{2-6}
    & 8W/8A & \multirow{3}{*}{SmoothQuant} & 36.51$\pm$0.02 & 45.37 & 38.63\% & & \\
    & 6W/6A &  & 20.74$\pm$0.04 & 45.37 & 66.18\% & & \\
    & 4W/4A &  & -1.03$\pm$0.06 & 67.06 & 101.24\% & & \\
  \cline{2-6}
    & 8W/8A & \multirow{2}{*}{Outlier} & 25.26$\pm$0.02 & 40.85 & 57.60\% & & \\
    & 6W/6A &  & 27.84$\pm$0.28 & 38.27 & 52.71\% & & \\
  \cline{2-6}
    & 8W/8A & \multirow{3}{*}{OmniQuant} & 49.92$\pm$0.05 & 16.19 & 15.76\% & & \\
    & 6W/6A &  & 48.47$\pm$0.14 & 17.64 & 18.61\% & & \\
    & 4W/4A &  & 2.94$\pm$0.19 & 63.17 & 94.78\% & & \\
  \midrule
  \multirow{10}{*}{\rot{\sys}} & 16W/16A & \multirow{2}{*}{-} & \cellcolor{LightCyan} 59.73 & \cellcolor{LightCyan} 6.38 & \cellcolor{LightCyan} - & \cellcolor{LightCyan} \multirow{10}{*}{\textbf{21.29}} & \cellcolor{LightCyan} \multirow{10}{*}{\textbf{10.62}} \\
    & 8W/8A &  & \cellcolor{LightCyan} 57.30$\pm$0.08 &\cellcolor{LightCyan} 8.81 & \cellcolor{LightCyan} \textbf{3.77\%} & \cellcolor{LightCyan}& \cellcolor{LightCyan}\\
  \cline{2-6}
    & 8W/8A & \multirow{3}{*}{SmoothQuant} & \cellcolor{LightCyan} 56.65$\pm$0.15 & \cellcolor{LightCyan} 9.46 & \cellcolor{LightCyan} \underline{4.82\%} &\cellcolor{LightCyan} &\cellcolor{LightCyan} \\
    & 6W/6A &  & \cellcolor{LightCyan}56.48$\pm$0.07 &\cellcolor{LightCyan} 9.63 & \cellcolor{LightCyan} \textbf{5.45\%} & \cellcolor{LightCyan} & \cellcolor{LightCyan} \\
    & 4W/4A &  & \cellcolor{LightCyan} 20.05$\pm$0.00 &  \cellcolor{LightCyan} 46.06 & \cellcolor{LightCyan} \textbf{69.44\%} & \cellcolor{LightCyan} & \cellcolor{LightCyan} \\
  \cline{2-6}
    & 8W/8A & \multirow{2}{*}{Outlier} & \cellcolor{LightCyan} 45.87$\pm$0.08 &  \cellcolor{LightCyan} 20.24 & \cellcolor{LightCyan} \textbf{25.23\%} &  \cellcolor{LightCyan} & \cellcolor{LightCyan} \\
    & 6W/6A &  & \cellcolor{LightCyan} 40.57$\pm$0.56 & \cellcolor{LightCyan} 25.54 & \cellcolor{LightCyan} \underline{36.27\%} &  \cellcolor{LightCyan} & \cellcolor{LightCyan} \\
  \cline{2-6}
    & 8W/8A & \multirow{3}{*}{OmniQuant} & \cellcolor{LightCyan} 55.99$\pm$0.09 & \cellcolor{LightCyan} 10.12 &  \cellcolor{LightCyan} \underline{5.95\%} &  \cellcolor{LightCyan} & \cellcolor{LightCyan} \\
    & 6W/6A &  & \cellcolor{LightCyan} 55.70$\pm$0.03 &\cellcolor{LightCyan}  10.41 & \cellcolor{LightCyan} \textbf{6.41\%} & \cellcolor{LightCyan}& \cellcolor{LightCyan}\\
    & 4W/4A &  & \cellcolor{LightCyan} 49.42$\pm$0.00 & \cellcolor{LightCyan} 16.69 & \cellcolor{LightCyan} \textbf{17.17\%} & \cellcolor{LightCyan} & \cellcolor{LightCyan}\\
  \midrule
  \multirow{10}{*}{\rot{\syt}} & 16W/16A & \multirow{2}{*}{-} & 59.30 & 6.81 &  - & \multirow{10}{*}{251.40} & \multirow{10}{*}{\underline{28.49}} \\
    & 8W/8A &  & 38.38$\pm$0.15 & 27.73 & \underline{35.27\%} & & \\
  \cline{2-6}
    & 8W/8A & \multirow{3}{*}{SmoothQuant} & 57.52$\pm$0.00 & 8.59 & \textbf{3.01\%} & & \\
    & 6W/6A &  & 30.34$\pm$0.04 & 35.77 & \underline{48.83\%} & & \\
    & 4W/4A &  & 0.22$\pm$0.00 & 65.89 & \underline{99.63\%} & & \\
  \cline{2-6}
    & 8W/8A & \multirow{2}{*}{Outlier} & 42.57$\pm$0.05 & 23.54 & \underline{28.31\%} & & \\
    & 6W/6A &  & 46.02$\pm$0.06 & 20.06 & \textbf{22.34\%} & & \\
  \cline{2-6}
    & 8W/8A & \multirow{3}{*}{OmniQuant} & 56.80$\pm$0.12 & 9.31 & \textbf{4.21\%} & & \\
    & 6W/6A &  & 55.41$\pm$0.00 & 10.71 & \underline{6.57\%} & & \\
    & 4W/4A &  &  3.86$\pm$0.00 & 62.25 & \underline{93.49\%} & & \\
  \bottomrule
\end{NiceTabular}
\end{table*}

\paragraph{Small-step Continual Learning.}
The outlier removal technique introduced in $\mathtt{OutEffHop}$ \citep{hu2024outlier} is highly effective in reducing the impact of outliers during model pretraining. 
However, a major limitation of this approach is the need to retrain the model from scratch, which is a significant challenge for large-scale models like GFMs due to the extensive time and computational resources required.
To address this issue, we suggest a small-step continual learning approach as a compromise to the existing \sys structure, called \syt.
It involves resuming training with an outlier-free model structure after the initial training phase to address and mitigate outliers in the original model checkpoints. 
This approach aims to lower the computational cost and time needed for retraining while still optimizing performance.
Although this small-step continual learning technique may not be as effective as full retraining, it offers a more efficient and cost-effective solution for outlier removal in GFMs. 
For users with limited computational resources who cannot train a model from scratch and rely on 8-bit or 6-bit quantization during inference, \syt offers a viable compromise strategy.

\paragraph{DNA Genomic Foundation Model.}
We implement a simple yet effective design for the DNA genomic foundation model (GFM) following DNABERT-2 \cite{zhou2024dnabert2}. 
Initially, we employ SentencePiece \cite{kudo2018sentencepiece} with Byte Pair Encoding (BPE) \cite{sennrich2015neural}, a subword tokenization method, to process DNA sequences. 
SentencePiece is particularly effective for handling the large number of unique tokens present in DNA without assuming any pre-tokenization, such as k-mer segmentation \cite{chor2009genomic}.
SentencePiece with BPE is used in natural language processing for word segmentation and learns a fixed-sized vocabulary of variable-length tokens based on character co-occurrence frequencies.
Due to the significant difference between natural language and DNA sequences, the vocabulary sizes used in the NLP domain \cite{zhang2022opt, radford2019language, kenton2019bert} are not suitable for DNA sequences.
In our study, we set vocabulary size to 4096, as it best balances model performance with computational efficiency among candidates.

\begin{table*}[htp]
\centering
\caption{\small
\textbf{Comparing \sys and \syt with DNABERT-2 in a Low-Rank Adaptation Setting.}
We perform experiments on \sys with baseline models across three Low-Rank Adaptation methods (LoRA, QLoRA, LoftQ). The evaluation metrics include the Matthews Correlation Coefficient (MCC), the Delta MCC performance difference relative to the official DNABERT-2 checkpoint, the \textit{average kurtosis}, and the \textit{maximum infinity norm} $\norm{\mathbf{x}}_{\infty}$ for outlier values. Additionally, we measure the average performance drop after low-rank adaptation to evaluate the efficiency of \sys in this setting. The best results are highlighted in bold, while the second-best results are underlined. In most configurations, \sys demonstrates superior fine-tuning performance compared to DNABERT-2.
The \sys demonstrates an average performance improvement of \textbf{37.98\%} compared to the DNABERT-2 model.} 
\label{tab:peft_result}
\begin{tabular}{ccccccc}
\toprule
Models & \makecell{Low-Rank \\ Adaptation Method}& \makecell{MCC ($\uparrow$)} & \makecell{Delta MCC \\ different ($\downarrow$)}  & \makecell{Avg Performance\\ Drop ($\downarrow$)} & Avg. kurtosis($\downarrow$) & \makecell{Max inf.\\ norm($\downarrow$)} \\
\midrule
 \multirow{4}{*}{ \rot{\makecell{DNA\\BERT-2}}} & Full  & 59.11& 7.00& -&  270.90& 61.41\\
  & LoRA     & 50.91$\pm$1.67& 15.2& 13.87\%& -& 219.20\\

 &  QLoRA   & 50.65$\pm$0.13& 15.46& 14.31\%& 292.85& \underline{53.91}\\
  &  LoftQ   & 50.76$\pm$0.06& 15.31& 14.05\%& 299.18& 54.18\\
    \midrule
\multirow{4}{*}{ \rot{\makecell{\sys}}} & Full  & \cellcolor{LightCyan}   59.73&   \cellcolor{LightCyan} 6.38&    \cellcolor{LightCyan} -&  \cellcolor{LightCyan}  \textbf{21.29}& \cellcolor{LightCyan} \textbf{10.62}\\
& LoRA  & \cellcolor{LightCyan}  57.27$\pm$0.70&   \cellcolor{LightCyan}
 8.84&  \cellcolor{LightCyan}  \textbf{4.12\%}& \cellcolor{LightCyan}   -& \cellcolor{LightCyan} \textbf{19.41}
\\
 &  QLoRA   &  \cellcolor{LightCyan}  53.16$\pm$0.21& \cellcolor{LightCyan}   12.95&  \cellcolor{LightCyan}
  \textbf{10.99\%}&   \cellcolor{LightCyan} \textbf{34.29}& \cellcolor{LightCyan} \textbf{27.27}\\
   &  LoftQ   & \cellcolor{LightCyan} 53.11$\pm$0.08& \cellcolor{LightCyan} 13.00&  \cellcolor{LightCyan} \textbf{11.08\%}&  \cellcolor{LightCyan} \textbf{33.02}& \cellcolor{LightCyan} \textbf{27.41}\\
\midrule
 \multirow{4}{*}{ \rot{\makecell{\syt}}} & Full  & 59.30& 6.81& -& \underline{251.40}& \underline{28.49}\\
 & LoRA     & 55.60$\pm$0.28& 10.51& \underline{6.23\%}& -& \underline{140.86}\\
&  QLoRA   & 51.05$\pm$0.07& 15.06& \underline{13.90\%}& \underline{287.95}& 53.92\\
   &  LoftQ   & 51.20$\pm$0.13& 14.91& \underline{13.65\%}& \underline{286.16}& \underline{53.35}\\

\bottomrule
\end{tabular}
\end{table*}

We then adopt the BERT architecture \cite{kenton2019bert} to train our GFM on DNA sequences, with several modifications to better accommodate the unique characteristics of the DNA data. 
Standard positional encoding methods, such as Rotary Positional Encoding \cite{su2024roformer} and Sinusoidal Positional Encoding \cite{vaswani2017attention}, face limitations when applied to sequences longer than those encountered during training due to their inherent input length restrictions.
To address these limitations, we employ the Attention with Linear Biases (ALiBi) method \cite{press2021train}, as it is more robust to variations in sequence length and can handle longer sequences compared to traditional positional encoding methods. 
Instead of adding positional embeddings to the input, ALiBi introduces linear biases into the attention mechanism, allowing the model to learn positional information inherently from the input sequence.
Specifically, let $q_i\in\R^{d}$ represent the query vector for the $i$-th token in a sequence of length $L$, and $K\in\R^{L\times d}$ denote the key matrix for all tokens. 
The attention score for query $q_i$ is computed as:  $\mathrm{Softmax}(q_i K^\top + m \times [-(i-1), \dots, -1, 0, -1, \dots,$ 
$-(L-1-i)]),$
where $m$ is a fixed scalar. 
ALiBi uses a geometric sequence of different $m$ values for each attention head, allowing model to learn positional information from the input sequence itself.
By replacing learned positional embeddings with ALiBi, \sys can process arbitrarily long sequences during fine-tuning and inference, despite being pre-trained on relatively shorter sequences.

\section{Experimental Studies}
\label{sec:exp}In this section, we perform a series of experiments to demonstrate the effectiveness of our proposed method. 
In particular, we compare the performance of our method with DNABERT-2 detailed in \cite{zhou2024dnabert2}.
\paragraph{Models.}
Following \citet{zhou2024dnabert2}, we validate our strategy with DNABERT-2 model. we adopt the DNABERT-2 model of size 117 million parameters\footnote{\url{https://huggingface.co/zhihan1996}}. 
We pretrain this model with the masked language modeling (MLM) technique, following the original DNABERT-2 \cite{zhou2024dnabert2}. Each model trained from scratch undergoes a total of 200K training steps. For models utilizing small-step continual learning, we initially train the model from scratch using the DNABERT-2 architecture, followed by continual learning with the outlier-free structure for the remaining steps. 
In our experiment, we use the 40K continual learning steps model as the representative example of \syt to compare against DNABERT-2 and \sys.

\paragraph{Datasets.}
We utilize 27 datasets spanning 7 tasks and 4 species, as outlined in \cite{zhou2024dnabert2}. As shown in \cref{app:downstream}, most downstream tasks in GFMs are classification tasks. Consequently, the datasets are designed for genome sequence classification problems, with input lengths ranging from 70 to 1000.

\paragraph{Evaluation Metrics.} 
To evaluate the performance of outliers in our strategy, we report the \textit{maximum infinity norm} $\norm{\mathbf{x}}_{\infty}$ of the activation tensors $\mathbf{x}$ across all transformer layers as a metric for detecting outliers. 
Additionally, we present the \textit{average kurtosis} of $\mathbf{x}$, calculated only from the output tensors from the Feed-Forward Network (FFN) layer and Layer Normalization. 
These two components are known to contain outliers, as confirmed by our experiments and prior studies \cite{hu2024outlier,bondarenko2024quantizable,bondarenko2021understanding}. 
Both metrics have demonstrated a strong correlation with model quantizability (i.e., robustness to outliers) \cite{bondarenko2021understanding, shkolnik2020robust}.
For pre-quantization performance, we also report the \textbf{FP16} (16-bit floating-point) Matthews correlation coefficient (MCC) score to assess model's downstream classification ability.

\subsection{Post-Training Quantization (PTQ)}
\label{sec:quantization}
To assess the efficiency of our method for Post-Training Quantization (PTQ), we replace the standard attention layer in DNABERT-2 \cite{zhou2024dnabert2} with the $\Softmax_1$ activation function. We utilize the pre-trained checkpoints of these three models and fine-tune them at full rank following the procedure described in \cite{zhou2024dnabert2}. In this experiment, we evaluate the models on the test datasets using \textbf{FP16} precision and apply PTQ to measure performance degradation due to quantization. Each evaluation is conducted three times with different random seeds, and we report the average and standard deviation for each metric.

\paragraph{Baselines.}
To evaluate the performance of our method against the official DNABERT-2 model, we also full fine-tune the official pretrained DNABERT-2 model \footnotemark[1] as a baseline on a downstream classification task to demonstrate the absolute performance of those three models.
We further compare the performance of these three models on the same downstream classification task using \textbf{FP16} precision and four PTQ methods: Traditional W8A8 (Weights-8bit, Activations-8bit) as outlined in \citet{bondarenko2024quantizable}, SmoothQuant \cite{xiao2023smoothquant}, Outlier Suppression \cite{wei2022outlier}, and OmniQuant \cite{omniquant}. 
With the exception of the W8A8 method, we evaluate and compare the quantization performance of SmoothQuant, Outlier Suppression, and OmniQuant at W8A8 (Weights-8bit, Activations-8bit) and W6A6 precision levels.
Additionally, we present the quantization performance for W4A4 using OmniQuant and SmoothQuant.
We use the same hyperparameters specified in their respective studies. 
This approach guarantees that our evaluations are conducted under standardized conditions, enabling precise comparisons and assessments of each quantization method.

\paragraph{Results.}
Referring to \cref{tab:ptq},  it is clear that the \sys surpasses the DNABERT-2 in scenarios involving W4A4, W6A6 and W8A8 post-training quantization using state-of-the-art PTQ methods. 
Specifically, when both weights and activations are quantized to 8 bits (W8A8), \sys exhibits a minimal average performance decline of only 4.82\% on SmoothQuant. 
Additionally, the proposed strategy remains effective as the quantization bit size decreases. 
For example, when models are quantized to W4A4, \sys exhibits a minimal average performance decline of only 17.17\% on OmniQuant, compared to an 94.78\% drop in the DNABERT-2.
This demonstrates that \sys outperforms the vanilla structure by enhancing the robustness of model quantization and  improving performance across outlier-free methods like SmoothQuant and Outlier Suppression. Additionally, \syt exhibits strong quantization performance at both 8-bit and 6-bit levels, with a minimal performance drop, even smaller than that of \sys. The only exception is with W4A4 quantization, where \syt experiences a notable performance drop due to larger outliers in \syt compared to the \sys. The outlier metrics indicate the improvement in \textit{average kurtosis} is minimal, though a substantial reduction in the \textit{maximum infinity norm} compared to the DNABERT-2. Outlier metrics show that \sys reduces the average kurtosis by $\sim$92.14\% and the maximum infinity norm by $\sim$82.77\% across 27 datasets. Additionally, \syt achieves a reduction of approximately 7.20\% in average kurtosis and 53.78\% in the maximum infinity norm across the same datasets.

\subsection{Low-Rank Adaptation}
\label{sec:LoRA}
Fine-tuning models for downstream tasks is often computationally expensive. 
To enhance the efficiency of fine-tuning with fewer parameters, various parameter-efficient fine-tuning (PEFT) methods, such as Low-Rank Adaptation (LoRA), are commonly used. 
To assess the efficiency of our method for fine-tuning tasks with LoRA, we evaluate our framework on LoRA methods. 
We use the pretrained checkpoints of the three models and fine-tune them using multiple LoRA approaches following a similar procedure as described in \cref{sec:quantization}.
In this experiment, we evaluate the models on the test datasets using full-rank fine-tuning to compare the performance drop across various LoRA approaches. 
Each evaluation is conducted three times with different random seeds, and we report the average and standard deviation for each metric.

\paragraph{LoRA Methods.}
We compare our method with the vanilla version across three different LoRA methods: LoRA \cite{hu2021lora}, QLoRA \cite{dettmers2024qLoRA}, and LoftQ \cite{li2023loftq}. For the Full Fine-Tuning method, we fine-tune the model at full rank using mixed-precision FP16 training. For the LoRA method, following \cite{hu2021lora}, we fine-tune the model with low-rank adaptations using a rank of 128 and an alpha value of 256. 
For the QLoRA and LoftQ methods, we fine-tune the model with quantized low-rank adaptations, maintaining the same rank and alpha values as in LoRA. Both QLoRA and LoftQ utilize 4-bit quantization methods as described in \cite{dettmers2024qLoRA}.

\paragraph{Results.}
In \cref{tab:peft_result}, our results highlight the effectiveness of \sys in low-rank adaptation. In most configurations, \sys significantly enhances fine-tuning performance. Specifically, \sys achieves an average performance improvement of \textbf{37.98\%} in low-rank adaptation compared to DNABERT-2 model. Similarly, \syt shows an average performance improvement of \textbf{20.01\%} over the same baseline. These results demonstrate that both \sys and \syt can greatly enhance low-rank adaptation for the model.
When considering outlier metrics, we observe that LoRA exhibits significantly larger outlier values compared to full fine-tuning. Also, in most configurations, the outlier values in LoRA are much higher than those in QLoRA and LoftQ. One potential reason is that QLoRA and LoftQ utilize quantization to stabilize parameter updates and compress model representations, which helps minimize the amplification of extreme outlier values.
Furthermore, \sys demonstrates a substantial reduction in outlier values compared to DNABERT-2, and \syt also experiences a significant decrease in the \textit{maximum infinity norm}, though the reduction in \textit{average kurtosis} remains limited.

\begin{table}[htb]
\centering
\caption{\small\textbf{Quantization Robustness Performance Comparison with Different Continual Learning Steps.} We evaluate the quantization robustness of different models, using SmoothQuant at 16-bit, 8-bit, and 6-bit quantization levels. The evaluation metric is the Matthews Correlation Coefficient (MCC) with the average performance drop following quantization also noted. The best results are highlighted in bold, and the second-best results are underlined.}
\label{tab:quantization}
\resizebox{0.49\textwidth}{!}{
\begin{tabular}{@{}lccb@{}}
\toprule
Method       & \#Bits & MCC ($\uparrow$) & \cellcolor{white} \makecell{Avg Performance \\ Drop ($\downarrow$)} \\ 
\midrule
DNABERT-2      & 16W/16A   & 59.11  & -  \\
\sys   & 16W/16A   & 59.73  & -          \\
Out20k       & 16W/16A   & 59.21  & -          \\
\syt        & 16W/16A   & 59.30  & -          \\
Out100k      & 16W/16A   & 60.56  & -          \\
\midrule
DNABERT-2       & 8W/8A   & 36.51 & 38.23\%\\
\sys    & 8W/8A   & 56.78 & 4.93\%\\
Out20k       & 8W/8A   & 54.75 & 7.53\%\\
\syt       & 8W/8A   & 57.52& \underline{3.00\%}\\
Out100k      & 8W/8A   & 58.77& \textbf{2.96\%}\\
\midrule
DNABERT-2       & 6W/6A   & 20.74 & 64.91\%\\
\sys   & 6W/6A   & 56.48 & \textbf{5.44\%}\\
Out20k       & 6W/6A   & 27.61& 53.36\%\\
\syt        & 6W/6A   & 28.32& 52.24\%\\
Out100k      & 6W/6A   & 30.44 & \underline{49.74\%}\\
\bottomrule
\end{tabular}
}
\end{table}

\begin{table}[htb]
\centering
\caption{\small\textbf{Low-Rank Adaptation Performance Comparison with Different Continual Learning Steps.} We evaluate the low-rank adaptation performance (LoRA, QLoRA, LoftQ) for various models with different continual learning steps. The evaluation metric is the Matthews Correlation Coefficient (MCC), and the average performance drop after adaptation is also shown. The best results are highlighted in bold, and the second-best results are underlined.}
\label{tab:low_rank}
\resizebox{0.49\textwidth}{!}{
\begin{tabular}{@{}lccb@{}}
\toprule
Method       & \makecell{Fine-Tuning\\ Method} & MCC ($\uparrow$) & \cellcolor{white} \makecell{Avg Performance \\ Drop ($\downarrow$)} \\ 
\midrule
DNABERT-2      & Full    & 59.11  & -\\
\sys    & Full    & 59.73  & -\\
Out20k       & Full    & 59.21 & -\\
\syt        & Full    & 59.30  & -\\
Out100k      & Full    & 60.56  & -\\
\midrule
DNABERT-2         & LoRA    & 50.91  & 13.87\%\\
\sys    & LoRA    & 56.78  & \textbf{4.94\%}\\
Out20k       & LoRA    & 54.75  & 7.53\%\\
\syt        & LoRA    & 55.60    & \underline{6.24\%}\\
Out100k      & LoRA    & 56.61    & 6.52\%\\
\midrule
DNABERT-2         & QLoRA   & 50.65  & 14.31\%\\
\sys   & QLoRA   & 53.16  & \textbf{11.00\%}\\
Out20k       & QLoRA   & 50.61  & 14.52\%\\
\syt        & QLoRA   & 51.05    & \underline{13.91\%}\\
Out100k      & QLoRA   & 51.24    & 15.39\%\\
\midrule
DNABERT-2         & LoftQ   & 50.76  & 14.13\%\\
\sys   & LoftQ   & 53.11  & \textbf{11.08\%}\\
Out20k       & LoftQ   & 50.94  & 13.97\%\\
\syt        & LoftQ   & 51.20    & \underline{13.66\%}\\
Out100k      & LoftQ   & 50.77    & 16.17\%\\
\bottomrule
\end{tabular}
}
\end{table}

\begin{figure*}[!ht]
    \centering
    \includegraphics[width=\textwidth]{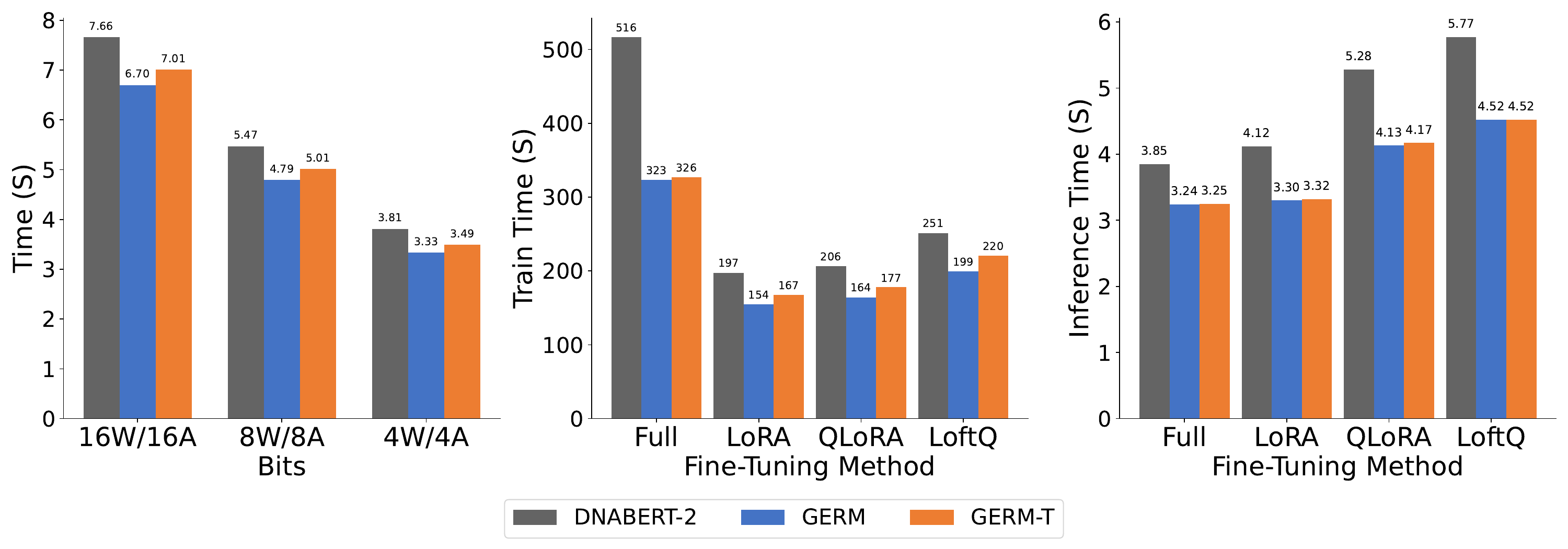}
     \caption{\small\textbf{Comparison of Performance in Resource-Constrained Computing Environments.} Comparison of three models on the quantization and fine-tuning task. All models were trained on the same computing infrastructure (Nvidia GeForce RTX 2080 TI 11GB) for fair comparison. The training time represents the average time per epoch, with OmniQuant used as quantization example in this figure. }
    \label{fig:single}
\end{figure*}

\subsection{Additional Experiments} \label{ablation}
In this section, we conduct additional experiments to evaluate the effectiveness of our method in various scenarios. 
We conduct ablation studies to investigate the impact of different continual learning steps, the influence of adaptor rank. Also, we conduct a case study on model deployment and fine-tuning using a single 2080 Ti to evaluate model latency and fine-tuning speed per epoch.

\paragraph{Impact of Different Continual Learning Steps.} 
To evaluate our proposed method's performance across various continual learning steps, we conduct experiments on three different step sizes: 20K, 40K, and 100K. 
We train the models for a total of 200K steps, employing a combination of vanilla and outlier-free pretraining as outlined in \cite{zhou2024dnabert2} and \cite{hu2024outlier}.
To assess performance degradation after quantization using SmoothQuant, we perform full-rank fine-tuning of the models using the training datasets.
Additionally, we assess the performance decline across models using three iterations of LoRA-based fine-tuning technology.
We use the "Out\textbf{xxk}" prefix to indicate the number of continual learning steps applied to the outlier-free structure. For instance, Out100k represents a model trained with 100K continual learning steps using the outlier-free structure. This notation helps illustrate how different levels of continual learning impact the model's performance.In most configurations, \sys outperforms DNABERT-2 in both quantization robustness and low-rank adaptation. The only exception is in the 8-bit quantization scenario, where the Out100k and \syt models exhibit better performance and a smaller average performance drop than \sys. Overall, \syt achieves better near-optimal performance across all continual learning model settings.
The results, as shown in \cref{tab:quantization,tab:low_rank}, demonstrate that our method outperforms the vanilla approach across all test sets. 
Also, we observe that \syt exhibits the most optimal performance drop during quantization and low-rank adaptation compared to other continual learning steps. 
The larger performance drop observed in the model using 100K steps of continual learning, compared to \syt. 
It is attributed to the superior performance achieved through full fine-tuning.
These results indicate that employing 40K continual learning steps in \syt is optimal for enhancing model performance in our approach.

\paragraph{Performance of \sys on Alternative Transformer-based Models. }
To assess the performance of our proposed method on alternative transformer-based GFMs, we conducted experiments on Nucleotide Transformer (NT) \cite{dalla2024nucleotide}, a significant genomic foundation model in this domain. The results, as shown in \cref{tab:nt1,tab:nt2}, demonstrate that our method outperforms the vanilla approach across all test sets. \sys achieves an average performance improvement of 52.01\% in low-rank adaptation and 67.69\% in PTQ experiments. 
We also conduct experiments on the \textbf{NT-2.5B} model to demonstrate the scalability of \sys on larger-scale GFMs, as shown in \cref{ap:large}.

\paragraph{Performance of \sys with Alternative Outlier Removal Methods. }
To evaluate the performance of our proposed method against existing outlier removal techniques, we compare \sys with approaches such as clipped softmax and gated attention \cite{bondarenko2024quantizable}. As shown in \cref{ap:outlier}, \sys achieves a performance improvement of 2.59\% in 4-bit and 1.99\% in 8-bit quantization.

\paragraph{Performance of \sys with Different Adaptor Rank. }
In our experiments, we evaluate the performance of our proposed method under the influence of adapter rank. All results are presented in \cref{app:rank}.

\subsection{Case Study: Performance in Resource-Constrained Computing Environments.} 
\paragraph{Case Study 1: Performance in Single 2080-Ti GPU Computing Environments.}
To demonstrate \sys's capability in resource-constrained environments, we conduct performance tests on a single NVIDIA GeForce RTX 2080 Ti 11GB GPU, where \sys requires only 5 minutes to fine-tune, demonstrating its practicality and efficiency.
We compare \sys's performance with the DNABERT-2 model on the same downstream classification task using OmniQuant. 
Consistent hyperparameters from their respective studies are applied across all models. 
Additionally, we provide the per-epoch training time and inference time for the LoRA, QLoRA, and LoftQ fine-tuning methods. The results, as shown in \cref{fig:single}, show that both \sys and \syt achieve shorter full-rank fine-tuning times per epoch compared to DNABERT-2. %
Additionally, the model quantization latency for both \sys and \syt is lower than that of DNABERT-2, while delivering superior quantization performance.
These observations indicate that \sys offers faster adaptation and requires fewer computational resources for users during both the inference and fine-tuning processes.

\paragraph{Case Study 2: Performance in CPU-Only Computing Environments.}
To demonstrate \sys's capability in CPU-only computing environments, we perform performance tests on CPU-only devices. We compare \sys's per-epoch training and inference times for the LoRA and QLoRA fine-tuning methods. The results, presented in \cref{ap:case2}, indicate that both \sys and \syt achieve shorter fine-tuning times per epoch compared to DNABERT-2, with the only exception being QLoRA when deployed, where the time is slightly longer.

\section{Discussion and Conclusion }
\label{sec:conclusion}We introduce \sys, a versatile and accessible GFM designed to function on limited computational resources. 
By replacing the vanilla attention layer with an outlier-free layer, we eliminate outliers during both model pretraining and fine-tuning. 
This approach ensures robust quantization and enables effective low-rank adaptation. 
In addition to presenting a novel architecture that enhances DNABERT by mitigating outliers, \sys incorporates a compromise strategy with continual learning, eliminating the need for extensive retraining. Empirically, \sys achieves an average reduction of \textit{average kurtosis} by $\sim$92.14\% and the \textit{maximum infinity norm} by $\sim$82.77\% across 27 datasets. Additionally, \sys enhances model quantization robustness by decreasing the average quantization performance drop by 64.34\% and the average low-rank adaptation performance drop by 37.98\%. For the compromise model \syt, quantization robustness is improved by reducing the average quantization performance drop by 31.42\% and the average low-rank adaptation performance drop by 20.01\%.

\paragraph{Limitations and Future Work.} 
While successful in many settings, our proposed \syt still faces challenges in eliminating outliers in GFM, leading to significant performance drops during 4-bit quantization and low-rank adaptation methods such as QLoRA and LoftQ.
In future work, we aim to develop strategies that efficiently remove outliers without necessitating retraining from scratch. \revise{We provide a more detailed discussion of the limitations of GERM-T in \cref{ap:limitation}.}

\section*{Impact Statement}
We believe this methodology presents an opportunity to strengthen the core of foundation models, including large language models, by improving robustness through quantization and enabling faster low-rank adaptation. However, this approach may also amplify biases in the training data, potentially leading to unfair or discriminatory outcomes for underrepresented groups.

\clearpage
\section*{Acknowledgments}
The authors would like to thank Yegna Jambunath for enlightening discussions on related topics, and Jiayi Wang for facilitating experimental deployments.
The authors would like to thank the anonymous reviewers and program chairs for constructive comments.

Han Liu is partially supported by NIH R01LM1372201, NSF
AST-2421845, Simons Foundation
MPS-AI-00010513, AbbVie and Dolby. Haozheng Luo is partially supported by the OpenAI Researcher Access Program. 
This research was supported in part through the computational resources and staff contributions provided for the Quest high performance computing facility at Northwestern University which is jointly supported by the Office of the Provost, the Office for Research, and Northwestern University Information Technology.
The content is solely the responsibility of the authors and does not necessarily represent the official
views of the funding agencies.

\def\arxivfont{\rm}
\bibliographystyle{icml2025}
\bibliography{refs}

\begin{thebibliography}{64}
\providecommand{\natexlab}[1]{#1}
\providecommand{\url}[1]{\texttt{#1}}
\expandafter\ifx\csname urlstyle\endcsname\relax
  \providecommand{\doi}[1]{doi: #1}\else
  \providecommand{\doi}{doi: \begingroup \urlstyle{rm}\Url}\fi

\bibitem[Ahmadian et~al.(2023)Ahmadian, Dash, Chen, Venkitesh, Gou, Blunsom, {\"U}st{\"u}n, and Hooker]{ahmadian2023intriguing}
Ahmadian, A., Dash, S., Chen, H., Venkitesh, B., Gou, Z.~S., Blunsom, P., {\"U}st{\"u}n, A., and Hooker, S.
\newblock Intriguing properties of quantization at scale.
\newblock \emph{Advances in Neural Information Processing Systems}, 36:\penalty0 34278--34294, 2023.

\bibitem[Bondarenko et~al.(2021{\natexlab{a}})Bondarenko, Nagel, and Blankevoort]{bondarenko-etal-2021-understanding}
Bondarenko, Y., Nagel, M., and Blankevoort, T.
\newblock Understanding and overcoming the challenges of efficient transformer quantization.
\newblock In Moens, M.-F., Huang, X., Specia, L., and Yih, S. W.-t. (eds.), \emph{Proceedings of the 2021 Conference on Empirical Methods in Natural Language Processing}, pp.\  7947--7969, Online and Punta Cana, Dominican Republic, November 2021{\natexlab{a}}. Association for Computational Linguistics.

\bibitem[Bondarenko et~al.(2021{\natexlab{b}})Bondarenko, Nagel, and Blankevoort]{bondarenko2021understanding}
Bondarenko, Y., Nagel, M., and Blankevoort, T.
\newblock Understanding and overcoming the challenges of efficient transformer quantization, 2021{\natexlab{b}}.

\bibitem[Bondarenko et~al.(2024)Bondarenko, Nagel, and Blankevoort]{bondarenko2024quantizable}
Bondarenko, Y., Nagel, M., and Blankevoort, T.
\newblock Quantizable transformers: Removing outliers by helping attention heads do nothing.
\newblock \emph{Advances in Neural Information Processing Systems}, 36, 2024.

\bibitem[Brown et~al.(2020)Brown, Mann, Ryder, Subbiah, Kaplan, Dhariwal, Neelakantan, Shyam, Sastry, Askell, et~al.]{brown2020language}
Brown, T., Mann, B., Ryder, N., Subbiah, M., Kaplan, J.~D., Dhariwal, P., Neelakantan, A., Shyam, P., Sastry, G., Askell, A., et~al.
\newblock Language models are few-shot learners.
\newblock \emph{Advances in neural information processing systems}, 33:\penalty0 1877--1901, 2020.

\bibitem[Chee et~al.(2023)Chee, Cai, Kuleshov, and Sa]{quip}
Chee, J., Cai, Y., Kuleshov, V., and Sa, C.~D.
\newblock Qu{IP}: 2-bit quantization of large language models with guarantees.
\newblock In \emph{Thirty-seventh Conference on Neural Information Processing Systems}, 2023.

\bibitem[Chor et~al.(2009)Chor, Horn, Goldman, Levy, and Massingham]{chor2009genomic}
Chor, B., Horn, D., Goldman, N., Levy, Y., and Massingham, T.
\newblock Genomic dna k-mer spectra: models and modalities.
\newblock \emph{Genome biology}, 10:\penalty0 1--10, 2009.

\bibitem[Clark et~al.(2019)Clark, Khandelwal, Levy, and Manning]{clark2019does}
Clark, K., Khandelwal, U., Levy, O., and Manning, C.~D.
\newblock What does bert look at? an analysis of bert’s attention.
\newblock In \emph{Proceedings of the 2019 ACL Workshop BlackboxNLP: Analyzing and Interpreting Neural Networks for NLP}, pp.\  276. Association for Computational Linguistics, 2019.

\bibitem[Dalla-Torre et~al.(2024)Dalla-Torre, Gonzalez, Mendoza-Revilla, Lopez~Carranza, Grzywaczewski, Oteri, Dallago, Trop, de~Almeida, Sirelkhatim, et~al.]{dalla2024nucleotide}
Dalla-Torre, H., Gonzalez, L., Mendoza-Revilla, J., Lopez~Carranza, N., Grzywaczewski, A.~H., Oteri, F., Dallago, C., Trop, E., de~Almeida, B.~P., Sirelkhatim, H., et~al.
\newblock Nucleotide transformer: building and evaluating robust foundation models for human genomics.
\newblock \emph{Nature Methods}, pp.\  1--11, 2024.

\bibitem[Dettmers et~al.(2022)Dettmers, Lewis, Belkada, and Zettlemoyer]{llmint8}
Dettmers, T., Lewis, M., Belkada, Y., and Zettlemoyer, L.
\newblock {GPT}3.int8(): 8-bit matrix multiplication for transformers at scale.
\newblock In Oh, A.~H., Agarwal, A., Belgrave, D., and Cho, K. (eds.), \emph{Advances in Neural Information Processing Systems}, 2022.

\bibitem[Dettmers et~al.(2024{\natexlab{a}})Dettmers, Pagnoni, Holtzman, and Zettlemoyer]{dettmers2024qLoRA}
Dettmers, T., Pagnoni, A., Holtzman, A., and Zettlemoyer, L.
\newblock Qlora: Efficient finetuning of quantized llms.
\newblock \emph{Advances in Neural Information Processing Systems}, 36, 2024{\natexlab{a}}.

\bibitem[Dettmers et~al.(2024{\natexlab{b}})Dettmers, Svirschevski, Egiazarian, Kuznedelev, Frantar, Ashkboos, Borzunov, Hoefler, and Alistarh]{spqr}
Dettmers, T., Svirschevski, R., Egiazarian, V., Kuznedelev, D., Frantar, E., Ashkboos, S., Borzunov, A., Hoefler, T., and Alistarh, D.
\newblock Spqr: A sparse-quantized representation for near-lossless llm weight compression.
\newblock In \emph{ICLR}, 2024{\natexlab{b}}.

\bibitem[Devlin et~al.(2019)Devlin, Chang, Lee, and Toutanova]{devlin2018bert}
Devlin, J., Chang, M.-W., Lee, K., and Toutanova, K.
\newblock {BERT}: Pre-training of deep bidirectional transformers for language understanding.
\newblock In Burstein, J., Doran, C., and Solorio, T. (eds.), \emph{Proceedings of the 2019 Conference of the North {A}merican Chapter of the Association for Computational Linguistics: Human Language Technologies, Volume 1 (Long and Short Papers)}, pp.\  4171--4186, Minneapolis, Minnesota, June 2019. Association for Computational Linguistics.

\bibitem[Eckart \& Young(1936)Eckart and Young]{eckart1936approximation}
Eckart, C. and Young, G.
\newblock The approximation of one matrix by another of lower rank.
\newblock \emph{Psychometrika}, 1\penalty0 (3):\penalty0 211--218, 1936.

\bibitem[Frantar et~al.(2023)Frantar, Ashkboos, Hoefler, and Alistarh]{gptq}
Frantar, E., Ashkboos, S., Hoefler, T., and Alistarh, D.
\newblock {OPTQ}: Accurate quantization for generative pre-trained transformers.
\newblock In \emph{The Eleventh International Conference on Learning Representations}, 2023.

\bibitem[Heo et~al.(2024)Heo, Kim, Kwon, Kim, Kwon, and Lee]{heo2023rethinking}
Heo, J.~H., Kim, J., Kwon, B., Kim, B., Kwon, S.~J., and Lee, D.
\newblock Rethinking channel dimensions to isolate outliers for low-bit weight quantization of large language models.
\newblock In \emph{The Twelfth International Conference on Learning Representations}, 2024.

\bibitem[Hu et~al.(2022)Hu, yelong shen, Wallis, Allen-Zhu, Li, Wang, Wang, and Chen]{hu2021lora}
Hu, E.~J., yelong shen, Wallis, P., Allen-Zhu, Z., Li, Y., Wang, S., Wang, L., and Chen, W.
\newblock Lo{RA}: Low-rank adaptation of large language models.
\newblock In \emph{International Conference on Learning Representations}, 2022.

\bibitem[Hu et~al.(2023)Hu, Yang, Wu, Xu, Chen, and Liu]{hu2023sparse}
Hu, J. Y.-C., Yang, D., Wu, D., Xu, C., Chen, B.-Y., and Liu, H.
\newblock On sparse modern hopfield model.
\newblock In \emph{Thirty-seventh Conference on Neural Information Processing Systems}, 2023.

\bibitem[Hu et~al.(2024{\natexlab{a}})Hu, Chang, Luo, Chen, Li, Wang, and Liu]{hu2024outlier}
Hu, J. Y.-C., Chang, P.-H., Luo, H., Chen, H.-Y., Li, W., Wang, W.-P., and Liu, H.
\newblock Outlier-efficient hopfield layers for large transformer-based models.
\newblock In \emph{Forty-first International Conference on Machine Learning}, 2024{\natexlab{a}}.

\bibitem[Hu et~al.(2024{\natexlab{b}})Hu, Lin, Song, and Liu]{hu2024computational_hop}
Hu, J. Y.-C., Lin, T., Song, Z., and Liu, H.
\newblock On computational limits of modern hopfield models: A fine-grained complexity analysis.
\newblock 2024{\natexlab{b}}.

\bibitem[Hu et~al.(2025)Hu, Su, jui kuo, Song, and Liu]{hu2024computational}
Hu, J. Y.-C., Su, M., jui kuo, E., Song, Z., and Liu, H.
\newblock Computational limits of low-rank adaptation (lo{RA}) fine-tuning for transformer models.
\newblock In \emph{The Thirteenth International Conference on Learning Representations}, 2025.

\bibitem[Huang et~al.(2024)Huang, Liu, Liu, and Cheng]{huang2024rolorafinetuningrotatedoutlierfree}
Huang, X., Liu, Z., Liu, S.-Y., and Cheng, K.-T.
\newblock Rolora: Fine-tuning rotated outlier-free llms for effective weight-activation quantization.
\newblock In \emph{Findings of the Association for Computational Linguistics: EMNLP 2024}, pp.\  7563--7576, 2024.

\bibitem[Ji et~al.(2021)Ji, Zhou, Liu, and Davuluri]{dnabert2021ji}
Ji, Y., Zhou, Z., Liu, H., and Davuluri, R.~V.
\newblock {DNABERT: pre-trained Bidirectional Encoder Representations from Transformers model for DNA-language in genome}.
\newblock 2021.

\bibitem[Kenton \& Toutanova(2019)Kenton and Toutanova]{kenton2019bert}
Kenton, J. D. M.-W.~C. and Toutanova, L.~K.
\newblock Bert: Pre-training of deep bidirectional transformers for language understanding.
\newblock In \emph{Proceedings of naacL-HLT}, volume~1, pp.\ ~2. Minneapolis, Minnesota, 2019.

\bibitem[Kobayashi et~al.(2020)Kobayashi, Kuribayashi, Yokoi, and Inui]{kobayashi2020attention}
Kobayashi, G., Kuribayashi, T., Yokoi, S., and Inui, K.
\newblock Attention is not only a weight: Analyzing transformers with vector norms.
\newblock pp.\  7057--7075, 2020.

\bibitem[Kovaleva et~al.(2019{\natexlab{a}})Kovaleva, Romanov, Rogers, and Rumshisky]{kovaleva-etal-2019-revealing}
Kovaleva, O., Romanov, A., Rogers, A., and Rumshisky, A.
\newblock Revealing the dark secrets of {BERT}.
\newblock In Inui, K., Jiang, J., Ng, V., and Wan, X. (eds.), \emph{Proceedings of the 2019 Conference on Empirical Methods in Natural Language Processing and the 9th International Joint Conference on Natural Language Processing (EMNLP-IJCNLP)}, pp.\  4365--4374, Hong Kong, China, November 2019{\natexlab{a}}. Association for Computational Linguistics.

\bibitem[Kovaleva et~al.(2019{\natexlab{b}})Kovaleva, Romanov, Rogers, and Rumshisky]{kovaleva2019revealing}
Kovaleva, O., Romanov, A., Rogers, A., and Rumshisky, A.
\newblock Revealing the dark secrets of bert.
\newblock 2019{\natexlab{b}}.

\bibitem[Kovaleva et~al.(2021)Kovaleva, Kulshreshtha, Rogers, and Rumshisky]{kovaleva2021bert}
Kovaleva, O., Kulshreshtha, S., Rogers, A., and Rumshisky, A.
\newblock {BERT} busters: Outlier dimensions that disrupt transformers.
\newblock In Zong, C., Xia, F., Li, W., and Navigli, R. (eds.), \emph{Findings of the Association for Computational Linguistics: ACL-IJCNLP 2021}, pp.\  3392--3405, Online, August 2021. Association for Computational Linguistics.

\bibitem[Kudo \& Richardson(2018)Kudo and Richardson]{kudo2018sentencepiece}
Kudo, T. and Richardson, J.
\newblock {S}entence{P}iece: A simple and language independent subword tokenizer and detokenizer for neural text processing.
\newblock In Blanco, E. and Lu, W. (eds.), \emph{Proceedings of the 2018 Conference on Empirical Methods in Natural Language Processing: System Demonstrations}, pp.\  66--71, Brussels, Belgium, November 2018. Association for Computational Linguistics.

\bibitem[Lee et~al.(2024)Lee, Jin, Kim, Kim, and Park]{owq}
Lee, C., Jin, J., Kim, T., Kim, H., and Park, E.
\newblock Owq: Outlier-aware weight quantization for efficient fine-tuning and inference of large language models.
\newblock In \emph{Proceedings of the AAAI Conference on Artificial Intelligence}, volume~38, pp.\  13355--13364, 2024.

\bibitem[Li et~al.(2023)Li, Yu, Liang, He, Karampatziakis, Chen, and Zhao]{li2023loftq}
Li, Y., Yu, Y., Liang, C., He, P., Karampatziakis, N., Chen, W., and Zhao, T.
\newblock Loftq: Lora-fine-tuning-aware quantization for large language models, 2023.

\bibitem[Lin et~al.(2024)Lin, Tang, Tang, Yang, Chen, Wang, Xiao, Dang, Gan, and Han]{awq}
Lin, J., Tang, J., Tang, H., Yang, S., Chen, W.-M., Wang, W.-C., Xiao, G., Dang, X., Gan, C., and Han, S.
\newblock Awq: Activation-aware weight quantization for llm compression and acceleration.
\newblock In \emph{MLSys}, 2024.

\bibitem[Liu et~al.(2025)Liu, Li, Xin, Li, Yu, Zang, Tan, Huang, yajingbai, Xia, and Li]{liu2024genbench}
Liu, Z., Li, J., Xin, L., Li, S., Yu, C., Zang, Z., Tan, C., Huang, Y., yajingbai, Xia, J., and Li, S.~Z.
\newblock Genebench: Systematic evaluation of genomic foundation models and beyond, 2025.

\bibitem[Loshchilov \& Hutter(2019)Loshchilov and Hutter]{loshchilov2017decoupled}
Loshchilov, I. and Hutter, F.
\newblock Decoupled weight decay regularization.
\newblock In \emph{International Conference on Learning Representations}, 2019.

\bibitem[Luo et~al.(2021)Luo, Kulmizev, and Mao]{luo-etal-2021-positional}
Luo, Z., Kulmizev, A., and Mao, X.
\newblock Positional artefacts propagate through masked language model embeddings.
\newblock In Zong, C., Xia, F., Li, W., and Navigli, R. (eds.), \emph{Proceedings of the 59th Annual Meeting of the Association for Computational Linguistics and the 11th International Joint Conference on Natural Language Processing (Volume 1: Long Papers)}, pp.\  5312--5327, Online, August 2021. Association for Computational Linguistics.

\bibitem[Ma et~al.(2024)Ma, Li, Zheng, Ling, Xiao, Wang, Wen, Chao, and Ji]{ma2024outlieraware}
Ma, Y., Li, H., Zheng, X., Ling, F., Xiao, X., Wang, R., Wen, S., Chao, F., and Ji, R.
\newblock Outlier-aware slicing for post-training quantization in vision transformer.
\newblock In \emph{Forty-first International Conference on Machine Learning}, 2024.

\bibitem[Mirsky(1960)]{mirsky1960symmetric}
Mirsky, L.
\newblock Symmetric gauge functions and unitarily invariant norms.
\newblock \emph{The quarterly journal of mathematics}, 11\penalty0 (1):\penalty0 50--59, 1960.

\bibitem[Nguyen et~al.(2023{\natexlab{a}})Nguyen, Poli, Faizi, Thomas, Birch-Sykes, Wornow, Patel, Rabideau, Massaroli, Bengio, Ermon, Baccus, and Ré]{hyenadna}
Nguyen, E., Poli, M., Faizi, M., Thomas, A., Birch-Sykes, C., Wornow, M., Patel, A., Rabideau, C., Massaroli, S., Bengio, Y., Ermon, S., Baccus, S.~A., and Ré, C.
\newblock Hyenadna: Long-range genomic sequence modeling at single nucleotide resolution.
\newblock 2023{\natexlab{a}}.

\bibitem[Nguyen et~al.(2023{\natexlab{b}})Nguyen, Poli, Faizi, Thomas, Birch-Sykes, Wornow, Patel, Rabideau, Massaroli, Bengio, Ermon, Baccus, and Ré]{nguyen2023hyenadna}
Nguyen, E., Poli, M., Faizi, M., Thomas, A., Birch-Sykes, C., Wornow, M., Patel, A., Rabideau, C., Massaroli, S., Bengio, Y., Ermon, S., Baccus, S.~A., and Ré, C.
\newblock Hyenadna: Long-range genomic sequence modeling at single nucleotide resolution.
\newblock 2023{\natexlab{b}}.

\bibitem[Nguyen et~al.(2024)Nguyen, Poli, Durrant, Kang, Katrekar, Li, Bartie, Thomas, King, Brixi, et~al.]{nguyen2024sequence}
Nguyen, E., Poli, M., Durrant, M.~G., Kang, B., Katrekar, D., Li, D.~B., Bartie, L.~J., Thomas, A.~W., King, S.~H., Brixi, G., et~al.
\newblock Sequence modeling and design from molecular to genome scale with evo.
\newblock \emph{Science}, 386\penalty0 (6723):\penalty0 eado9336, 2024.

\bibitem[Press et~al.(2022)Press, Smith, and Lewis]{press2021train}
Press, O., Smith, N., and Lewis, M.
\newblock Train short, test long: Attention with linear biases enables input length extrapolation.
\newblock In \emph{International Conference on Learning Representations}, 2022.

\bibitem[Puccetti et~al.(2022)Puccetti, Rogers, Drozd, and Dell{'}Orletta]{puccetti-etal-2022-outlier}
Puccetti, G., Rogers, A., Drozd, A., and Dell{'}Orletta, F.
\newblock Outlier dimensions that disrupt transformers are driven by frequency.
\newblock In Goldberg, Y., Kozareva, Z., and Zhang, Y. (eds.), \emph{Findings of the Association for Computational Linguistics: EMNLP 2022}, pp.\  1286--1304, Abu Dhabi, United Arab Emirates, December 2022. Association for Computational Linguistics.

\bibitem[Radford et~al.(2019)Radford, Wu, Child, Luan, Amodei, Sutskever, et~al.]{radford2019language}
Radford, A., Wu, J., Child, R., Luan, D., Amodei, D., Sutskever, I., et~al.
\newblock Language models are unsupervised multitask learners.
\newblock 2019.

\bibitem[Ramsauer et~al.(2021)Ramsauer, Sch{\"a}fl, Lehner, Seidl, Widrich, Gruber, Holzleitner, Adler, Kreil, Kopp, Klambauer, Brandstetter, and Hochreiter]{ramsauer2020hopfield}
Ramsauer, H., Sch{\"a}fl, B., Lehner, J., Seidl, P., Widrich, M., Gruber, L., Holzleitner, M., Adler, T., Kreil, D., Kopp, M.~K., Klambauer, G., Brandstetter, J., and Hochreiter, S.
\newblock Hopfield networks is all you need.
\newblock In \emph{International Conference on Learning Representations}, 2021.

\bibitem[Schiff et~al.(2024)Schiff, Kao, Gokaslan, Dao, Gu, and Kuleshov]{schiff2024caduceus}
Schiff, Y., Kao, C.-H., Gokaslan, A., Dao, T., Gu, A., and Kuleshov, V.
\newblock Caduceus: Bi-directional equivariant long-range dna sequence modeling.
\newblock In \emph{ICML}, 2024.

\bibitem[Sennrich et~al.(2016)Sennrich, Haddow, and Birch]{sennrich2015neural}
Sennrich, R., Haddow, B., and Birch, A.
\newblock Neural machine translation of rare words with subword units.
\newblock pp.\  1715, 2016.

\bibitem[Shao et~al.(2024)Shao, Chen, Zhang, Xu, Zhao, Li, Zhang, Gao, Qiao, and Luo]{omniquant}
Shao, W., Chen, M., Zhang, Z., Xu, P., Zhao, L., Li, Z., Zhang, K., Gao, P., Qiao, Y., and Luo, P.
\newblock Omniquant: Omnidirectionally calibrated quantization for large language models.
\newblock In \emph{ICLR}, 2024.

\bibitem[Shkolnik et~al.(2020)Shkolnik, Chmiel, Banner, Shomron, Nahshan, Bronstein, and Weiser]{shkolnik2020robust}
Shkolnik, M., Chmiel, B., Banner, R., Shomron, G., Nahshan, Y., Bronstein, A., and Weiser, U.
\newblock Robust quantization: One model to rule them all, 2020.

\bibitem[Su et~al.(2024)Su, Ahmed, Lu, Pan, Bo, and Liu]{su2024roformer}
Su, J., Ahmed, M., Lu, Y., Pan, S., Bo, W., and Liu, Y.
\newblock Roformer: Enhanced transformer with rotary position embedding.
\newblock \emph{Neurocomputing}, 568:\penalty0 127063, 2024.

\bibitem[Sun et~al.(2024)Sun, Chen, Kolter, and Liu]{sun2024massive}
Sun, M., Chen, X., Kolter, J.~Z., and Liu, Z.
\newblock Massive activations in large language models.
\newblock 2024.

\bibitem[Vaswani et~al.(2017)Vaswani, Shazeer, Parmar, Uszkoreit, Jones, Gomez, Kaiser, and Polosukhin]{vaswani2017attention}
Vaswani, A., Shazeer, N., Parmar, N., Uszkoreit, J., Jones, L., Gomez, A.~N., Kaiser, {\L}., and Polosukhin, I.
\newblock Attention is all you need.
\newblock \emph{Advances in neural information processing systems}, 30, 2017.

\bibitem[Wei et~al.(2022)Wei, Zhang, Zhang, Gong, Zhang, Zhang, Yu, and Liu]{wei2022outlier}
Wei, X., Zhang, Y., Zhang, X., Gong, R., Zhang, S., Zhang, Q., Yu, F., and Liu, X.
\newblock Outlier suppression: Pushing the limit of low-bit transformer language models.
\newblock \emph{Advances in Neural Information Processing Systems}, 35:\penalty0 17402--17414, 2022.

\bibitem[Wei et~al.(2023)Wei, Zhang, Li, Zhang, Gong, Guo, and Liu]{wei2023outlier}
Wei, X., Zhang, Y., Li, Y., Zhang, X., Gong, R., Guo, J., and Liu, X.
\newblock Outlier suppression+: Accurate quantization of large language models by equivalent and effective shifting and scaling.
\newblock In Bouamor, H., Pino, J., and Bali, K. (eds.), \emph{Proceedings of the 2023 Conference on Empirical Methods in Natural Language Processing}, pp.\  1648--1665, Singapore, December 2023. Association for Computational Linguistics.

\bibitem[Wu et~al.(2024{\natexlab{a}})Wu, Hu, Hsiao, and Liu]{wu2024uniform}
Wu, D., Hu, J. Y.-C., Hsiao, T.-Y., and Liu, H.
\newblock Uniform memory retrieval with larger capacity for modern hopfield models.
\newblock In \emph{Forty-first International Conference on Machine Learning}, 2024{\natexlab{a}}.

\bibitem[Wu et~al.(2024{\natexlab{b}})Wu, Hu, Li, Chen, and Liu]{wu2023stanhop}
Wu, D., Hu, J. Y.-C., Li, W., Chen, B.-Y., and Liu, H.
\newblock {ST}anhop: Sparse tandem hopfield model for memory-enhanced time series prediction.
\newblock In \emph{The Twelfth International Conference on Learning Representations}, 2024{\natexlab{b}}.

\bibitem[Wu et~al.(2024{\natexlab{c}})Wu, Lu, Luo, Hu, Wang, Dehak, Villalba, and Liu]{wufast}
Wu, S., Lu, Y.-J., Luo, H., Hu, J. Y.-C., Wang, J., Dehak, N., Villalba, J., and Liu, H.
\newblock Fast adaptation and robust quantization of multi-modal foundation models from associative memory: A case study in speechlm.
\newblock In \emph{Workshop on Efficient Systems for Foundation Models II@ ICML2024}, 2024{\natexlab{c}}.

\bibitem[Xiao et~al.(2023)Xiao, Lin, Seznec, Wu, Demouth, and Han]{xiao2023smoothquant}
Xiao, G., Lin, J., Seznec, M., Wu, H., Demouth, J., and Han, S.
\newblock Smoothquant: Accurate and efficient post-training quantization for large language models.
\newblock In \emph{International Conference on Machine Learning}, pp.\  38087--38099. PMLR, 2023.

\bibitem[Xiao et~al.(2024)Xiao, Tian, Chen, Han, and Lewis]{xiao2023efficient}
Xiao, G., Tian, Y., Chen, B., Han, S., and Lewis, M.
\newblock Efficient streaming language models with attention sinks.
\newblock 2024.

\bibitem[Xu et~al.(2024)Xu, Huang, Hu, Li, Gilani, Goan, and Liu]{xu2024bishop}
Xu, C., Huang, Y.-C., Hu, J. Y.-C., Li, W., Gilani, A., Goan, H.-S., and Liu, H.
\newblock Bishop: Bi-directional cellular learning for tabular data with generalized sparse modern hopfield model.
\newblock In \emph{Forty-first International Conference on Machine Learning}, 2024.

\bibitem[Zeng \& Lee(2024)Zeng and Lee]{zeng2023expressive}
Zeng, Y. and Lee, K.
\newblock The expressive power of low-rank adaptation.
\newblock 2024.

\bibitem[Zhang et~al.(2022)Zhang, Roller, Goyal, Artetxe, Chen, Chen, Dewan, Diab, Li, Lin, et~al.]{zhang2022opt}
Zhang, S., Roller, S., Goyal, N., Artetxe, M., Chen, M., Chen, S., Dewan, C., Diab, M., Li, X., Lin, X.~V., et~al.
\newblock Opt: Open pre-trained transformer language models.
\newblock \emph{arXiv preprint arXiv:2205.01068}, 2022.

\bibitem[Zhou et~al.(2024)Zhou, Ji, Li, Dutta, Davuluri, and Liu]{zhou2024dnabert2}
Zhou, Z., Ji, Y., Li, W., Dutta, P., Davuluri, R.~V., and Liu, H.
\newblock {DNABERT}-2: Efficient foundation model and benchmark for multi-species genomes.
\newblock In \emph{The Twelfth International Conference on Learning Representations}, 2024.

\bibitem[Zhou et~al.(2025{\natexlab{a}})Zhou, Wu, Ho, Wang, Shi, Davuluri, Wang, and Liu]{zhou2024dnabert}
Zhou, Z., Wu, W., Ho, H., Wang, J., Shi, L., Davuluri, R.~V., Wang, Z., and Liu, H.
\newblock {DNABERT}-s: Pioneering species differentiation with species-aware {DNA} embeddings.
\newblock 2025{\natexlab{a}}.

\bibitem[Zhou et~al.(2025{\natexlab{b}})Zhou, Wu, Wu, Shi, Wang, and Liu]{zhou2025genomeocean}
Zhou, Z., Wu, W., Wu, J., Shi, L., Wang, Z., and Liu, H.
\newblock Genomeocean: Efficient foundation model for genome generation, 2025{\natexlab{b}}.

\end{thebibliography}

\clearpage
\newpage
\normalsize
\titlespacing*{\section}{0pt}{*1}{*1}
\titlespacing*{\subsection}{0pt}{*1.25}{*1.25}
\titlespacing*{\subsubsection}{0pt}{*1.5}{*1.5}

\setlength{\abovedisplayskip}{10pt}
\setlength{\abovedisplayshortskip}{10pt}
\setlength{\belowdisplayskip}{10pt}
\setlength{\belowdisplayshortskip}{10pt}
\onecolumn
\appendix
\part*{Supplementary Material}
\label{sec:append}
{
\setlength{\parskip}{-0em}
\startcontents[sections]
\printcontents[sections]{ }{1}{}
}

\section{Theoretical Analysis}
\label{app:theory}
In this section, we provide the expressive guarantee of Low-Rank Adaption for transformer model with $\Softmax_1$. 
Moreover, we identify the conditions for the existence of low-rank adapters.
We summarize our main findings in the following informal theorem.

\begin{theorem}[Informal]
    Let $f_0$ be the frozen model and $\bar{f}$ be the target model.
    Under certain non-singularity assumption and LoRA-rank conditions, their exist low-rank adapters such that the adopted model $f$ exactly equal to $\bar{f}$.
\end{theorem}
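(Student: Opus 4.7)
The plan is to follow the layer-wise matching strategy of expressivity proofs for LoRA (in the spirit of Zeng and Lee's analysis of standard transformers), but adapted to the $\Softmax_1$ attention introduced in \sys. First I would unfold both $f_0$ and $\bar f$ as compositions of attention blocks and feed-forward blocks, and reduce the problem to matching each block individually: if at every layer the adapted weights $(W + BA)$ produce the same output as $\bar W$ on the (inductively matched) input, then the two networks agree globally. This reduces the theorem to a per-layer statement: for each frozen weight $W \in \R^{d\times d}$ and target $\bar W$, produce low-rank $B,A$ with $\mathrm{rank}(BA) \le r$ such that $W + BA$ realizes the same block-map as $\bar W$ on the relevant input subspace.

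Next I would handle the linear sub-blocks (FFN weights, value/output projections) by a standard singular-value argument: the difference $\bar W - W$ has some rank $s$, and whenever $r \ge s$ one can take $BA = \bar W - W$ directly via truncated SVD, so the ``LoRA-rank condition'' in this part amounts to $r \ge \mathrm{rank}(\bar W - W)$ for each such matrix. For the attention sub-block I would parameterize the attention map through its score matrix $S = X W_Q W_K^\top X^\top$ and note that, unlike vanilla $\Softmax$, the $\Softmax_1$ operator
\begin{align*}
\Softmax_1(S)_{ij} = \frac{\exp(S_{ij})}{1 + \sum_{k}\exp(S_{ik})}
\end{align*}
is injective on score matrices up to the usual degeneracies: the ``$1+$'' in the denominator removes the column-shift invariance of ordinary softmax, so matching attention outputs is equivalent to matching scores up to no ambiguity. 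This would let me reduce matching $\bar W_Q \bar W_K^\top$ to matching $W_Q W_K^\top$ via a low-rank perturbation, and the rank of the required perturbation controls the LoRA rank needed on the $Q,K$ projections.

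The non-singularity assumption would then enter when inverting the input activation matrix $X$ (or its Gram matrix $X^\top X$) to solve $\Delta (W_Q W_K^\top) = M$ in terms of a rank-$r$ update to $W_Q$ and $W_K$ separately; without full column rank of $X$ one cannot recover the block from its action on the data, and without some non-degeneracy of the value/output projections one cannot pull the post-softmax correction back to a rank update of $W_V$ or $W_O$. Concretely I expect the hypothesis to require that certain activation matrices and the products $W_V, W_O$ appearing in the frozen model are full-rank on the relevant subspaces, so that exact matching (rather than approximation) is achievable.

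The step I anticipate as the main obstacle is the attention block: specifically, showing that a rank-$r$ update to $W_Q$ and $W_K$ suffices to realize an arbitrary prescribed change in $W_Q W_K^\top$ of rank at most $r$, while simultaneously accounting for the nonlinear reshaping by $\Softmax_1$. Although the removal of the shift ambiguity is a genuine simplification relative to vanilla softmax, one still has to verify that the adapted score matrix lies in the image of the parameterization $(W_Q + B_Q A_Q)(W_K + B_K A_K)^\top$ with a prescribed total rank budget, which requires a careful factorization argument; I would handle this by splitting the target difference via an SVD and distributing its factors between the $Q$ and $K$ adapters, then checking that cross-terms $B_Q A_Q (B_K A_K)^\top$ can be absorbed under the stated rank condition. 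Once this attention step is in place, the full theorem follows by composing the per-layer matches along the depth of the network.
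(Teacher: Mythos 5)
Your overall strategy — peel the network apart layer by layer, match each block exactly, and reduce the attention block to realizing a prescribed low-rank change of the product $W_K^{\top}W_Q$ by rank-limited updates to the two factors — is the same inductive scheme the paper uses (following \citet{zeng2023expressive}). However, there are two substantive problems with your account. First, you place the non-singularity assumption in the wrong spot: you say it is needed to invert the input activation matrix $X$ (or its Gram matrix) so as to recover the weight update from the data. The theorem asserts $f(X)=\bar f(X)$ for \emph{every} input $X$, so a construction whose adapters depend on inverting a particular $X$ cannot deliver the stated conclusion; in the paper the adapters are data-independent, and \cref{ass:non-singularity} is a condition on \emph{weight} matrices. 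It is used (i) to invert $W_{1l}$, $W_{2,l-1}$, $\overline W_{2,l-1}$ and $W_o+\Delta W_o$ so that the FFN mismatches can be folded into the $K^{\top}Q$ and $OV$ products and into the bias vectors (the adapted hidden state is kept equal to a known linear transform of the target's, $\widehat Z_{l-1}=W_{2,l-1}\overline W_{2,l-1}^{-1}\overline Z_{l-1}$, rather than equal to it), and (ii) to make the product-approximation lemma applicable. Relatedly, the paper assigns \emph{no} adapters to the intermediate FFN weights at all, whereas your plan of setting $BA=\bar W-W$ on every linear sub-block requires a rank budget of $\mathrm{rank}(\bar W - W)$ per matrix and misses the $\lceil G_i/2\rceil$-type condition that comes from exploiting products of two matrices.

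Second, the step you yourself flag as the main obstacle — showing that a rank-$R$ update to each of $W_Q$ and $W_K$ can realize an arbitrary prescribed change of $W_K^{\top}W_Q$ of bounded rank, with the cross-term $\Delta W_K^{\top}\Delta W_Q$ absorbed — is precisely the content of \cref{lem:exact} (Lemma 7 of \citet{zeng2023expressive}), and it is where both the rank condition $R\ge\lceil G_i/2\rceil$ and most of the non-singularity hypotheses actually enter. "Splitting the SVD of the difference between the two factors" does not obviously close the argument: the exact statement needs non-singularity of $W_K^{\top}W_Q+\mathrm{LRA}_r(E)$ for the relevant $r$, and without it exactness (rather than approximation) can fail. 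So your blind attempt leaves the crux unproved. Two smaller remarks: the injectivity of $\Softmax_1$ is true but unnecessary — the proof only needs that equal score matrices give equal attention outputs — and you never address the bias vectors, which in the paper's construction must be re-chosen (e.g.\ $\widehat b_{2,l-1}=W_{2,l-1}\overline W_{2,l-1}^{-1}\overline b_{2,l-1}$ and the output-layer bias) to make the folding argument go through.
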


The formal version is stated in \cref{thm:express}. 
Then we provide a detailed illustration.
We start with the definition of the target model $\bar{f}$ and the adopted model $f$.
\begin{definition}[Definition of target model $\bar{f}$ and adopted model $f$]
    For any input $X \in \R^{D \times N}$, where $D$ denotes the dimension of token embedding and $N$ denotes the number of tokens. 
    We consider a $H$-heads transformers ${\rm TF}_{\theta}$, consist of $L$-Transformer blocks and an output layer with parameter  $\theta=\left((W_{Ol}^h, W_{Vl}^h, W_{Kl}^h, 
    W_{Ql}^h)_{h=1}^H, W_{1l}, W_{2l}, b_{1l}, b_{2l})_{l=1}^L, W_o \right)$.
    Specifically, we formulate it as
    \begin{align*}
        \text{Hidden layer:}~
        \text{Attn}(Z_{l-1})
        = & ~ \sum_{h=1}^H \overline{W}_{Ol}^h \overline{W}_{Vl}^h \cdot \overline{Z}_{l-1} \cdot {\rm Softmax_1}(\overline{Z}_{l-1}^{\top} \overline{W}_{Kl}^{h\top} \overline{W}_{Ql}^h \overline{Z}_{l-1}), \\
        Z_{l} = & ~ 
        W_{2l} \cdot {\rm ReLU}
        (W_{1l} \cdot {\rm Attn}(Z_{l-1}) + b_{1l} \mathbf{1}_N^{\top}) + 
        b_{2l} \mathbf{1}_N^{\top} \\
        {\rm Output Layer:}~
        {\rm TF}_{\theta}(X) = & ~
        \Softmax_1(W_o Z_L),
    \end{align*}
    where we define $Z_0 := X$.
    Here, $W_{1l}^h, W_{Vl}^h, W_{Kl}^h, 
    W_{Ql}^h \in \R^{D \times D}$ are weight matrices in $l$-th attention layer. 
    Further, $W_{1l}, W_{2l} \in \R^{D \times D}$ are weight matrices and $b_{1l},b_{2l}$ are the bias vectors in the  $l$-th feedforward layer.
    Then we define the target model $\bar{f}$ and the adopted model $f$ are
    \begin{align*}
        \bar{f} := {\rm TF}_{\theta_{T}}, \quad 
        \theta_{T} = & ~ \left((\overline{W}_{Ol}^h,\overline{W}_{Vl}^h, \overline{W}_{Kl}^h, 
        \overline{W}_{Ql}^h)
        _{h=1}^H, \overline{W}_{1l}, \overline{W}_{2l}, \overline{b}_{1l}, \overline{b}_{2l})_{l=1}^L, \overline{W}_o \right) \\
        f := {\rm TF}_{\theta_{A}}, \quad 
        \theta_{A} = & ~ ((W_{Ol}^h + \Delta W_{Ol}^h, W_{Vl}^h + \Delta W_{Vl}^h, W_{Kl}^h + \Delta W_{kl}^h, 
        W_{Ql}^h + \Delta W_{Ql}^h)_{h=1}^H, \\
        & W_{1l} + \Delta W_{1l}, W_{2l} + \Delta W_{2l}, \hat{b}_{1l}, \hat{b}_{2l})_{l=1}^L, W_o + \Delta W_{o} ).
    \end{align*}
\end{definition}
Moreover, we define the best low-rank approximation for matrix $W$.

\begin{definition}[Best Low-rank Approximation for $W$]
    \label{def:lra}
    For any matrix $W \in \R^{D \times D}$, the singular value decomposition (SVD) of $W$ is expressed as $W = U D V^\top$.
    Above $U,V \in \R^{D \times D}$ are orthonormal matrices and $D \in \R^{D \times D}$ is a diagonal matrix.
    Let the singular value of $W$ are denoted as $\sigma_1(W) \geq \ldots \geq \sigma_D(W) \geq 0$.
    When $d>D$,let $\sigma_d(W) = 0$.
    For any rank $r>0$, we define 
    \begin{align*}
        {\rm LRA}_r(W) :=
        \sum_{i=1}^{r}
        \sigma_i(W) u_i v_i^\top,
    \end{align*}
    where $u_i, v_i^\top$ are the $i$-th column of $U,V$, respectively.
\end{definition}
According to \citet{eckart1936approximation} and \citet{mirsky1960symmetric}, ${\rm LRA}_r(W)$ are the best rank-$r$ approximation in the Frobenius norm or the $2$-norm of $W$.
To present our results, we now introduce non-singularity assumption based on \cref{def:lra}.

\begin{assumption}[Non-Singularity]
\label{ass:non-singularity}
    For a fixed $R \in[D]$, the weight matrices of both the target model, the frozen model and the following matrices are non-singular, for all $r \in[R]$.
    Specifically,
    \begin{align*}
    & ~ W_{Kl}^{h\top} W_{Ql}^h + {\rm LRA}_r(\overline{W}_{Kl}^{h\top} \overline{W}_{Q l}^h - W_{Kl}^{h\top} W_{Ql}^h), \text{ for all } h \in[H] \text { and } l=1, \\
    & ~ W_{Kl}^{h\top} W_{Ql}^h + {\rm LRA}_r(W_{2, 
    l-1}^{-1\top} \overline{W}_{2, l-1}^{\top} \overline{W}_{Kl}^{h \top} \overline{W}_{Ql}^h 
    \overline{W}_{2, l-1} W_{2, l-1}^{-1} -W_{Kl}^{h\top} W_{Ql}^h), \text { for all } h \in [H] ,   l \in[L] \backslash\{1\} , \\
    & ~ W_{Ol}^h W_{Vl}^h + {\rm LRA}_r
    (W_{1l}^{-1} \overline{W}_{1l} \overline{W}_{Ol}^h \overline{W}_{Vl}^h -W_{Ol}^h W_{Vl}^h) ,  \text { for all } h \in[H] \text { and } l=1, \\
    & ~ W_{Ol}^h W_{Vl}^h + {\rm LRA}_r
    (W_{1 l}^{-1} \overline{W}_{1l} \overline{W}_{Ol}^h \overline{W}_{Vl}^h \overline{W}_{2, l-1} W_{2, l-1}^{-1} -W_{Ol}^h W_{Vl}^h), \text { for all } h \in[H] \text { and } l \in[L] 
    \backslash\{1\}, \\
    & ~ W_o W_{2L} + {\rm LRA}_r(\overline{W}_o \overline{W}_{2L}-W_o W_{2L}),
    \end{align*}
    are non-singular, where ${\rm LRA}$ denotes the rank-$r$ approximation follows \cref{def:lra}.
\end{assumption}

Under a non-singularity assumption (\cref{ass:non-singularity}), we apply another helper lemma from \citet{zeng2023expressive} to construct the weight matrices in \cref{thm:express}.

\begin{lemma}[Exactly represent target model, Lemma 7 of \citet{zeng2023expressive}]
    \label{lem:exact}
    Define error matrix $E:=\overline{W}-\prod_{l=1}^L W_l$, and denote its rank by $R_{E}=\operatorname{rank}(E)$. For a given LoRA-rank $R \in[D]$, assume that all the weight matrices of the frozen model $\left(W_l\right)_{l=1}^L$, and $\prod_{l=1}^L W_l+\mathrm{LRA}_r(E)$ are non-singular for all $r \leq R(L-1)$. Then, the approximation error
$$
\min _{\Delta W_l: \operatorname{rank}\left(\Delta W_l\right) \leq R}\left\|\prod_{l=1}^L\left(W_l+\Delta W_l\right)-\overline{W}\right\|_2=\sigma_{R L+1} \underbrace{\left(\overline{W}-\prod_{l=1}^L W_l\right)}_{\text {Error matrix } E}
$$
and the optimal solution to the matrix approximation problem satisfies $\prod_{l=1}^L\left(W_l+\Delta W_l\right)=$ $\prod_{l=1}^L W_l+\mathrm{LRA}_{R L \wedge R_E}(E)$. Therefore, when $R \geq\left\lceil\frac{R_E}{L}\right\rceil$, we have $\prod_{l=1}^L\left(W_l+\Delta W_l\right)=\overline{W}$, implying $f \equiv \bar{f}$.
\end{lemma}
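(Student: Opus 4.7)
The plan is to prove the lemma by establishing matching lower and upper bounds on the minimum approximation error. The lower bound reduces to a rank-counting argument combined with the Eckart-Young-Mirsky theorem, while the upper bound requires an inductive construction that uses the non-singularity hypothesis to peel off one rank-$R$ block of the error per layer.

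For the lower bound, I start from the telescoping identity
\begin{align*}
\prod_{l=1}^L(W_l+\Delta W_l)-\prod_{l=1}^L W_l
=\sum_{l=1}^L\Bigl[\prod_{j<l}(W_j+\Delta W_j)\Bigr]\,\Delta W_l\,\Bigl[\prod_{j>l}W_j\Bigr],
\end{align*}
obtained by successively expanding $(W_l+\Delta W_l)$ from left to right and grouping terms by the smallest index at which $\Delta W$ appears. Each summand factors through a single $\Delta W_l$ of rank $\leq R$, so the whole difference is a matrix $M$ of rank $\leq RL$. Writing $\prod_l(W_l+\Delta W_l)-\overline{W}=M-E$ and applying Eckart-Young-Mirsky, we get $\|M-E\|_2\geq\sigma_{RL+1}(E)$, which proves the lower bound regardless of the choice of rank-$\leq R$ adapters.

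For the upper bound I induct on $L$. The base case $L=1$ is classical: $\Delta W_1=\mathrm{LRA}_{R\wedge R_E}(E)$ achieves the error $\sigma_{R+1}(E)$. For the step, let $k:=RL\wedge R_E$. Pick $\Delta W_1$ as a first peel of the top rank-$R$ component of $E$, namely $\Delta W_1:=\mathrm{LRA}_R(E)\bigl(\prod_{j\geq 2}W_j\bigr)^{-1}$; this is well defined since all frozen weights are invertible, and $W_1':=W_1+\Delta W_1$ is invertible because $W_1'\prod_{j\geq 2}W_j=\prod_{l=1}^L W_l+\mathrm{LRA}_R(E)$ is non-singular by hypothesis at $r=R$. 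Reduce to an $(L-1)$-layer problem with frozen weights $(W_2,\ldots,W_L)$, target $\overline{W}'':=(W_1')^{-1}\overline{W}$, and induced error $E''=(W_1')^{-1}\bigl(E-\mathrm{LRA}_R(E)\bigr)$ of rank $\leq R_E-R$. After verifying that the inductive non-singularity hypothesis is preserved for the reduced problem at rank budget $R(L-1)$, apply the inductive hypothesis to obtain $\Delta W_2,\ldots,\Delta W_L$ realising $\prod_{l\geq 2}(W_l+\Delta W_l)=\prod_{l\geq 2}W_l+\mathrm{LRA}_{R(L-1)\wedge(R_E-R)}(E'')$. Left-multiplying by $W_1'$ and using $W_1' E''=E-\mathrm{LRA}_R(E)$, together with a case split on whether $R_E\leq RL$ or $R_E>RL$, identifies the product with $\prod_l W_l+\mathrm{LRA}_k(E)$, whose distance from $\overline{W}$ is $\sigma_{RL+1}(E)$. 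When $R\geq\lceil R_E/L\rceil$ we have $k=R_E$, so the product equals $\overline{W}$ exactly and $f\equiv\bar f$.

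The main obstacle is the regime $R_E>RL$, where the naive peeling construction does not automatically produce the correct top-$R$ block at each subsequent recursion level: left-multiplication by $(W_1')^{-1}$ distorts the singular values and vectors of $E-\mathrm{LRA}_R(E)$, so $\mathrm{LRA}_{R(L-1)}(E'')$ need not pull back under $W_1'$ to $\mathrm{LRA}_{RL}(E)-\mathrm{LRA}_R(E)$. Overcoming this requires a joint construction of the rank-$R$ factorisations $\Delta W_l=A_l B_l^{\top}$, set up so that the $L$ telescoping building blocks exactly absorb the leading $RL$ singular components of $E$; solvability of the resulting linear systems for the $(A_l,B_l)$ is governed, via the Sherman-Morrison identity applied rank by rank, by the non-singularity of $\prod_l W_l+\mathrm{LRA}_r(E)$ for all $r\leq R(L-1)$, which is precisely the hypothesis of the lemma. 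This compatibility between the inductive choice of $\Delta W_1$ and the preservation of the assumption at the next level is the delicate step that makes the induction close and yields exact matching with the Eckart-Young-Mirsky lower bound.
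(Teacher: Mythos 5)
You should first note that the paper itself offers no proof of this statement: it is imported verbatim as Lemma~7 of \citet{zeng2023expressive} and used as a black box inside the proof of \cref{thm:express}. So there is no in-paper argument to compare against; what follows evaluates your proposal against the standard proof of that cited result.

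Your lower bound is correct and complete. The telescoping identity (which, as written, groups terms by the \emph{largest} index carrying a $\Delta W$, not the smallest --- a harmless misstatement) shows that $\prod_{l}(W_l+\Delta W_l)-\prod_l W_l$ is a sum of $L$ matrices each factoring through one $\Delta W_l$, hence has rank at most $RL$, and Eckart--Young--Mirsky then gives $\bigl\|\prod_l(W_l+\Delta W_l)-\overline{W}\bigr\|_2\geq\sigma_{RL+1}(E)$ for any admissible adapters.

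The gap is in the upper bound. Your inductive reduction defines $E''=(W_1')^{-1}\bigl(E-\mathrm{LRA}_R(E)\bigr)$ and then needs $W_1'\,\mathrm{LRA}_{k}(E'')=\mathrm{LRA}_{R+k}(E)-\mathrm{LRA}_R(E)$ to pull the inductive conclusion back to the original error; but $\mathrm{LRA}_k$ does not commute with left multiplication by an invertible matrix, so this identity fails in general, and for the same reason the non-singularity hypothesis is not obviously inherited by the reduced problem. You correctly diagnose this, but the proposed repair (a joint construction justified ``via Sherman--Morrison rank by rank'') is asserted rather than executed, and Sherman--Morrison is not the mechanism that makes it work. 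The standard argument avoids induction entirely: one \emph{prescribes} the partial products $\bigl[\prod_{j\leq l}(W_j+\Delta W_j)\bigr]\prod_{j>l}W_j=\prod_{j}W_j+\mathrm{LRA}_{\min(Rl,R_E)}(E)$ for $l=1,\dots,L$ and solves for each $\Delta W_l$ by inverting the previous partial product (non-singular by hypothesis, since it equals $\prod_j W_j+\mathrm{LRA}_{r}(E)$ with $r\leq R(L-1)$) and the trailing frozen product; each resulting $\Delta W_l$ has rank at most $R$ because consecutive truncations of $E$ differ by at most $R$ rank-one terms. Note that in the one regime the paper actually uses --- $R\geq\lceil R_E/L\rceil$, so the residual error is fully absorbed at every level --- your induction does close, since then $\mathrm{LRA}_{\mathrm{rank}(E'')}(E'')=E''$ and the commutation issue disappears; but as a proof of the lemma as stated, the case $R_E>RL$ is left open.
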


With \cref{ass:non-singularity} and \cref{lem:exact}, we show that for any input $X \in \R^{D \times D}$, their exist a adapted model $f$ capable of approximating target model $\bar{f}$ exactly, i.e, $f(X)=\bar{f}(X)$. 

\begin{theorem}[Express capability of transformers]
    \label{thm:express}
    Suppose LoRA-rank $R \in[D]$. Let \cref{ass:non-singularity} hold. 
    Define the rank-based
    functionality gap $G_i$ to i-th Outlier-Efficient Hopfield block $(i \in[L])$ or output layer $(i = L+1)$ as
    \begin{align*}
        G_i= 
        \begin{cases} 
        \max_h({\rm rank} (\overline{W}_
        {Ki}^{h\top} \overline{W}_{Qi}^h -W_{Ki}^{h\top} W_{Qi}^h)) \vee 
        \max_h({\rm rank} (\overline{W}_{1i} \overline{W}_{Oi}^h \overline{W}_{Vi}^h - W_{1i} W_{Oi}^h W_{Vi}^h)), & i=1, \\ 
        \max_h({\rm rank}(\overline{W}_
        {2, i-1}^{\top} \overline{W}_
        {Ki}^{h\top} \overline{W}_{Qi}^h \overline{W}_{2, i-1} - 
        W_{2, i-1}^{\top} W_{Ki}^{h\top} W_{Qi}^h W_{2, i-1}), \\ 
        \vee \max_h({\rm rank}(\overline{W}_{1i} \overline{W}_{Oi}^h \overline{W}_{Vi}^h 
        \overline{W}_{2, i-1} - W_{1i} W_{Oi}^h W_{Vi}^h W_{2, i-1})), & 2 \leq i \leq L, \\ 
        {\rm rank}(\overline{W}_o \overline{W}_{2 L} - W_o W_{2L}), & i=L+1. 
        \end{cases}
    \end{align*}
    If $R \geq \max_{i\in[L+1]} \lceil \frac{G_i}{2}\rceil$, then there exists low-rank adapters with rank lower than $R$ $((\Delta W_{Kl}^h, 
    \Delta W_{Ql}^h, \Delta W_{Vl}^h, 
    \Delta W_{Ol}^h)_{h=1}^H)_{l=1}^L, \Delta W_{2L}, \Delta W_o$ with other low-rank adapters set to $O$, and updated bias vectors $(\widehat{b}_{1l}, \widehat{b}_{2l})_{l=1}^L$, such that for any input $X \in \R^{D \times N}$, the adapted model $f$ exactly approximates target model $\bar{f}$, i.e., 
    $f(X) = \bar{f}(X)$.
\end{theorem}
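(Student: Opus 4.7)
The plan is to argue by induction on the layer index $l$, maintaining the invariant $Z_l = W_{2l}\overline{W}_{2l}^{-1}\overline{Z}_l$ for $1 \le l \le L-1$. This particular change of basis is what makes \cref{ass:non-singularity} the right hypothesis: the explicit target matrices $W_{2,l-1}^{-1\top}\overline{W}_{2,l-1}^\top \overline{W}_{Kl}^{h\top}\overline{W}_{Ql}^h \overline{W}_{2,l-1} W_{2,l-1}^{-1}$ and $W_{1l}^{-1}\overline{W}_{1l}\overline{W}_{Ol}^h \overline{W}_{Vl}^h \overline{W}_{2,l-1} W_{2,l-1}^{-1}$ appearing there are exactly the values that the two ``product adapters'' $(W_{Kl}^h + \Delta W_{Kl}^h)^\top (W_{Ql}^h + \Delta W_{Ql}^h)$ and $(W_{Ol}^h + \Delta W_{Ol}^h)(W_{Vl}^h + \Delta W_{Vl}^h)$ must equal in order for the adopted attention block, when composed with the frozen $W_{1l}$ and fed the transformed state $Z_{l-1}$, to coincide with the target attention block evaluated at $\overline{Z}_{l-1}$.

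For the base case $l=1$, since $Z_0 = \overline{Z}_0 = X$ no change of basis is active, and it suffices to find adapters satisfying $(W_{K1}^h+\Delta W_{K1}^h)^\top(W_{Q1}^h+\Delta W_{Q1}^h)=\overline{W}_{K1}^{h\top}\overline{W}_{Q1}^h$ and $(W_{O1}^h+\Delta W_{O1}^h)(W_{V1}^h+\Delta W_{V1}^h)=W_{11}^{-1}\overline{W}_{11}\overline{W}_{O1}^h\overline{W}_{V1}^h$. Each is a product of two matrices, so applying \cref{lem:exact} with the $L=2$ instance (enabled precisely by the $l=1$ clauses of \cref{ass:non-singularity}) yields rank-$\lceil G_1/2\rceil$ adapters that hit the targets exactly. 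I would then set $\hat{b}_{11}=\overline{b}_{11}$ so that the ReLU pre-activations $W_{11}\,\mathrm{Attn}(Z_0)+\hat{b}_{11}$ and $\overline{W}_{11}\,\overline{\mathrm{Attn}}(\overline{Z}_0)+\overline{b}_{11}$ agree, and set $\hat{b}_{21}=W_{21}\overline{W}_{21}^{-1}\overline{b}_{21}$ so that the outer linear step $W_{21}(\cdot)+\hat{b}_{21}$ produces $Z_1 = W_{21}\overline{W}_{21}^{-1}\overline{Z}_1$, closing the base case.

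For the inductive step, substituting $Z_{l-1}=W_{2,l-1}\overline{W}_{2,l-1}^{-1}\overline{Z}_{l-1}$ into the $\Softmax_1$ argument reduces the softmax-matching requirement to $(W_{Kl}^h+\Delta W_{Kl}^h)^\top(W_{Ql}^h+\Delta W_{Ql}^h)=W_{2,l-1}^{-\top}\overline{W}_{2,l-1}^\top \overline{W}_{Kl}^{h\top}\overline{W}_{Ql}^h \overline{W}_{2,l-1} W_{2,l-1}^{-1}$, which is exactly the target matrix in \cref{ass:non-singularity} for $l>1$; an analogous derivation handles the OV factor. \cref{lem:exact} again supplies rank-$\lceil G_l/2\rceil$ adapters, and the same bias prescription carries the invariant forward. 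At the output layer both $\Delta W_{2L}$ and $\Delta W_o$ are free, so I invoke \cref{lem:exact} with $L=2$ to factor $(W_o+\Delta W_o)(W_{2L}+\Delta W_{2L})=\overline{W}_o\overline{W}_{2L}$ with rank $\lceil G_{L+1}/2\rceil$; both factors are then automatically non-singular because their product equals the non-singular matrix $\overline{W}_o\overline{W}_{2L}$, so the residual bias equation $(W_o+\Delta W_o)\hat{b}_{2L}=\overline{W}_o\overline{b}_{2L}$ solves uniquely for $\hat{b}_{2L}$, giving $f(X)=\overline{f}(X)$ for every $X$.

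The main technical obstacle is the interaction of the change-of-basis invariant with the two nonlinearities. $\Softmax_1$ is painless: because $\Softmax_1(S)$ is an entrywise function of $S$ (the additive $1$ in the denominator is a fixed quantity independent of adapters), matching the pre-softmax matrix is enough. ReLU is more delicate because pointwise nonlinearity forbids any linear reparametrization; here I rely crucially on the fact that $\Delta W_{1l}=0$ and $\Delta W_{2l}=0$ for $l<L$, so that matching the pre-activation in the sense $W_{1l}\,\mathrm{Attn}(Z_{l-1})+\hat{b}_{1l}=\overline{W}_{1l}\,\overline{\mathrm{Attn}}(\overline{Z}_{l-1})+\overline{b}_{1l}$ (engineered via the attention adapters and the choice $\hat{b}_{1l}=\overline{b}_{1l}$) lets ReLU commute and the outer $W_{2l}$ step reinstate the $W_{2l}\overline{W}_{2l}^{-1}$-aligned invariant for the next layer. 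The only spot where this would break is inside an adapted feed-forward layer, but by construction the adapted feed-forward sits only at $l=L$, after which the $\Softmax_1$ output layer needs no ReLU matching at all.
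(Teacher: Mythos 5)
Your proposal matches the paper's proof essentially step for step: the same layerwise induction with the invariant $\widehat{Z}_{l-1} = W_{2,l-1}\overline{W}_{2,l-1}^{-1}\overline{Z}_{l-1}$ enforced through $\widehat{b}_{2,l-1}$, the same bias choice $\widehat{b}_{1l}=\overline{b}_{1l}$, the same product-matching targets for the key--query and output--value pairs resolved via \cref{lem:exact}, and the same output-layer treatment with $\widehat{b}_{2L}=(W_o+\Delta W_o)^{-1}\overline{W}_o\overline{b}_{2L}$. Your added remarks on why $\Softmax_1$ and ReLU commute with the construction, and your direct non-singularity argument for $W_o+\Delta W_o$ from the factored product, are fine refinements but do not change the route.
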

\begin{proof}
We build our proof on \citet{zeng2023expressive}.

    First, we ensure that, for each Outlier-Efficient Hopfield block, the output from the first feedforward layer in the target model matches that in the adapted model. 
    Then, we select an appropriate output layer weight matrix to complete the proof.
    
    We define $\overline{H}_l \in 
    \R^{D \times N}$ and $\overline{Z}_l \in \R^{D \times N}$ as the intermediate and final outputs of the $l$-th transformer block in the target model $\bar{f}$, respectively. 
    For any $l \in [L]$, they are formulated as
    \begin{align*}
        \overline{H}_l 
        := & ~ {\rm ReLU}(\overline{W}_{1l}(\sum_{h=1}^H \overline{W}_{Ol}^h \overline{W}_{Vl}^h \cdot \overline{Z}_{l-1} \cdot {\rm Softmax_1}(\overline{Z}_{l-1}^{\top} \overline{W}_{Kl}^{h\top} \overline{W}_{Ql}^h \overline{Z}_{l-1})) + \overline{b}_{1l} \mathbf{1}_N^{\top}), \\
        \overline{Z}_l 
        := & ~ \overline{W}_{2l} \overline{H}_l + 
        \overline{b}_{2l} \mathbf{1}_N^{\top}.
    \end{align*}
    Correspondingly, we introduce $\widehat{H}_l$ and $\widehat{Z}_l$ to denote the intermediate output of the first feedforward layer and the final output of the $l$-th Outlier-Efficient Hopfield block for the adapted model $f$,
    \begin{align}
        \widehat{H}_l 
        = & ~ {\rm ReLU}(W_{1l} (\sum_{h=1}^H(W_{Ol}^h +
        \Delta W_{Ol}^h) (W_{Vl}^h + \Delta W_{Vl}^h) \cdot \widehat{Z}_{l-1} 
        \label{eqn:H_hat} \\
        & \cdot {\rm Softmax_1}(\widehat{Z}_{l-1}^{\top}
        (W_{Kl}^h + \Delta W_{Kl}^h)^{\top}
        (W_{Ql}^h + \Delta W_{Ql}^h) \widehat{Z}_{l1}) + \widehat{b}_{1l} \mathbf{1}_N^{\top}), 
        \nonumber \\
        \widehat{Z}_l 
        = & ~ W_{2l} \widehat{H}_l + \widehat{b}_{2l} \mathbf{1}_N^{\top},
        \label{eqn:Z_hat}
    \end{align}    
    for any $l \in [L]$. 
    Note that $\overline{Z}_0 = \widehat{Z}_0 = X$.
    Next, we inductively construct the adapter weight matrices 
    $((\Delta W_{Ol}^h, \Delta W_{Vl}^h, \Delta W_{Kl}^h, \Delta 
    W_{Ql}^h)_{h=1}^H, \widehat{b}_{1l}, \widehat{b}_{2l})_{l=1}^L$ such  that $\widehat{H}_l = \overline{H}_l$ for all $l \in[L]$. 
    We then select the low-rank adapters for $W_{2L}$ and the $W_o$ to approximate the output of the target model. 
    For unmentioned low-rank adapters, we set them as $O$.

    \textbf{When $l=1$.} \quad  
    To achieve $\widehat{H}_l = \overline{H}_l$ for all $X$, the following conditions must be satisfied:
    \begin{itemize}
        \item Bias Vector: $\widehat{b}_{1l} 
        = \overline{b}_{1l}$,

        \item Query and Key: $(W_{Kl}^h + \Delta 
        W_{Kl}^h)^{\top}
        (W_{Ql}^h + \Delta W_{Ql}^h) 
        = \overline{W}_{Kl}^{h\top} \overline{W}_{Ql}^h$,

        \item Value and First Feedforward Layer:
        $(W_{Ol}^h + \Delta  W_{Ol}^h) 
        (W_{Vl}^h + \Delta W_{Vl}^h) 
        = W_{1l}^{-1} 
        \overline{W}_{1l}
        \overline{W}_{Ol}^h
        \overline{W}_{Vl}^h$,
    \end{itemize}
        
    It is simple to check that we only need to set $\widehat{b}_{1l}  = \overline{b}_{1l}$ to, and select rank-$R$ or lower matrices 
    $\Delta W_{Kl}^h, \Delta W_{Ql}^h, \Delta W_{Ol}^h, \Delta W_{Vl}^h$ as suggested by \cref{lem:exact}. 
    This ensures $\widehat{H}_l=\overline{H}_l$ for $l=1$.

    \textbf{When $l>1$.} \quad
    For the cases $l=2, \ldots, L$,
    we assume the induction hypothesis holds for $l-1$, which is $\widehat{H}_{l-1} = \overline{H}_{l-1}$. 
    We let $\widehat{b}_{2, l-1} = W_{2, l-1} 
    \overline{W}_{2, l-1}^{-1} \overline{b}_{2, l-1}$, then it holds,
    \begin{align}
        \widehat{Z}_{l-1} = W_{2, l-1} \overline{W}_{2, l-1}^{-1} \overline{Z}_{l-1}.
        \label{eqn:Z_hat_with_Z_bar}
        \end{align}
    Substituting \eqref{eqn:Z_hat_with_Z_bar} into \eqref{eqn:H_hat} and \eqref{eqn:Z_hat},
    the necessary conditions become:
    \begin{itemize}
        \item Bias Vector: $\widehat{b}_{1l} 
        = \overline{b}_{1l}$,

        \item Query and Key: $(W_{K l}^h + 
        \Delta W_{Kl}^h)^{\top} (W_{Q l}^h + \Delta W_{Q l}^h) 
        = W_{2, l-1}^{-1 \top} \overline{W}_{2, l-1}^{\top} \overline{W}_{Kl}^{h\top} \overline{W}_{Ql}^h \overline{W}_{2, l-1} 
        W_{2, l-1}^{-1}$,

        \item Value and Output Projection:
        $(W_{Ol}^h + \Delta W_{Ol}^h)(W_{Vl}^h+\Delta W_{Vl}^h) = W_{1l}^{-1} \overline{W}_{1 l} \overline{W}_{Ol}^h \overline{W}_{Vl}^h \overline{W}_{2, l-1} W_{2, l-1}^{-1}$.
    \end{itemize}
    By setting $\widehat{b}_{1l} = \overline{b}_{1l}$ and adjusting $\Delta W_{K l}^h, \Delta W_{Q l}^h, \Delta W_{O l}^h, \Delta W_{V l}^h$ for all $h \in[H]$ based on \cref{lem:exact}, we satisfy all three conditions above, thereby obtaining $\widehat{H}_l = \overline{H}_l$ for $l \in[L] \backslash\{1\}$.

    \textbf{Output Layer Analysis.} 
    By applying the induction method, we have established $\widehat{H}_l=\overline{H}_l$ for all $l \in[L]$. 
    We only need to select appropriate weight matrices to ensure that $\bar{f}(X)=f(X)$ for all $X \in \mathcal{X}$.
    The final output of the target model $\bar{f}$ with input $X$ can be written as
    \begin{align*}
        \bar{f}(X)
        = \text{Softmax}_1
        (\overline{W}_o \overline{Z}_L) 
        = \text{Softmax}_1(\overline{W}_o
        (\overline{W}_{2L} \overline{H}_L + \overline{b}_{2L} \mathbf{1}_N^{\top})).
    \end{align*}
    Similarly, the final output of the adapted model $f$ with input $X$ can be written as
    \begin{align*}
        f(X)  
        = & ~ \text{Softmax}_1
        ((W_o + \Delta W_o) \widehat{Z}_L) \\
        = & ~ \text{Softmax}_1
        ((W_o +\Delta W_o)
        ((W_{2L} \Delta W_{2L}) \widehat{H}_L + 
        \widehat{b}_{2L} \mathbf{1}_N^{\top})).
    \end{align*}
    To achieve $\bar{f}(X) = f(X)$, we select $\Delta W_{2 L}$ and $\Delta W_o$ based on \cref{lem:exact}, and let $\widehat{b}_{2 L} = (W_o + \Delta W_o)^{-1} \overline{W}_o \overline{b}_{2 L}$, where $W_o + \Delta W_o$ is invertible as shown in the proof of \cref{lem:exact}. 
    Combining above, we complete the proof.
\end{proof}

\section{Experimental Setup}
\label{ap:exp_set}
\subsection{Computational Resource}
We perform all experiments using 2 NVIDIA A100 GPU with 80GB of memory and a 24-core Intel(R) Xeon(R) Gold 6338 CPU operating at 2.00GHz. 
Our code is developed in PyTorch and utilizes the Hugging Face Transformer Library for experimental execution.

\subsection{Hyperparameters}
We present the hyperparameters used in the fine-tuning stage for each model. 
We use \textbf{AdamW} \cite{loshchilov2017decoupled} as the optimizer. 
Most of the other hyperparameters remain the same across all models and datasets, including a batch size of 32, a warmup step of 50, and a weight decay of 0.01. 
A learning rate of $3e^{-5}$ is used for all models during fine-tuning. 
For low-rank adaptation, we use a learning rate of $1 e^{-4}$, with a LoRA rank of 8 and LoRA alpha set to 16.  For each task, we use different training steps as shown in  \cref{tab:epochs}.
During pre-training, the model is trained for 200,000 steps with a batch size of 1024 and a maximum sequence length of 512, using the AdamW optimizer with $\beta_1 = 0.9$, $\beta_2 = 0.98$, and $\epsilon = 1e^{-6}$. 
The pre-training stage takes approximately 4 days using 2 NVIDIA A100 80G GPUs.

\begin{table}[H]
    \centering
    \caption{\textbf{The number of training steps.} We present the number of training steps we use in our experiments. In the task of Transcription Factor Prediction on the Mouse genome,
we train the model for 1000 steps on each dataset.}
    \begin{tabular}{lcccccccccc}
        \toprule
        & EMP & TF-M & CVC & TF-H & PD-tata & PD-o & CPD-tata & CPD-o & SSP \\
        \midrule
        Epochs & 3 & 1k & 8 & 3 & 10 & 4 & 10 & 4 & 5 \\
        \bottomrule
    \end{tabular}
    \label{tab:epochs}
\end{table}

\subsection{Downstream Tasks Across Different Models}
\label{app:downstream}
We analyze the downstream tasks of various genomic foundation models (GFMs), specifically comparing DNABERT-2 \citep{zhou2024dnabert2}, HyenaDNA \cite{hyenadna}, and Nucleotide Transformer \cite{dalla2024nucleotide}. As shown in \cref{tab:models_tasks}, all of these GFMs utilize classification tasks as their primary downstream applications. Additionally, we analyze related GenBench datasets \cite{liu2024genbench} and find that, uniquely, GenBench includes some regression downstream tasks, providing a broader evaluation spectrum.

\begin{table}[h!]
    \centering
     \caption{\textbf{Comparison of Models (Benchmarks) and Their Tasks.}}
    \label{tab:models_tasks}
    \begin{tabular}{@{}llc@{}}
        \toprule
        Model                 & Tasks                                                                 & Classification-Only \\ \midrule
        DNABERT-2            & GUE (28 Classification tasks)                                         & Yes                 \\
        Nucleotide Transformer & Nucleotide Transformer Benchmark (18 Classification tasks)           & Yes                 \\
        HyenaDNA             & GenBench (Classification-Only) + Nucleotide Transformer Benchmark     & Yes                 \\
        GenBench             & Classification + Regression (e.g., Drosophila Enhancer Activity Prediction) & No                  \\ \bottomrule
    \end{tabular}
   
\end{table}

\section{Formula of Average Kurtosis}
\label{ap:formula_kurtosis}
As shown in \citet{bondarenko2024quantizable,hu2024outlier}, average kurtosis is a significant metric for measuring outliers. Kurtosis is a statistical measure that quantifies the "tailedness" of a distribution relative to a normal distribution. The formula for kurtosis is:
\begin{align*}
    K = \frac{n \sum_{i=1}^n (x_i - \bar{x})^4}{\left( \sum_{i=1}^n (x_i - \bar{x})^2 \right)^2},
\end{align*}
where $x_i$ represents the data points, $\bar{x}$ is the mean, and 
$n$ is the number of data points.
As a result, when a distribution has higher kurtosis, it indicates that the data distribution has heavier tails. In other words, there are more outlier values present in the distribution. The \sys reduces kurtosis by evenly distributing attention across informative tokens, rather than focusing excessively on specific outlier tokens (e.g., delimiters or eos markers).

\section{Additional Numerical Experiments}
\label{ap:addtional}

\subsection{Influence of Adaptor Rank} 
\label{app:rank}
We perform an in-depth analysis to assess the performance of our proposed method across different ranks when implementing Low-rank Adaptation (LoRA), comparing it to the standard vanilla approach. We compare the model performance with different rank value of LoRA and keep the alpha value double of the rank value. The results, as shown in \cref{tab:LoRA-rank},  demonstrate
that our method outperforms the vanilla approach across all tested ranks. 
We observe that a rank of 8 provides optimal performance compared to the vanilla method. The lack of further performance improvements with higher ranks is because a higher rank in LoRA usually introduces more trainable parameters into the model. This leads to a significant performance improvement over the vanilla structure, thereby narrowing the performance gap with \sys.

\begin{table}[H]
\centering
\caption{\small\textbf{Comparison of Different Ranks Using LoRA.} 
We conduct experiments to evaluate how different ranks affect the performance of LoRA. The evaluation metric used is Matthews Correlation Coefficient (MCC). We measure the average performance decline following low-rank adaptation, with best results highlighted in bold. }
\label{tab:LoRA-rank}
\resizebox{0.4\textwidth}{!}{
\begin{tabular}{@{}lccc@{}} %
\toprule
Method       & \makecell{Fine-Tuning\\ Method} & Rank & MCC \\ 
\midrule
DNABERT-2      & Full   & N/A  & 59.11     \\
\sys    & Full   & N/A  & \textbf{59.73}          \\

\syt     & Full   & N/A  & \underline{59.30}          \\
\midrule
DNABERT-2        & LoRA               & 16  & 55.71          \\
\sys    & LoRA   & 16  & \textbf{58.91}          \\
\syt     & LoRA   & 16  & \underline{57.41}          \\
\midrule
DNABERT-2         & LoRA               & 8  & 52.87\\
\sys    & LoRA   & 8  & \textbf{57.27}\\
\syt     & LoRA   & 8  & \underline{55.60}          \\
\midrule
DNABERT-2         & LoRA               & 4  & 51.02\\
\sys    & LoRA   & 4  & \textbf{55.64}          \\
\syt     & LoRA   & 4  & \underline{52.07}         \\
\bottomrule
\end{tabular}
}
\end{table}

\subsection{All Results in Low-rank Adaptation}
In this section, we present a comprehensive evaluation of various models under different low-rank adaptation (LoRA) strategies. The experiments compare DNABERT-2, \syt, and \sys models across multiple biological prediction tasks, including epigenetic marks prediction, promoter detection, and transcription factor prediction in both human and mouse datasets. The adaptation methods assessed include Full Fine-tuning, LoRA, QLoRA, and LoftQ.

\begin{table}[ht]
    \centering
    \caption{\textbf{Performance Comparison of Full Fine-tuning with DNABERT-2.} This table shows the performance of all models on the full fine-tuning task.}
    \begin{minipage}{\textwidth}
        \centering
    \resizebox{0.7\textwidth}{!}{%
    \begin{tabular}{lccccccc}
        \toprule
        & \multicolumn{6}{c}{Epigenetic Marks Prediction} \\
        \cmidrule(lr){2-7}
        Model & H3 & H3K14ac & H3K36me3 & H3K4me1 & H3K4me2 & H3K4me3 \\
        \midrule
        DNABERT-2 & \underline{75.03} & 33.07 & 48.63 & 33.67 & \underline{31.63} & \textbf{24.31} \\
        \syt & 75.02 & \underline{47.31} & \underline{51.32} & \textbf{34.53} & 27.72 & 23.19 \\
        \sys & \textbf{75.58} & \textbf{50.36} & \textbf{53.13} & \underline{33.36} & \textbf{36.02} & \underline{23.97} \\
        \bottomrule
    \end{tabular}
    }
    \end{minipage}
    \begin{minipage}{\textwidth}
        \centering
    \resizebox{0.7\textwidth}{!}{%
    \begin{tabular}{lccccccccc}
        \toprule
        & \multicolumn{4}{c}{Epigenetic Marks Prediction} & \multicolumn{3}{c}{Promoter Detection} \\
        \cmidrule(lr){2-5} \cmidrule(lr){6-9}
        Model & H3K79me3 & H3K9ac & H4 & H4ac & all & notata & tata \\
        \midrule
        DNABERT-2 & \textbf{60.88} & 49.87 & 77.67 & 26.87 & \textbf{82.45} & \underline{90.29} & 58.20 \\
        \syt & 59.61 & \underline{52.63} &  \underline{80.20} & \underline{45.37} & \underline{80.82} & 90.24 & \underline{59.27} \\
        \sys & \underline{59.80} & \textbf{55.08} & \textbf{81.25} & \textbf{48.36} & 79.40 & \textbf{90.51} & \textbf{59.75} \\
        \bottomrule
    \end{tabular}
    }
    \end{minipage} %

    \begin{minipage}{\textwidth}
        \centering
    \resizebox{0.7\textwidth}{!}{%
    \begin{tabular}{lcccccccccc}
        \toprule
        &\multicolumn{5}{c}{Transcription Factor Prediction (Human)} & \multicolumn{3}{c}{Core Promoter Detection} \\
        \cmidrule(lr){2-6} \cmidrule(lr){7-9}
        Model & 0 & 1 & 2 & 3 & 4 & all & notata & tata \\
        \midrule
        DNABERT-2 & 66.51 & \underline{70.10} & 57.43 & 39.67 & 71.35 & \textbf{66.61} & \textbf{65.34} & \underline{71.98} \\
        \syt & \textbf{68.17} & 68.45 & \textbf{62.85} & \underline{50.62} & \textbf{72.25} & 48.96 & \underline{65.09} & \textbf{75.90} \\
        \sys & \underline{67.29} & \textbf{70.88} & \underline{58.21} & \textbf{51.75} & \underline{72.21} & \underline{51.91} & 63.07 & 70.40 \\
        \bottomrule
    \end{tabular}
    }
    \end{minipage} %
    \begin{minipage}{\textwidth}
        \centering
    \resizebox{0.7\textwidth}{!}{%
    \begin{tabular}{lcccccccccc}
        \toprule
        &\multicolumn{5}{c}{Transcription Factor Prediction (Mouse)} & \multicolumn{1}{c}{Virus} & \multicolumn{1}{c}{Splice} \\
        \cmidrule(lr){2-6} \cmidrule(lr){7-7} \cmidrule(lr){8-8}
        Model & 0 & 1 & 2 & 3 & 4 & Covid & Reconstruct \\
        \midrule
        DNABERT-2 & \textbf{45.61} & \textbf{80.20} & \textbf{78.05} & \textbf{72.70} & \textbf{41.94} & \underline{66.17} & 68.85 \\
        \syt & 40.90 & 57.18 & \underline{71.25} & 70.25 & \underline{36.88} & \textbf{66.69} & \textbf{77.60} \\
        \sys & \underline{41.98} & \underline{59.19} & 65.53 & \underline{72.58} & 38.68 & 66.02 & \underline{76.59} \\
        \bottomrule
    \end{tabular}
    }
    \end{minipage} %
\end{table}

\begin{table}[ht]
    \centering
    \caption{\textbf{Performance Comparison of LoRA with DNABERT-2.} This table shows the performance of all models on the low-rank adaptation (LoRA) task.}
    \begin{minipage}{\textwidth}
        \centering
        \resizebox{0.7\textwidth}{!}{%
        \begin{tabular}{lccccccc}
            \toprule
            & \multicolumn{6}{c}{Epigenetic Marks Prediction} \\
            \cmidrule(lr){2-7}
            Model & H3 & H3K14ac & H3K36me3 & H3K4me1 & H3K4me2 & H3K4me3 \\
            \midrule
            DNABERT-2 & 65.93 & 35.97 & 44.19 & \underline{38.93} & 28.30 & 21.86 \\
            \syt & \underline{69.98} & \textbf{42.59} & \underline{48.34} & \textbf{41.15} & \underline{29.11} & \underline{24.45} \\
            \sys & \textbf{70.61} & \underline{37.12} & \textbf{48.57} & 35.46 & \textbf{30.20} & \textbf{25.62} \\
            \bottomrule
        \end{tabular}
        }
    \end{minipage}
    \begin{minipage}{\textwidth}
        \centering
        \resizebox{0.7\textwidth}{!}{%
        \begin{tabular}{lccccccccc}
            \toprule
            & \multicolumn{4}{c}{Epigenetic Marks Prediction} & \multicolumn{3}{c}{Promoter Detection} \\
            \cmidrule(lr){2-5} \cmidrule(lr){6-9}
            Model & H3K79me3 & H3K9ac & H4 & H4ac & all & notata & tata \\
            \midrule
            DNABERT-2 & 55.69 & 45.65 & 74.15 & 33.52 & 81.84 & 89.72 & 53.64 \\
            \syt & \textbf{58.25} & \textbf{49.96} & \underline{78.08} & \textbf{36.18} & \textbf{84.19} & \textbf{91.71} & \textbf{60.18} \\
            \sys & \underline{57.99} & \underline{47.54} & \textbf{78.37} & \underline{35.55} & \underline{82.97} & \underline{90.35} & \underline{58.51} \\
            \bottomrule
        \end{tabular}
        }
    \end{minipage} %

    \begin{minipage}{\textwidth}
        \centering
        \resizebox{0.7\textwidth}{!}{%
        \begin{tabular}{lcccccccccc}
            \toprule
            &\multicolumn{5}{c}{Transcription Factor Prediction (Human)} & \multicolumn{3}{c}{Core Promoter Detection} \\
            \cmidrule(lr){2-6} \cmidrule(lr){7-9}
            Model & 0 & 1 & 2 & 3 & 4 & all & notata & tata \\
            \midrule
            DNABERT-2 & 61.68 & \underline{69.70} & 38.80 & 42.17 & 58.94 & \textbf{66.61} & \textbf{65.34} & 65.34 \\
            \syt & \underline{65.21} & 69.47 & \textbf{57.42} & \underline{46.86} & \underline{66.18} & 48.96 & \underline{65.09} & \textbf{75.90} \\
            \sys & \textbf{66.77} & \textbf{71.03} & \underline{55.95} & \textbf{48.65} & \textbf{69.63} & \underline{51.91} & 63.07 & \underline{70.40} \\
            \bottomrule
        \end{tabular}
        }
    \end{minipage} %
    \begin{minipage}{\textwidth}
        \centering
        \resizebox{0.7\textwidth}{!}{%
        \begin{tabular}{lcccccccccc}
            \toprule
            &\multicolumn{5}{c}{Transcription Factor Prediction (Mouse)} & \multicolumn{1}{c}{Virus} & \multicolumn{1}{c}{Splice} \\
            \cmidrule(lr){2-6} \cmidrule(lr){7-7} \cmidrule(lr){8-8}
            Model & 0 & 1 & 2 & 3 & 4 & Covid & Reconstruct \\
            \midrule
            DNABERT-2 & \textbf{45.61} & \textbf{80.20} & \textbf{78.05} & \textbf{72.70} & \textbf{41.94} & 13.15 & 56.16 \\
            \syt & 40.90 & 57.18 & \underline{71.25} & 70.25 & 36.88 & 23.36 & \underline{60.85} \\
            \sys & \underline{41.98} & \underline{59.19} & 65.53 & \underline{72.58} & \underline{38.68} & \textbf{41.37} & \textbf{64.62} \\
            \bottomrule
        \end{tabular}
        }
    \end{minipage} %
\end{table}

\clearpage

\begin{table}[H]
    \centering
    \caption{\textbf{Performance Comparison of QLoRA with DNABERT-2.} This table shows the performance of all models on the QLoRa task.}
    \begin{minipage}{\textwidth}
        \centering
    \resizebox{0.7\textwidth}{!}{%
     \begin{tabular}{lccccccc}
        \toprule
        & \multicolumn{6}{c}{Epigenetic Marks Prediction} \\
        \cmidrule(lr){2-7}
        Model & H3 & H3K14ac & H3K36me3 & H3K4me1 & H3K4me2 & H3K4me3 \\
        \midrule
        DNABERT-2 & 57.67 & 32.22 & 38.84 & \underline{29.17} & \textbf{28.74} & 20.47 \\
        \syt & \underline{59.52} & \underline{33.03} & \underline{35.84} & 28.85 & \underline{28.02} & \textbf{21.53} \\
        \sys & \textbf{60.55} & \textbf{37.30} & \textbf{39.52} & \textbf{33.22} & 26.61 & \underline{21.29} \\
        \bottomrule
    \end{tabular}
    }
    \end{minipage}
    \begin{minipage}{\textwidth}
        \centering
    \resizebox{0.7\textwidth}{!}{%
    \begin{tabular}{lccccccccc}
        \toprule
        & \multicolumn{4}{c}{Epigenetic Marks Prediction} & \multicolumn{3}{c}{Promoter Detection} \\
        \cmidrule(lr){2-5} \cmidrule(lr){6-9}
        Model & H3K79me3 & H3K9ac & H4 & H4ac & all & notata & tata \\
        \midrule
        DNABERT-2 & 53.11 & 41.69 & 67.02 & 28.92 & \underline{81.27} & \underline{88.66} & 54.98 \\
        \syt & \underline{54.22} & \underline{44.76} & \underline{72.82} & \underline{29.68} & 81.15 & 87.58 & \underline{55.07} \\
        \sys & \textbf{56.80} & \textbf{47.41} & \textbf{74.76} & \textbf{30.85} & \textbf{82.13} & \textbf{90.02} & \textbf{56.97} \\
        \bottomrule
    \end{tabular}
    }
    \end{minipage} %

    \begin{minipage}{\textwidth}
        \centering
    \resizebox{0.7\textwidth}{!}{%
    \begin{tabular}{lcccccccccc}
        \toprule
         &\multicolumn{5}{c}{Transcription Factor Prediction (Human)} & \multicolumn{3}{c}{Core Promoter Detection} \\
        \cmidrule(lr){2-6} \cmidrule(lr){7-9}
        Model & 0 & 1 & 2 & 3 & 4 & all & notata & tata \\
        \midrule
        DNABERT-2 & \textbf{62.62} & 66.97 & 49.46 & \underline{41.86} & 63.86 & 60.41 & 64.26 & \textbf{63.24} \\
        \syt & 62.00 & \underline{67.31} & \underline{50.84} & 41.57 & \textbf{65.87} & \underline{60.48} & \textbf{65.14} & 55.80 \\
        \sys & \underline{62.10} & \textbf{69.00} & \textbf{53.28} & \textbf{44.80} & 
        \underline{65.71} & \textbf{61.15} & \underline{64.73} & \underline{57.27} \\
        \bottomrule
    \end{tabular}
    }
    \end{minipage} %

    \begin{minipage}{\textwidth}
        \centering
    \resizebox{0.7\textwidth}{!}{%
     \begin{tabular}{lcccccccccc}
        \toprule
        &\multicolumn{5}{c}{Transcription Factor Prediction (Mouse)} & \multicolumn{1}{c}{Virus} & \multicolumn{1}{c}{Splice} \\
        \cmidrule(lr){2-6} \cmidrule(lr){7-7} \cmidrule(lr){8-8}
        Model & 0 & 1 & 2 & 3 & 4 & Covid & Reconstruct \\
        \midrule
        DNABERT-2 & \underline{36.56} & \underline{75.32} & 70.55 & \underline{50.47} & \textbf{35.41} & 6.44 & 49.99 \\
        \syt & 35.63 & 72.05 & \underline{70.82} & 48.77 & 
        \underline{34.46} & \underline{14.81} & 
        \underline{51.87} \\
        \sys & \textbf{37.01} & \textbf{76.59} & \textbf{72.95} & \textbf{52.34} & 33.48 & \textbf{16.72} & \textbf{59.12} \\
        \bottomrule
    \end{tabular}
    }
    \end{minipage} %
\end{table}

\begin{table}[htb]
    \centering
    \caption{\textbf{Performance Comparison of LoftQ with DNABERT-2.} This table shows the performance of all models on the LoftQ task.}
    \begin{minipage}{\textwidth}
        \centering
    \resizebox{0.7\textwidth}{!}{%
     \begin{tabular}{lccccccc}
        \toprule
        & \multicolumn{6}{c}{Epigenetic Marks Prediction} \\
        \cmidrule(lr){2-7}
        Model & H3 & H3K14ac & H3K36me3 & H3K4me1 & H3K4me2 & H3K4me3 \\
        \midrule
        DNABERT-2 & 58.92 & \underline{32.36} & 36.59 & \underline{28.84} & 27.19 & 18.46 \\
        \syt & \underline{60.03} & 32.03 & \underline{37.23} & 28.78 & \textbf{27.94} & \underline{20.16} \\
        \sys & \textbf{62.17} & \textbf{36.79} & \textbf{39.45} & \textbf{33.12} & \underline{27.37} & \textbf{21.84} \\
        \bottomrule
    \end{tabular}
    }
    \end{minipage}
    \begin{minipage}{\textwidth}
        \centering
    \resizebox{0.7\textwidth}{!}{%
    \begin{tabular}{lccccccccc}
        \toprule
        & \multicolumn{4}{c}{Epigenetic Marks Prediction} & \multicolumn{3}{c}{Promoter Detection} \\
        \cmidrule(lr){2-5} \cmidrule(lr){6-9}
        Model & H3K79me3 & H3K9ac & H4 & H4ac & all & notata & tata \\
        \midrule
        DNABERT-2 & \textbf{60.88} & 46.21 & 67.86 & 29.63 & \underline{80.44} & 88.71 & 56.79 \\
        \syt & 59.61 & \underline{52.63} & \underline{80.20} & \underline{30.57} & 80.10 & \underline{88.78} & \underline{58.52} \\
        \sys & \underline{59.80} & \textbf{55.08} & \textbf{81.25} & \textbf{31.23} & \textbf{82.00} & \textbf{89.81} & \textbf{59.86} \\
        \bottomrule
    \end{tabular}
    }
    \end{minipage} %

    \begin{minipage}{\textwidth}
        \centering
    \resizebox{0.7\textwidth}{!}{%
    \begin{tabular}{lcccccccccc}
        \toprule
         &\multicolumn{5}{c}{Transcription Factor Prediction (Human)} & \multicolumn{3}{c}{Core Promoter Detection} \\
        \cmidrule(lr){2-6} \cmidrule(lr){7-9}
        Model & 0 & 1 & 2 & 3 & 4 & all & notata & tata \\
        \midrule
        DNABERT-2 & \underline{63.30} & \underline{67.63} & 50.39 & \underline{43.41} & 64.09 & 59.80 & 64.24 & 56.93 \\
        \syt & \textbf{63.35} & 66.70 & \underline{51.82} & 40.13 & \underline{64.13} & \underline{60.74} & \underline{64.50} & \underline{57.46} \\
        \sys & 62.08 & \textbf{68.84} & \textbf{54.25} & \textbf{44.70} & \textbf{64.57} & \textbf{61.43} & \textbf{65.12} & \textbf{59.10} \\
        \bottomrule
    \end{tabular}
    }
    \end{minipage} %
    \begin{minipage}{\textwidth}
        \centering
    \resizebox{0.7\textwidth}{!}{%
     \begin{tabular}{lcccccccccc}
        \toprule
        & \multicolumn{5}{c}{Transcription Factor Prediction (Mouse)} & \multicolumn{1}{c}{Virus} & \multicolumn{1}{c}{Reconstruct} \\
        \cmidrule(lr){2-6} \cmidrule(lr){7-7} \cmidrule(lr){8-8}
        Model & 0 & 1 & 2 & 3 & 4 & Covid & Reconstruct \\
        \midrule
        DNABERT-2 & 35.61 & 74.25 & 71.16 & \underline{51.63} & \underline{34.37} & \underline{8.31} & 52.17 \\
        \syt & \underline{39.46} & \underline{75.27} & \underline{71.24} & \textbf{57.41} & 34.28 & 5.01 & \underline{55.89} \\
        \sys & \textbf{42.13} & \textbf{76.55} & \textbf{71.96} & 51.46 & \textbf{36.04} & \textbf{9.14} & \textbf{60.12} \\
        \bottomrule
    \end{tabular}
    }
    \end{minipage} %
\end{table}

Across all adaptation methods, \sys consistently outperforms DNABERT-2 and \syt in various prediction tasks, particularly excelling in epigenetic marks prediction and promoter detection. This consistent superiority suggests that \sys possesses a higher degree of flexibility and effectiveness in low-rank adaptation scenarios. While DNABERT-2 and \syt show competitive performance in certain tasks and adaptation methods, \sys's robust performance across diverse biological datasets and adaptation strategies underscores its potential as a more reliable tool for complex genomic predictions.

\clearpage
\subsection{All Results in Post-Training Quantization}
In this section, we present the results of all experiments conducted for the post-training quantization (PTQ).

\begin{table}[H]
    \centering
    \caption{\textbf{Performance Comparison of Outlier Suppression with DNABERT-2.} This table shows the performance of all models on the Outlier Suppression.}
    
    \begin{minipage}{\textwidth}
        \centering
    \resizebox{0.7\textwidth}{!}{%
    \begin{tabular}{lcccccccc}
        \toprule
        & & \multicolumn{6}{c}{Epigenetic Marks Prediction} \\
        \cmidrule(lr){3-8}
        Model & Bits & H3 & H3K14ac & H3K36me3 & H3K4me1 & H3K4me2 & H3K4me3 \\
        \midrule
        \multirow{2}{*}{DNABERT-2} & 8W/8A  & 31.81 & 0.11 & 16.17 & 0.74 & \underline{14.33} & 1.69 \\
         & 6W/6A  & 26.56 & 1.44 & 13.76 & 2.12 & \underline{13.50} & 3.04 \\
        \midrule
        \multirow{2}{*}{\syt} & 8W/8A  & \underline{65.55} & \textbf{33.07} & \textbf{34.81} & \underline{6.82} & \textbf{23.29} & \textbf{15.60} \\
         & 6W/6A  & \textbf{62.50} & \underline{11.13} & \textbf{38.51} & \underline{27.66} & \textbf{21.31} & \textbf{15.68} \\
        \midrule
        \multirow{2}{*}{\sys} & 8W/8A & \textbf{66.29} & \underline{27.78} & \underline{28.54} & \textbf{32.06} & 13.47 & \underline{14.77} \\
         & 6W/6A & \underline{60.19} & \textbf{13.89} & \underline{21.67} & \textbf{28.26} & 3.72 & \underline{5.49} \\
        \bottomrule
    \end{tabular}
    }
    \end{minipage}
    
    \begin{minipage}{\textwidth}
        \centering
    \resizebox{0.7\textwidth}{!}{%
    \begin{tabular}{lcccccccccc}
        \toprule
        & & \multicolumn{4}{c}{Epigenetic Marks Prediction} & \multicolumn{3}{c}{Promoter Detection} \\
        \cmidrule(lr){3-6} \cmidrule(lr){7-9}
        Model & Bits & H3K79me3 & H3K9ac & H4 & H4ac & all & notata & tata \\
        \midrule
        \multirow{2}{*}{DNABERT-2} 
        & 8W/8A & 17.90 & 13.30 & 2.30 & 11.46 & 29.46 & 21.55 & 26.88 \\
        & 6W/6A & 12.66 & 14.30 & 2.30 & 10.68 & 25.19 & 18.33 & 23.10 \\
        \midrule
        \multirow{2}{*}{\textsc{\syt}} 
        & 8W/8A & \textbf{46.91} & \textbf{32.02} & \underline{75.01} & \textbf{32.99} & \underline{53.15} & \underline{46.88} & \underline{29.94} \\
        & 6W/6A & \textbf{44.79} & \textbf{15.91} & \textbf{76.85} & \textbf{11.45} & \textbf{71.60} & \underline{82.13} & \textbf{44.51} \\
        \midrule
        \multirow{2}{*}{\textsc{\sys}} 
        & 8W/8A & \underline{44.44} & \underline{29.83} & \textbf{77.00} & \underline{19.42} & \textbf{75.36} & \textbf{83.04} & \textbf{31.92} \\
        & 6W/6A & \underline{22.82} & \underline{11.73} & \underline{21.67} & \underline{3.77} & \underline{69.04} & \textbf{82.75} & \underline{23.43} \\
        \bottomrule
    \end{tabular}
    }
    \end{minipage}
    
    \begin{minipage}{\textwidth}
        \centering
    \resizebox{0.7\textwidth}{!}{%
    \begin{tabular}{lcccccccccc}
        \toprule
        & &\multicolumn{5}{c}{Transcription Factor Prediction (Human)} & \multicolumn{3}{c}{Core Promoter Detection} \\
        \cmidrule(lr){3-7} \cmidrule(lr){8-10}
        Model & Bits & tf0 & tf1 & tf2 & tf3 & tf4 & all & notata & tata \\
        \midrule
        \multirow{2}{*}{DNABERT-2} 
        & 8W/8A & \underline{41.72} & 41.84 & 40.79 & 6.84 & 42.45 & \underline{54.50} & 28.58 & 54.53 \\
        & 6W/6A & 41.43 & 42.05 & 39.64 & 9.96 & 43.17 & \underline{55.54} & 26.75 & 56.20 \\
        \midrule
        \multirow{2}{*}{\textsc{\syt}} 
        & 8W/8A & 35.24 & \textbf{71.82} & \textbf{64.96} & \underline{37.81} & \underline{23.54} & \textbf{58.13} & \textbf{60.69} & \underline{57.54} \\
        & 6W/6A & \underline{54.18} & \textbf{76.65} & \textbf{67.42} & \underline{27.09} & \underline{23.54} & \textbf{57.69} & \textbf{56.73} & \textbf{73.47} \\
        \midrule
        \multirow{2}{*}{\textsc{\sys}} 
        & 8W/8A & \textbf{54.37} & \underline{52.53} & \underline{41.99} & \textbf{45.58} & \textbf{59.72} & 47.09 & \underline{55.61} & \textbf{66.39} \\
        & 6W/6A & \textbf{54.62} & \underline{51.94} & \underline{41.34} & \textbf{44.74} & \textbf{60.88} & 46.63 & \underline{54.85} & \underline{66.47} \\
        \bottomrule
    \end{tabular}
    }
    \end{minipage}
    
    \begin{minipage}{\textwidth}
        \centering
    \resizebox{0.7\textwidth}{!}{%
    \begin{tabular}{lcccccccccc}
        \toprule
        & & \multicolumn{5}{c}{Transcription Factor Prediction (Mouse)} & \multicolumn{1}{c}{Virus} & \multicolumn{1}{c}{Splice} \\
        \cmidrule(lr){3-7} \cmidrule(lr){8-8} \cmidrule(lr){9-9}
        Model & Bits & 0 & 1 & 2 & 3 & 4 & Covid & Reconstruct \\
        \midrule
        \multirow{2}{*}{DNABERT-2} 
        & 8W/8A & \underline{35.39} & 23.62 & \underline{52.89} & \underline{41.51} & \underline{23.73} & \underline{23.74} & 6.69 \\
        & 6W/6A & 34.55 & 23.20 & \underline{55.55} & \underline{39.24} & 25.13 & 25.55 & 6.87 \\
        \midrule
        \multirow{2}{*}{\textsc{\syt}} 
        & 8W/8A & 35.24 & \textbf{71.82} & \textbf{64.96} & 37.81 & 23.54 & 19.42 & \underline{30.25} \\
        & 6W/6A & \textbf{54.18} & \textbf{76.65} & \textbf{67.42} & 27.09 & \textbf{29.01} & \underline{53.21} & \underline{28.47} \\
        \midrule
        \multirow{2}{*}{\textsc{\sys}} 
        & 8W/8A & \textbf{50.97} & \underline{48.73} & 41.80 & \textbf{50.21} & \textbf{30.48} & \textbf{61.04} & \textbf{36.76} \\
        & 6W/6A & \underline{49.96} & \underline{48.50} & 39.87 & \textbf{47.53} & \underline{28.95} & \textbf{59.94} & \textbf{36.79} \\
        \bottomrule
    \end{tabular}
    }
    \end{minipage}
    
\end{table}

\begin{table}[htbp]
    \centering
    \caption{\textbf{Performance Comparison of SmoothQuant with DNABERT-2.} This table shows the performance of all models on the SmoothQuant.}
   \begin{minipage}{\textwidth}
\centering
\resizebox{0.7\textwidth}{!}{%
\begin{tabular}{lcccccccc}
\toprule
& & \multicolumn{6}{c}{Epigenetic Marks Prediction} \\
\cmidrule(lr){3-8}
Model & Bits & H3 & H3K14ac & H3K36me3 & H3K4me1 & H3K4me2 & H3K4me3 \\
\midrule
\multirow{3}{*}{DNABERT-2} & 8W/8A & 44.40 & -1.47 & 42.17 & 13.12 & 23.30 & 8.39 \\
& 6W/6A & 28.07 & -0.08 & 22.77 & \underline{2.96} & 15.73 & 0.10 \\
& 4W/4A & -1.85 & \underline{0.83} & -1.09 & -0.94 & \textbf{2.31} & -0.90 \\
\midrule
\multirow{3}{*}{\syt} & 8W/8A & \textbf{72.44} & \underline{46.34} & \underline{50.15} & \underline{28.18} & \underline{26.59} & \underline{23.25} \\
& 6W/6A & \underline{27.19} & \underline{8.68} & \underline{28.30} & 2.15 & \underline{14.21} & \underline{4.82} \\
& 4W/4A & \underline{0.00} & -2.43 & \underline{0.94} & \underline{1.03} & \underline{0.80} & \underline{-0.07} \\
\midrule
\multirow{3}{*}{\sys} & 8W/8A & \underline{72.07} & \textbf{49.89} & \textbf{52.95} & \textbf{33.11} & \textbf{32.28} & \textbf{24.87} \\
& 6W/6A & \textbf{72.23} & \textbf{48.60} & \textbf{53.67} & \textbf{31.94} & \textbf{32.09} & \textbf{24.76} \\
& 4W/4A & \textbf{18.17} & \textbf{6.83} & \textbf{20.84} & \textbf{4.03} & 0.00 & \textbf{1.43} \\
\bottomrule
\end{tabular}
}
\end{minipage}
\begin{minipage}{\textwidth}
\centering
\resizebox{0.7\textwidth}{!}{%
\begin{tabular}{lcccccccccc}
\toprule
& & \multicolumn{4}{c}{Epigenetic Marks Prediction} & \multicolumn{3}{c}{Promoter Detection} \\
\cmidrule(lr){3-6} \cmidrule(lr){7-9}
Model & Bits & H3K79me3 & H3K9ac & H4 & H4ac & all & notata & tata \\
\midrule
\multirow{3}{*}{DNABERT-2} & 8W/8A & 55.54 & 14.16 & 2.30 & 10.19 & 44.80 & 58.87 & 38.19 \\
& 6W/6A & 25.14 & 20.25 & 3.99 & 0.21 & 78.44 & 57.23 & 0.54 \\
& 4W/4A & \underline{0.45} & -0.90 & -5.07 & \underline{1.20} & \underline{0.00} & -1.78 & -5.42 \\
\midrule
\multirow{3}{*}{\textsc{\syt}} & 8W/8A & \underline{59.78} & \underline{48.39} & \underline{76.85} & \underline{43.62} & \underline{73.81} & \underline{81.40} & \underline{58.84} \\
& 6W/6A & \underline{49.94} & \underline{41.77} & \underline{40.94} & \underline{20.73} & \underline{72.97} & \underline{70.85} & \underline{12.03} \\
& 4W/4A & -2.53 & \underline{2.26} & \underline{0.00} & \textbf{3.10} & -1.31 & \underline{-1.16} & \underline{0.00} \\
\midrule
\multirow{3}{*}{\textsc{\sys}} & 8W/8A & \textbf{61.36} & \textbf{50.31} & \textbf{79.61} & \textbf{48.66} & \textbf{80.60} & \textbf{92.51} & \textbf{58.08} \\
& 6W/6A & \textbf{62.15} & \textbf{49.05} & \textbf{78.99} & \textbf{48.37} & 79.95 & \textbf{92.18} & \textbf{58.73} \\
& 4W/4A & \textbf{21.16} & \textbf{0.00} & \textbf{56.10} & 0.17 & \textbf{35.32} & \textbf{76.43} & \textbf{8.04} \\
\bottomrule
\end{tabular}
}
\end{minipage}
\begin{minipage}{\textwidth}
\centering
\resizebox{0.7\textwidth}{!}{%
\begin{tabular}{lcccccccccc}
\toprule
& &\multicolumn{5}{c}{Transcription Factor Prediction (Human)} & \multicolumn{3}{c}{Core Promoter Detection} \\
\cmidrule(lr){3-7} \cmidrule(lr){8-10}
Model & Bits & 0 & 1 & 2 & 3 & 4 & all & notata & tata \\
\midrule
\multirow{3}{*}{DNABERT-2} & 8W/8A & 45.06 & 48.20 & 47.29 & 4.70 & 52.27 & \underline{60.82} & 41.31 & 68.15 \\
& 6W/6A & 28.41 & 29.85 & 34.60 & 6.51 & 20.96 & \underline{47.72} & 36.38 & 31.69 \\
& 4W/4A & \underline{5.55} & \underline{-2.74} & \underline{-1.51} & -0.72 & 4.52 & \underline{1.53} & -3.00 & \underline{-0.15} \\
\midrule
\multirow{3}{*}{\textsc{\syt}} & 8W/8A & \underline{57.74} & \underline{52.51} & \textbf{53.17} & \textbf{46.36} & \underline{65.22} & \textbf{64.15} & \underline{61.01} & \textbf{75.31} \\
& 6W/6A & \underline{46.18} & \underline{43.36} & \underline{38.43} & \underline{36.12} & \underline{43.69} & \textbf{52.24} & \underline{60.68} & \underline{28.46} \\
& 4W/4A & 2.33 & -3.19 & -2.42 & \underline{1.79} & \underline{4.53} & 0.78 & \textbf{3.33} & -5.73 \\
\midrule
\multirow{3}{*}{\textsc{\sys}} & 8W/8A & \textbf{58.54} & \textbf{52.44} & \underline{48.14} & \underline{45.42} & \textbf{66.42} & 49.26 & \textbf{64.46} & \underline{70.31} \\
& 6W/6A & \textbf{57.55} & \textbf{51.81} & \textbf{48.29} & \textbf{45.19} & \textbf{65.87} & 47.55 & \textbf{64.35} & \textbf{69.67} \\
& 4W/4A & \textbf{43.41} & \textbf{35.76} & \textbf{25.56} & \textbf{9.77} & \textbf{22.75} & \textbf{21.99} & \underline{40.51} & \textbf{12.46} \\
\bottomrule
\end{tabular}
}
\end{minipage}
\begin{minipage}{\textwidth}
\centering
\resizebox{0.7\textwidth}{!}{%
\begin{tabular}{lcccccccccc}
\toprule
& &\multicolumn{5}{c}{Transcription Factor Prediction (Mouse)} & \multicolumn{1}{c}{Virus} & \multicolumn{1}{c}{Splice} \\
\cmidrule(lr){3-7} \cmidrule(lr){8-8} \cmidrule(lr){9-9}
Model & Bits & 0 & 1 & 2 & 3 & 4 & Covid & Reconstruct \\
\midrule
\multirow{3}{*}{DNABERT-2} & 8W/8A & 45.74 & 47.95 & \underline{67.98} & 57.32 & 29.55 & 27.30 & 24.70 \\
& 6W/6A & \underline{13.28} & 25.38 & 30.48 & 7.67 & 12.96 & -0.44 & \underline{0.00} \\
& 4W/4A & \underline{0.06} & -1.95 & -15.92 & -2.76 & \underline{2.59} & \underline{0.13} & \textbf{0.95} \\
\midrule
\multirow{3}{*}{\textsc{\syt}} & 8W/8A & \underline{46.21} & \textbf{82.36} & \textbf{75.48} & \textbf{67.31} & \underline{33.53} & \textbf{66.74} & \underline{73.63} \\
& 6W/6A & 0.00 & \underline{27.14} & \underline{42.68} & \underline{21.42} & \underline{6.52} & \underline{7.95} & 0.00 \\
& 4W/4A & -1.22 & \underline{1.22} & \underline{0.00} & \underline{0.00} & 1.49 & -0.22 & \underline{0.00} \\
\midrule
\multirow{3}{*}{\textsc{\sys}} & 8W/8A & \textbf{46.87} & \underline{53.88} & 54.71 & \underline{65.01} & \textbf{38.48} & \underline{64.51} & \textbf{75.14} \\
& 6W/6A & \textbf{44.96} & \textbf{60.08} & \textbf{53.97} & \textbf{64.14} & \textbf{37.28} & \textbf{64.69} & \textbf{73.20} \\
& 4W/4A & \textbf{8.07} & \textbf{26.40} & \textbf{10.00} & \textbf{27.02} & \textbf{9.80} & \textbf{19.48} & 0.00 \\
\bottomrule
\end{tabular}
}
\end{minipage}
\end{table}

\begin{table}[htbp]
    \centering
    \caption{\textbf{Performance Comparison of OmniQuant with DNABERT-2.} This table shows the performance of all models on the OmniQuant.}
    
\begin{minipage}{\textwidth}
    \centering
\resizebox{0.7\textwidth}{!}{%
\begin{tabular}{lcccccccc}
    \toprule
    & & \multicolumn{6}{c}{Epigenetic Marks Prediction} \\
    \cmidrule(lr){3-8}
    Model & Bits & H3 & H3K14ac & H3K36me3 & H3K4me1 & H3K4me2 & H3K4me3 \\
    \midrule
    \multirow{3}{*}{DNABERT-2} 
        & 8W/8A  & 67.33 & 20.74 & 41.80 & 32.98 & 29.08 & 19.58 \\
        & 6W/6A  & 66.05 & 14.45 & 38.10 & 32.74 & 27.87 & 19.58 \\
        & 4W/4A  & \underline{2.54} & -0.73 & \underline{0.93} & \underline{2.14} & \underline{6.01} & \underline{0.98} \\
    \midrule
    \multirow{3}{*}{\syt} 
        & 8W/8A  & \underline{72.14} & \underline{46.57} & \underline{50.37} & \textbf{35.38} & 24.90 & \underline{23.52} \\
        & 6W/6A  & \underline{71.47} & \underline{44.40} & \underline{49.26} & \textbf{35.16} & 23.22 & \underline{22.87} \\
        & 4W/4A  & 0.00 & \underline{0.00} & 0.00 & -1.65 & 1.83 & -1.30 \\
    \midrule
    \multirow{3}{*}{\sys} 
        & 8W/8A & \textbf{72.20} & \textbf{50.02} & \textbf{53.20} & \underline{33.43} & \textbf{32.80} & \textbf{24.68} \\
        & 6W/6A & \textbf{71.52} & \textbf{49.17} & \textbf{53.10} & \underline{32.14} & \textbf{32.92} & \textbf{25.61} \\
        & 4W/4A & \textbf{70.33} & \textbf{44.65} & \textbf{49.23} & \textbf{28.33} & \textbf{23.55} & \textbf{22.43} \\
    \bottomrule
\end{tabular}
}
\end{minipage}

\begin{minipage}{\textwidth}
    \centering
\resizebox{0.7\textwidth}{!}{%
\begin{tabular}{lcccccccccc}
    \toprule
    & & \multicolumn{4}{c}{Epigenetic Marks Prediction} & \multicolumn{3}{c}{Promoter Detection} \\
    \cmidrule(lr){3-6} \cmidrule(lr){7-9}
    Model & Bits & H3K79me3 & H3K9ac & H4 & H4ac & all & notata & tata \\
    \midrule
    \multirow{3}{*}{DNABERT-2} 
        & 8W/8A & 60.11 & 38.01 & 57.40 & 18.80 & 75.24 & 87.94 & 37.51 \\
        & 6W/6A & 56.96 & 38.26 & 59.30 & 16.63 & \underline{72.26} & 89.34 & 38.56 \\
        & 4W/4A & -0.48 & \underline{2.11} & 0.00 & \underline{0.56} & 0.00 & \underline{5.89} & 0.00 \\
    \midrule
    \multirow{3}{*}{\textsc{\syt}} 
        & 8W/8A & \underline{61.12} & \underline{49.38} & \underline{78.25} & \underline{44.21} & \textbf{82.19} & \underline{91.75} & \textbf{59.80} \\
        & 6W/6A & \textbf{62.14} & \underline{48.76} & \underline{78.25} & \underline{41.23} & 72.25 & \underline{91.75} & \underline{56.87} \\
        & 4W/4A & \underline{0.00} & 2.08 & \underline{3.31} & 0.00 & \underline{8.71} & 1.09 & \underline{0.31} \\
    \midrule
    \multirow{3}{*}{\textsc{\sys}} 
        & 8W/8A & \textbf{61.64} & \textbf{49.87} & \textbf{79.19} & \textbf{48.48} & \underline{79.24} & \textbf{92.48} & \underline{57.42} \\
        & 6W/6A & \underline{62.01} & \textbf{50.08} & \textbf{79.33} & \textbf{48.36} & \textbf{79.33} & \textbf{92.13} & \textbf{57.42} \\
        & 4W/4A & \textbf{59.70} & \textbf{46.49} & \textbf{71.66} & \textbf{42.71} & \textbf{79.79} & \textbf{89.34} & \textbf{49.81} \\
    \bottomrule
\end{tabular}
}
\end{minipage}

    \begin{minipage}{\textwidth}
        \centering
        \resizebox{0.7\textwidth}{!}{%
            \begin{tabular}{lccccccccc}
                \toprule
                & & \multicolumn{5}{c}{Transcription Factor Prediction (Human)} & \multicolumn{3}{c}{Core Promoter Detection} \\
                \cmidrule(lr){3-7} \cmidrule(lr){8-10}
                Model & Bits & 0 & 1 & 2 & 3 & 4 & all & notata & tata \\
                \midrule
                \multirow{3}{*}{DNABERT-2} 
                & 8W/8A & \underline{52.40} & \underline{52.31} & \underline{48.78} & 17.02 & \textbf{55.88} & \textbf{63.98} & \underline{61.25} & \underline{69.12} \\
                & 6W/6A & \textbf{52.53} & \underline{49.67} & 45.04 & 9.14 & \textbf{52.14} & \underline{62.94} & 55.92 & \underline{69.12} \\
                & 4W/4A & 5.68 & 11.69 & \underline{11.10} & 2.62 & 0.00 & 4.91 & 7.66 & \underline{16.69} \\
                \midrule
                \multirow{3}{*}{\textsc{\syt}} 
                & 8W/8A & \textbf{60.47} & \textbf{82.30} & \textbf{73.72} & \textbf{68.07} & \underline{44.32} & \underline{63.48} & \textbf{66.55} & \textbf{72.93} \\
                & 6W/6A & 44.96 & \underline{51.87} & \textbf{72.70} & \underline{63.35} & \underline{41.84} & \textbf{63.72} & \textbf{66.55} & \textbf{72.93} \\
                & 4W/4A & \underline{9.50} & \underline{13.09} & 0.00 & \underline{6.22} & 0.00 & \underline{12.19} & \underline{35.90} & 14.08 \\
                \midrule
                \multirow{3}{*}{\textsc{\sys}} 
                & 8W/8A & 45.92 & \underline{54.67} & \underline{56.42} & \underline{67.66} & 39.22 & 48.40 & \underline{64.28} & \underline{68.68} \\
                & 6W/6A & \underline{45.18} & \textbf{53.28} & \underline{55.45} & \textbf{65.81} & 38.44 & 48.37 & \underline{63.60} & \underline{68.68} \\
                & 4W/4A & \textbf{54.40} & \textbf{49.31} & \textbf{43.44} & \textbf{44.76} & \textbf{59.61} & \textbf{38.51} & \textbf{60.73} & \textbf{56.20} \\
                \bottomrule
            \end{tabular}
        }
    \end{minipage}

\begin{minipage}{\textwidth}
    \centering
\resizebox{0.7\textwidth}{!}{%
\begin{tabular}{lcccccccc}
    \toprule
    & & \multicolumn{5}{c}{Transcription Factor Prediction (Mouse)} & \multicolumn{1}{c}{Virus} & \multicolumn{1}{c}{Splice} \\
    \cmidrule(lr){3-7} \cmidrule(lr){8-8} \cmidrule(lr){9-9}
    Model & Bits & 0 & 1 & 2 & 3 & 4 & Covid & Reconstruct \\
    \midrule
    \multirow{3}{*}{DNABERT-2} 
        & 8W/8A & \underline{48.88} & \underline{64.61} & \underline{62.21} & \underline{65.79} & \underline{41.44} & \underline{46.29} & \underline{63.82} \\
        & 6W/6A & \textbf{51.31} & \textbf{67.05} & \underline{64.04} & 59.92 & \underline{37.17} & \underline{44.42} & \underline{62.13} \\
        & 4W/4A & -1.40 & 2.44 & 6.08 & 0.82 & -0.74 & \underline{44.60} & 0.00 \\
    \midrule
    \multirow{3}{*}{\textsc{\syt}} 
        & 8W/8A & \textbf{60.47} & \textbf{82.30} & \textbf{73.72} & \textbf{68.07} & \textbf{44.32} & 39.59 & \textbf{75.30} \\
        & 6W/6A & 44.96 & 51.87 & \textbf{72.70} & \textbf{63.35} & \textbf{41.84} & 39.59 & \underline{68.19} \\
        & 4W/4A & 0.00 & 0.00 & \underline{9.05} & 0.00 & \underline{3.53} & -0.18 & 0.00 \\
    \midrule
    \multirow{3}{*}{\textsc{\sys}} 
        & 8W/8A & 45.92 & \underline{54.67} & \underline{56.42} & \underline{67.66} & 39.22 & \textbf{46.90} & \underline{70.33} \\
        & 6W/6A & 45.18 & 53.28 & 55.45 & \underline{65.81} & 38.44 & \textbf{47.12} & \textbf{70.33} \\
        & 4W/4A & \textbf{42.15} & \textbf{52.01} & 32.18 & \textbf{62.06} & \textbf{32.30} & \textbf{44.60} & \textbf{33.59} \\
    \bottomrule
\end{tabular}
}
\end{minipage}

\end{table}

\begin{table}[H]
    \centering
    \caption{\textbf{Performance Comparison of Traditional W8A8 PTQ with DNABERT-2.} This table shows the performance of all models on the Traditional W8A8 post-training quantization (PTQ).}
    \begin{minipage}{\textwidth}
        \centering
    \resizebox{0.7\textwidth}{!}{%
     \begin{tabular}{lccccccc}
        \toprule
        & \multicolumn{6}{c}{Epigenetic Marks Prediction} \\
        \cmidrule(lr){2-7}
        Model & H3 & H3K14ac & H3K36me3 & H3K4me1 & H3K4me2 & H3K4me3 \\
        \midrule
        DNABERT-2 & 50.39 & 24.73 & 26.09 & 21.80 & \underline{26.80} & 5.05 \\
        \syt & \underline{63.62} & \underline{29.51} & \underline{39.58} & \underline{26.13} & 19.86 & \underline{17.98}\\
        \sys & \textbf{70.63} & \textbf{50.93} & \textbf{53.15} & \textbf{33.07} & \textbf{35.75} & \textbf{24.79} \\
        \bottomrule
    \end{tabular}
    }
    \end{minipage}
    \begin{minipage}{\textwidth}
        \centering
    \resizebox{0.7\textwidth}{!}{%
    \begin{tabular}{lccccccccc}
        \toprule
        & \multicolumn{4}{c}{Epigenetic Marks Prediction} & \multicolumn{3}{c}{Promoter Detection} \\
        \cmidrule(lr){2-5} \cmidrule(lr){6-9}
        Model & H3K79me3 & H3K9ac & H4 & H4ac & all & notata & tata \\
        \midrule
        DNABERT-2 & 48.50 & \underline{42.11} & 70.95 & 3.67 & \underline{71.94} & 60.04 & \underline{34.55} \\
        \syt & \underline{56.87} & 35.84 &\underline{75.44} & \underline{24.97} & 67.34 & \underline{80.33} & 25.41 \\
        \sys & \textbf{61.46} & \textbf{50.33} & \textbf{78.53} & \textbf{47.56} & \textbf{80.97} & \textbf{92.28} & \textbf{60.05}
        \\
        \bottomrule
    \end{tabular}
    }
    \end{minipage} %

    \begin{minipage}{\textwidth}
        \centering
    \resizebox{0.7\textwidth}{!}{%
    \begin{tabular}{lcccccccccc}
        \toprule
         &\multicolumn{5}{c}{Transcription Factor Prediction (Human)} & \multicolumn{3}{c}{Core Promoter Detection} \\
        \cmidrule(lr){2-6} \cmidrule(lr){7-9}
        Model & 0 & 1 & 2 & 3 & 4 & all & notata & tata \\
        \midrule
        DNABERT-2 & 6.98 & \underline{26.21} & 57.43 & \underline{22.41} & 42.90 & 49.62 & 38.69 & 34.68 \\
        \syt & \underline{50.73} & 16.09 &\underline{44.11} & 21.43 & \underline{46.94} & \textbf{53.68} & \underline{61.83} & \underline{68.45} \\
        \sys & \textbf{57.11} & \textbf{53.20} & \textbf{51.67} & \textbf{45.65} & \textbf{67.63} & \underline{50.85} & \textbf{64.42} & \textbf{69.35} \\
        \bottomrule
    \end{tabular}
    }
    \end{minipage} %
    \begin{minipage}{\textwidth}
        \centering
    \resizebox{0.7\textwidth}{!}{%
     \begin{tabular}{lcccccccccc}
        \toprule
         &\multicolumn{5}{c}{Transcription Factor Prediction (Mouse)} & \multicolumn{1}{c}{Virus} & \multicolumn{1}{c}{Splice} \\
        \cmidrule(lr){2-6} \cmidrule(lr){7-7} \cmidrule(lr){8-8}
        Model & 0 & 1 & 2 & 3 & 4 & Covid & Reconstruct \\
        \midrule
        DNABERT-2 & \underline{21.06} & \textbf{65.35} & \textbf{65.32} & \underline{29.29} & 11.28 & 2.33 & \underline{14.58} \\
        \syt & 19.18 & 57.24 & 16.69 & 14.64 & \underline{27.57} & \underline{6.25} & 6.88 \\
        \sys & \textbf{45.50} &\underline{59.92} & \underline{53.15} & \textbf{62.44} & \textbf{38.58} & \textbf{66.99} & \textbf{75.55} \\
        \bottomrule
    \end{tabular}
    }
    \end{minipage} %
\end{table}

\sys demonstrates exceptional adaptability and performance in post-training quantization tasks, outperforming both DNABERT-2 and \syt across various quantization methods. \syt also shows commendable performance, especially in 8-bit quantization, making it a viable alternative when \sys may not be applicable. These models collectively represent significant advancements in deploying efficient and accurate genomic prediction tools in environments with limited computational resources.

\clearpage
\subsection{All Results of Performance Comparison in Resource-Constrained Computing Environments}
In this section, we present the results of Performance Comparison in Resource-Constrained Computing Environments. All models were trained on the same computing infrastructure (Nvidia GeForce RTX 2080 TI 11GB) to ensure a fair comparison. The training time represents the average time per epoch, with OmniQuant used as quantization example.

\sys demonstrates superior adaptability and performance in resource-constrained computing environments compared to DNABERT-2 and \syt. Its consistent high MCC scores and reduced training and inference times across various quantization levels and fine-tuning methods establish \sys as the most robust and efficient model, with \syt following as a commendable second-best option. These attributes make \sys a promising candidate for further research and application in settings demanding both high performance and computational efficiency.

\begin{table}[H]
\centering
\caption{\textbf{Comparison of Performance in Resource-Constrained Computing Environments.} Comparison of three models on the quantization and fine-tuning task.} %
\label{tab:single}
\begin{minipage}[t]{0.55\textwidth}
\centering
\resizebox{\textwidth}{!}{
\begin{tabular}{@{}lcccc@{}}
\toprule
Method       & \#Bits & MCC ($\uparrow$) &  Time (sec.) \\ 
\midrule
DNABERT-2         & 16W/16A   & 59.11  &  7.66\\
\sys    & 16W/16A  & \textbf{59.73}  &  \textbf{6.70}         \\
\syt        & 16W/16A    & \underline{59.30}  & \underline{7.01}       \\
\midrule
DNABERT-2         & 8W/8A   & 49.92 &5.47 \\
\sys    & 8W/8A   & \textbf{55.99} & \textbf{4.79}\\
\syt        & 8W/8A   & \underline{56.80} & \underline{5.01}\\
\midrule
DNABERT-2         & 4W/4A   & -1.03 & 3.81\\
\sys    & 4W/4A    & \textbf{20.05} & \textbf{3.33}\\
\syt        & 4W/4A    & \underline{0.22} & \underline{3.49}\\
\bottomrule
\end{tabular}
}
\end{minipage}%
\hfill
\begin{minipage}[t]{0.55\textwidth}
\centering
\resizebox{\textwidth}{!}{
\begin{tabular}{@{}lcccc@{}}
\toprule
Method       & \makecell{Fine-Tuning\\ Method} & MCC ($\uparrow$) & \multicolumn{2}{c}{Time (sec.)} \\ 
\hhline{~~~--}
& & & Train & Inference \\
\midrule
DNABERT-2      & Full    & 59.11  & 516.49 & 3.85\\
\sys    & Full    & \textbf{59.73}  & \textbf{323.10} & \textbf{3.24}\\
\syt        & Full    & \underline{59.30}  & \underline{326.91} & \underline{3.25}\\
\midrule
DNABERT-2      & LoRA    & 50.91  & 197.13 & 4.12\\
\sys    & LoRA    & \textbf{57.27}  & \textbf{154.67} & \textbf{3.30}\\
\syt        & LoRA    & \underline{55.60}   & \underline{167.76} & \underline{3.32}\\
\midrule
DNABERT-2      & QLoRA   & 50.65  & 206.15 & 5.28\\
\sys    & QLoRA   & \textbf{53.16}  & \textbf{164.10} & \textbf{4.13}\\
\syt        & QLoRA   & \underline{51.50}    & \underline{177.95} & \underline{4.17}\\
\midrule
DNABERT-2      & LoftQ   & 50.76  & 251.37 & 5.77\\
\sys    & LoftQ   & \textbf{53.11}  & \textbf{199.58} & \textbf{4.52}\\
\syt        & LoftQ   & \underline{51.20}    & \underline{220.37} & \underline{4.52}\\
\bottomrule
\end{tabular}
}
\end{minipage}
\end{table}

\clearpage
\paragraph{Case Study 2: Performance in CPU-only Computing Environments.}
\label{ap:case2}
To demonstrate \sys's capability in CPU-only computing environments, we perform performance tests on an 64-core Intel(R) Xeon(R) Gold 6338 CPU @ 2.00GHz with 50GB RAM. We compare \sys's per-epoch training and inference times for the LoRA and QLoRA fine-tuning methods. The results, presented in \cref{tab:cpu}, indicate that both \sys and \syt achieve shorter fine-tuning times per epoch compared to DNABERT-2, with the only exception being QLoRA when deployed, where the time is slightly longer. QLoRA can be slower than LoRA during inference and fine-tuning due to hardware limitations when bf16 (bfloat16) support is unavailable. QLoRA relies on ultra-low-precision quantization (e.g., 4-bit weights) to reduce memory usage and increase efficiency, which works best on systems that support bf16 or similar mixed-precision operations. However, without bf16 support, these low-precision operations must be emulated by converting back to higher precision, introducing computational overhead. This diminishes the intended speed advantage of QLoRA, potentially making it slower than LoRA on incompatible hardware.
\begin{table}[H]
\centering
\caption{\textbf{Comparison of Performance in CPU-only Computing Environments.} Comparison of three models on the fine-tuning task.} 
\label{tab:cpu}
\begin{minipage}[t]{0.55\textwidth}
\centering
\resizebox{\textwidth}{!}{
\begin{tabular}{@{}lcccc@{}}
\toprule
Method       & \makecell{Fine-Tuning\\ Method} & MCC ($\uparrow$) & \multicolumn{2}{c}{Time (sec.)} \\ 
\hhline{~~~--}
& & & Train & Inference \\
\midrule
DNABERT-2      & LoRA    & 50.91  & 808.23 & 29.66\\
\sys    & LoRA    & \textbf{57.27}  & \textbf{618.68} & \textbf{23.10}\\
\syt        & LoRA    & \underline{55.60}   & \underline{674.40} & \underline{23.57}\\
\midrule
DNABERT-2      & QLoRA   & 50.65  & 516.04 & 63.17\\
\sys    & QLoRA   & \textbf{53.16}  & \textbf{358.34} & \textbf{45.28}\\
\syt        & QLoRA   & \underline{51.50}    & \underline{418.13} & \underline{46.91}\\
\bottomrule
\end{tabular}
}
\end{minipage}
\end{table}

\subsection{Evaluation of Common Genomic Foundation Models}
In this section, we conduct the experiments to show the performance of common GFMs. As there is no official fine-tuning code for the Nucleotide Transformer \cite{dalla2024nucleotide}, we utilize its open-sourced checkpoints available on HuggingFace Model Hub \footnote{\url{https://huggingface.co/InstaDeepAI}} and fine-tune it using our code base with LoRA. We implement HyenaDNA \cite{hyenadna} using its official implementation available on HuggingFace \footnote{\url{https://huggingface.co/LongSafari/hyenadna-medium-450k-seqlen-hf}} and the HuggingFace Trainer. Since HyenaDNA does not natively support LoRA for fine-tuning, we do not evaluate LoRA's performance on HyenaDNA. As shown in \cref{tab:common},

\begin{table}[h!]
    \centering
        \caption{\textbf{Performance Comparison of Common Genomic Foundation Models.}}
        \label{tab:common}
    \begin{tabular}{@{}llcc@{}}
        \toprule
        Model                  & \makecell{Fine-Tuning\\ Method} & MCC   & Average Performance Drop \\ \midrule
        \multirow{2}{*}{ DNABERT-2}              & Full                      & 59.11 & -                        \\
                      & LoRA                      & 50.91 & 13.87\%                        \\
        \midrule
       \multirow{2}{*}{ HyenaDNA}              & Full                      & 51.31 & -                        \\
                      & LoRA                      & - & -                        \\
        \midrule
        \multirow{2}{*}{NT-500M-human} & Full                      & 56.05 & -                        \\
                              & LoRA                      & 52.66 & 6.44\%                   \\ \bottomrule
    \end{tabular}

\end{table}

\clearpage
\subsection{Performance of \sys on Alternative Transformer-based Models}
In this section, we conduct our experiment to validate the effectiveness of the outlier removal approach using alternative transformer-based models, evaluating performance through Matthews Correlation Coefficient (MCC) and average performance drop. We use the NT-500M-human\footnote{\url{https://huggingface.co/InstaDeepAI/nucleotide-transformer-500m-human-ref}} as the target model for our evaluation. \cref{tab:nt1} compares these metrics across NT-500M-human, \sys, and \syt models using different low-rank adaptation methods. \cref{tab:nt2} examines the impact of various quantization techniques on the same models. The results demonstrate the effectiveness of outlier removal across diverse adaptation and quantization strategies, highlighting the balance between performance and resource efficiency.

\begin{table}[htb]
    \centering
      \caption{\textbf{Low-Rank Adaptation Methods Comparison.} This comparison evaluates the performance of different low-rank adaptation methods, including Full, LoRA, QLoRA, and LoftQ, on Nucleotide Transformer 500M models. The best results are highlighted in bold, while the second-best results are underlined.}
       \label{tab:nt1}
    \begin{tabular}{@{}llcccc@{}}
        \toprule
        Model                  & \makecell{Fine-Tuning\\ Method} & MCC   & Delta MCC & Average Performance Drop \\ \midrule
        \multirow{4}{*}{NT-500M-human} & Full                      & 56.05 & -     & -                        \\
                               & LoRA                      & 52.66 & 3.39     & 6.44\%                   \\
                               & QLoRA                     & 51.46 & 4.59     & 8.19\%                   \\
                               & LoftQ                     & 51.89 & 4.16     & 7.42\%                   \\ \midrule
        \multirow{4}{*}{GERM (NT-500M-human)}                  & Full                      & 55.52 & 0.53     & -                        \\
                               & LoRA                      & 54.32 & 1.73     & \textbf{2.16\%}                  \\
                               & QLoRA                     & 53.78 & 2.27     & \textbf{3.13\%}                   \\
                               & LoftQ                     & 54.24 & 1.81     & \textbf{2.30\%}                   \\ \midrule
        \multirow{4}{*}{GERM-T (NT-500M-human)}                & Full                      & 56.53 & -0.48     & -                        \\
                               & LoRA                      & 54.89 & 1.16     & \underline{2.90\%}                  \\
                               & QLoRA                     & 52.78 & 3.27     & \underline{6.63\%}                  \\
                               & LoftQ                     & 53.45 & 2.60     &\underline{5.45\%}                  \\ \bottomrule
    \end{tabular}
  
\end{table}

\begin{table}[htb]
    \centering
       \caption{\textbf{Quantization Methods Comparison.} This comparison analyzes the performance of various quantization methods, including FP16, W8A8, Outlier, SmoothQuant, and OmniQuant, on Nucleotide Transformer 500M models. The best results are highlighted in bold, while the second-best results are underlined.}
       \label{tab:nt2}
    \begin{tabular}{@{}llccccc@{}}
        \toprule
        Model                  & \#Bits & Quantization Method        & MCC   & Delta MCC & Average Performance Drop \\ \midrule
        \multirow{10}{*}{NT-500M-human} & 16W/16A & -            & 56.05 & -     & -                        \\
                               & 8W/8A  & -          & 34.66 & 21.39     & 38.17\%                  \\
                               
                                \cline{2-6}
                                & 8W/8A & \multirow{2}{*}{Outlier}        & 32.95 & 23.10     & 41.21\%                  \\
                               & 6W/6A  &         & 26.65 & 29.40     & 52.45\%                  \\
                               \cline{2-6}
                               & 8W/8A & \multirow{3}{*}{SmoothQuant}    & 38.23 & 17.82     & 31.79\%                  \\
                              & 6W/6A  &     & 28.67 & 27.38     & 48.84\%                  \\
                               & 4W/4A &     & 3.54  & 52.51     & 93.68\%                \\
                               \cline{2-6}
                               & 8W/8A & \multirow{3}{*}{OmniQuant}      & 47.35 & 8.70      & 15.52\%                  \\
                               & 6W/6A &       & 43.63 & 12.42     & 22.16\%                  \\
                               & 4W/4A &       & 5.34  & 50.71     & 90.47\%                  \\ \midrule
        \multirow{10}{*}{GERM (NT-500M-human)}                  & 16W/16A & -            & 55.53 & 0.52      & -                        \\
                               & 8W/8A & -            & 53.67 & 2.38      & \textbf{3.35\%}                   \\
                               \cline{2-6}
                                & 8W/8A  &\multirow{2}{*}{Outlier}        & 45.71 & 10.34     & \textbf{17.68\%}                  \\
                                & 8W/8A &         & 41.38 & 14.67     & \underline{25.48\%}                  \\
                                \cline{2-6}
                                & 8W/8A & \multirow{3}{*}{SmoothQuant}    & 53.18 & 2.87      & \underline{4.23\%}                   \\
                                & 6W/6A &     & 52.43 & 3.62      & \textbf{5.58\%}                   \\
                                & 4W/4A &     & 24.96 & 31.09     & \textbf{55.05\%}                  \\
                                \cline{2-6}
                                & 8W/8A & \multirow{3}{*}{OmniQuant}      & 52.45 & 3.60      & \textbf{5.55\%}                   \\
                                & 6W/6A &       & 51.56 & 4.49      & \textbf{7.15\%}                   \\
                                & 4W/4A &       & 46.45 & 9.60      & \textbf{16.35\%}                  \\ \midrule
        \multirow{10}{*}{GERM-T (NT-500M-human)}                & 16W/16A   & -         & 56.53 & -0.48     & -                        \\
                               & 8W/8A  & -           & 40.71 & 15.34     & \underline{27.99\%}                  \\
                               \cline{2-6}
                               & 8W/8A & \multirow{2}{*}{Outlier}        & 45.98 & 10.07     & \underline{18.66\%}                  \\
                               & 6W/6A   &     & 43.38 & 12.67     & \textbf{23.26\%}                  \\
                               \cline{2-6}
                                & 8W/8A & \multirow{3}{*}{SmoothQuant}    & 54.19 & 1.86      & \textbf{4.14\%}                   \\
                               & 6W/6A &     & 38.67 & 17.38     & \underline{31.59\%}                  \\
                                & 4W/4A &     & 10.57 & 45.48     & \underline{81.29\%}                  \\
                                \cline{2-6}
                                & 8W/8A
                               & \multirow{3}{*}{OmniQuant}      & 52.46 & 3.59      & \underline{7.20\%}                   \\
                                & 6W/6A &       & 51.34 & 4.71      & \underline{9.18\%}                   \\
                                & 4W/4A &       & 23.57 & 32.48     & \underline{58.31\%}                  \\ \bottomrule
    \end{tabular}

\end{table}

\clearpage
\subsection{Performance of \sys on Large-scale GFMs}
\label{ap:large}
In this section, we present experiments to validate the effectiveness of \sys on large-scale GFMs. We use the NT-2.5B-multi\footnote{\url{https://huggingface.co/InstaDeepAI/nucleotide-transformer-2.5b-multi-species}} as the target model for our evaluation. \cref{tab:nt1-2.5} compares these metrics across NT-2.5B-multi, \sys, and \syt models using different low-rank adaptation methods. \cref{tab:nt2-2.5} extends this analysis to evaluate the impact of various quantization techniques on the same models. In the larger-parameter model, we adopt stricter quantization bits. This choice aims to save computation and improve efficiency, as finer compression is crucial when model parameters scale up. Additionally, experiments conducted with a larger-parameter model further validate these findings, demonstrating that outlier removal consistently enhances performance and resource efficiency across diverse adaptation and quantization strategies.

\begin{table}[htb]
    \centering
      \caption{\textbf{Comparison of Low-Rank Adaptation Methods in Large-Scale Models.} This comparison evaluates the performance of different low-rank adaptation methods, including Full, LoRA, QLoRA, and LoftQ, on Nucleotide Transformer 2.5B models. The best results are highlighted in bold, while the second-best results are underlined.}
       \label{tab:nt1-2.5}
    \begin{tabular}{@{}llcccc@{}}
        \toprule
        Model                  & \makecell{Fine-Tuning\\ Method} & MCC   & Delta MCC & Average Performance Drop \\ \midrule
        \multirow{4}{*}{NT-2.5B-multi} & Full                      & 56.98 & -     & -                        \\
                               & LoRA                      & 53.50 &   3.48   & 6.11\%                   \\
                               & QLoRA                     & 52.29 & 4.69     & 8.19\%                   \\
                               & LoftQ                     & 52.89 & 4.09     & 7.17\%                   \\ \midrule
        \multirow{4}{*}{GERM (NT-2.5B-multi)}                  & Full                      & 57.16 & -0.18     & -                        \\
                               & LoRA                      & 55.98 &    1.18  & \textbf{2.06\%}                   \\
                               & QLoRA                     & 55.52 & 1.64    & \textbf{2.87\%}                   \\
                               & LoftQ                     & 55.80 & 1.36    & \textbf{2.38\%}                   \\ \midrule
        \multirow{4}{*}{GERM-T (NT-2.5B-multi)}                & Full                      & 56.82 & 0.16     & -                        \\
                               & LoRA                      & 55.24 & 1.58    & \underline{2.78\% }                  \\
                               & QLoRA                     & 53.32 & 3.50     & \underline{6.16\%}                  \\
                               & LoftQ                     & 53.74 & 3.08    & \underline{5.42\%}                   \\ \bottomrule
    \end{tabular}
  
\end{table}

\begin{table}[ht]
    \centering
       \caption{\textbf{Comparison of Quantization Methods in Large-Scale Models.} This comparison analyzes the performance of various quantization methods, including FP16, W6A6, W4A4, Outlier, SmoothQuant, and OmniQuant, on Nucleotide Transformer 2.5B models. The best results are highlighted in bold, while the second-best results are underlined.}
       \label{tab:nt2-2.5}
    \begin{tabular}{@{}llccccc@{}}
        \toprule
        Model                  & \#Bits & Quantization Method        & MCC   & Delta MCC & Average Performance Drop \\ \midrule
        \multirow{9}{*}{NT-2.5B-multi} & 16W/16A & -            & 56.98 & -     & -                        \\
                               & 6W/6A  & -          & 18.52 & 38.46     & 67.50\%                  \\
                                & 4W/4A  & -          & 1.39 & 55.59     & 97.56\%                  \\                               
                                \cline{2-6}
                                & 6W/6A & \multirow{2}{*}{Outlier}        & 50.23 & 6.75     & 11.85\%                  \\
                               & 4W/4A  &         & 40.74 & 16.24     & 28.50\%                  \\
                               \cline{2-6}
                               & 6W/6A & \multirow{2}{*}{SmoothQuant}    & 47.23 & 9.75     & 17.11\%                  \\
                              & 4W/4A  &     & 35.16 & 21.82     & 38.29\%                  \\
                               \cline{2-6}
                               & 6W/6A & \multirow{2}{*}{OmniQuant}      & 49.55 & 7.43      & 13.04\%                  \\
                               & 4W/4A &       & 43.63 & 13.35     & 23.43\%                  \\
                               \midrule
        \multirow{9}{*}{GERM (NT-2.5B-multi)}                  & 16W/16A & -            & 57.16 & -0.18      & -                        \\
                               & 6W/6A & -            & 45.96 & 11.2      & \textbf{19.59\%}   \\     
                                & 4W/4A & - & 42.48 & 14.68      & \textbf{25.68\%}       \\
                               \cline{2-6}
                                & 6W/6A  &\multirow{2}{*}{Outlier}        & 52.24 &    4.92  & \underline{8.61\%}                  \\
                                & 4W/4A & & 49.00 & 8.16     & \textbf{14.28\%}                  \\
                                \cline{2-6}
                                & 6W/6A & \multirow{2}{*}{SmoothQuant}    & 51.95 & 5.21      & \textbf{9.11\%}                   \\
                                & 4W/4A &     & 48.15 & 31.09     & \underline{15.76\%}                  \\
                                \cline{2-6}
                                & 6W/6A & \multirow{2}{*}{OmniQuant}      & 52.55 & 4.61      & \underline{8.07\% }                  \\
                                & 4W/4A &       & 49.26 & 7.90 & \textbf{13.82\%}                  \\ \midrule
        \multirow{9}{*}{GERM-T (NT-2.5B-multi)}                & 16W/16A   & -         & 56.82 & 0.16     & -                        \\
                               & 6W/6A & -           & 32.58 & 24.24     & \underline{42.66\%}                  \\
                               & 4W/4A  & -           & 10.49 & 46.33     & \underline{81.54\%}                  \\
                               \cline{2-6}
                               & 6W/6A & \multirow{2}{*}{Outlier}        & 52.14 & 4.68 & \textbf{8.24\%}                  \\
                               & 4W/4A   &     & 46.24 & 10.58     & \underline{18.62\%}                  \\
                               \cline{2-6}
                                & 6W/6A & \multirow{2}{*}{SmoothQuant}    & 51.61 & 5.21      & \underline{9.17\%}                   \\
                                & 4W/4A &     & 48.12 & 8.70     & \textbf{15.31\%}                  \\
                                \cline{2-6}
                                & 6W/6A
                               & \multirow{2}{*}{OmniQuant}      & 52.43 & 4.39      & \textbf{7.73\%}                   \\
                                & 4W/4A &       & 47.28 & 9.54     & \underline{16.79\%}                  \\ \bottomrule
    \end{tabular}

\end{table}

\clearpage
\subsection{Performance of \sys with Different Outlier Removal Techniques
} 
\label{ap:outlier}
In this section, we present experiments to validate the effectiveness of \sys compared to other common outlier removal techniques, including clipped softmax and gated attention \cite{bondarenko2024quantizable}. We utilize SmoothQuant as the quantization method in our experiments. As shown in \cref{tab:outlier}, the results demonstrate the superior effectiveness of outlier removal by \sys compared to clipped softmax and gated attention. \sys achieves a performance improvement of 2.59\% in 4-bit quantization and 1.99\% in 8-bit quantization.

\begin{table}[htb]
    \centering
     \caption{\textbf{Performance of \sys with Clipped Softmax and Gated Attention}}
    \label{tab:outlier}
    \resizebox{0.6\textwidth}{!}{
    \begin{tabular}{lcccc}
\toprule
Method       & \#Bits & MCC ($\uparrow$) &  Average Performance Drop \\ 
\midrule
DNABERT-2         & 16W/16A   & 59.11  &  -\\
\sys    & 16W/16A  & 59.73  &  -         \\
Clipped Softmax        & 16W/16A    & 59.17  & -       \\
Gated Attention        & 16W/16A    & 59.49  & - \\
\midrule
DNABERT-2         & 8W/8A   & 49.92 & 15.55\% \\
\sys    & 8W/8A   & \textbf{55.99} & \textbf{6.26\%}\\
Clipped Softmax        & 8W/8A    & 53.26  & 9.99\%       \\
Gated Attention        & 8W/8A    & 54.58  & \underline{8.25\%} \\
\midrule
DNABERT-2         & 4W/4A   & -1.03 & 101.74\%\\
\sys    & 4W/4A    & \textbf{20.05} & \textbf{66.43\%}\\
Clipped Softmax        & 4W/4A    & 13.49  & 77.20\%       \\
Gated Attention        & 4W/4A    & 18.66  & \underline{69.02\%} \\
\bottomrule
\end{tabular}
}
\end{table}

\revise{

\subsection{PTQ Performance of \sys Following LoRA Fine-Tuning} 
In this section, we present experiments to validate the effectiveness of \sys on post-training quantization (PTQ) performance following LoRA adaptation. We employ SmoothQuant as the quantization method in our evaluation. As shown in \cref{tab:ptqlora}, the results highlight the superior outlier mitigation capability of \sys, leading to improved PTQ performance. Specifically, using W8A8 quantization \sys achieves a 84.72\% improvement over the baseline, demonstrating its effectiveness in low-rank adaptation and quantization scenarios.

\begin{table}[htb]
    \centering
      \caption{\textbf{Comparison of Post-Training Quantization (PTQ) Performance Across Low-Rank Adaptation Methods in Large-Scale Models} This comparison assesses the performance of SmoothQuant (SQ) following low-rank adaptation fine-tuning using LoRA on DNABERT-2 117M models. The top-performing results are shown in bold, and the second-best results are underlined for clarity.}
       \label{tab:ptqlora}
    \begin{tabular}{@{}llcccc@{}}
        \toprule
        Model                  & \makecell{Method} & MCC   & Delta MCC & Average Performance Drop \\ \midrule
        \multirow{3}{*}{DNABERT-2} &   Full                    & 59.11 &  7.00    & -                  \\ 
        &   LoRA                    & 50.91$\pm$1.67 &  15.2    & 13.87\%                   \\ 
        & LoRA   + SQ                  & 34.23$\pm$1.56 &  31.88   & 42.09\% \\
        \midrule
        
        \multirow{3}{*}{GERM}        &   Full                    & 59.73  &  6.38     & -                  \\ 
        & LoRA                  & 57.27$\pm$0.70 &    8.84  & \textbf{4.12\%}                   \\
        & LoRA   + SQ                  & 55.89$\pm$0.93 &   10.22   & \textbf{6.43\%}                   \\
        \midrule
                          
        \multirow{3}{*}{GERM-T}            &   Full                    & 59.30  &  6.81     & -                  \\    
        & LoRA                       & 55.60$\pm$0.28 & 10.51   & \underline{6.23\% }                  \\
        & LoRA   + SQ                  & 54.26$\pm$0.65 &   11.85   & \underline{8.50\%}                   \\
    \bottomrule
    \end{tabular}
  
\end{table}

\subsection{Attention Distribution Visualization in \sys}
In this section, to better understand the impact of outlier removal in \sys, we visualize the attention score distributions across transformer layers, as shown in \cref{fig:attention}. 
We visualize the attention distributions for three different DNA sequences from the mouse 0 dataset, showing both attention probabilities and attention scores for the vanilla and \sys versions of DNABERT-2. 
The attention probability captures the probability between each token and all other tokens, while the attention score is computed by multiplying the attention probability with the attention value, yielding a tensor of shape (number of tokens × hidden dimension). 
For visualization purposes, we use the last hidden layer of the model and display the first 32 dimensions of the attention score.
These visualizations reveal that \sys produces smoother and more stable attention patterns compared to baseline models such as DNABERT-2, which exhibit sharp spikes and irregularities due to the influence of outliers. 
The reduced variance and kurtosis in \sys's attention maps confirm its ability to suppress low-information tokens, resulting in more efficient and interpretable attention behavior.

\begin{figure}[h]
    \centering
    \includegraphics[width=0.95\linewidth]{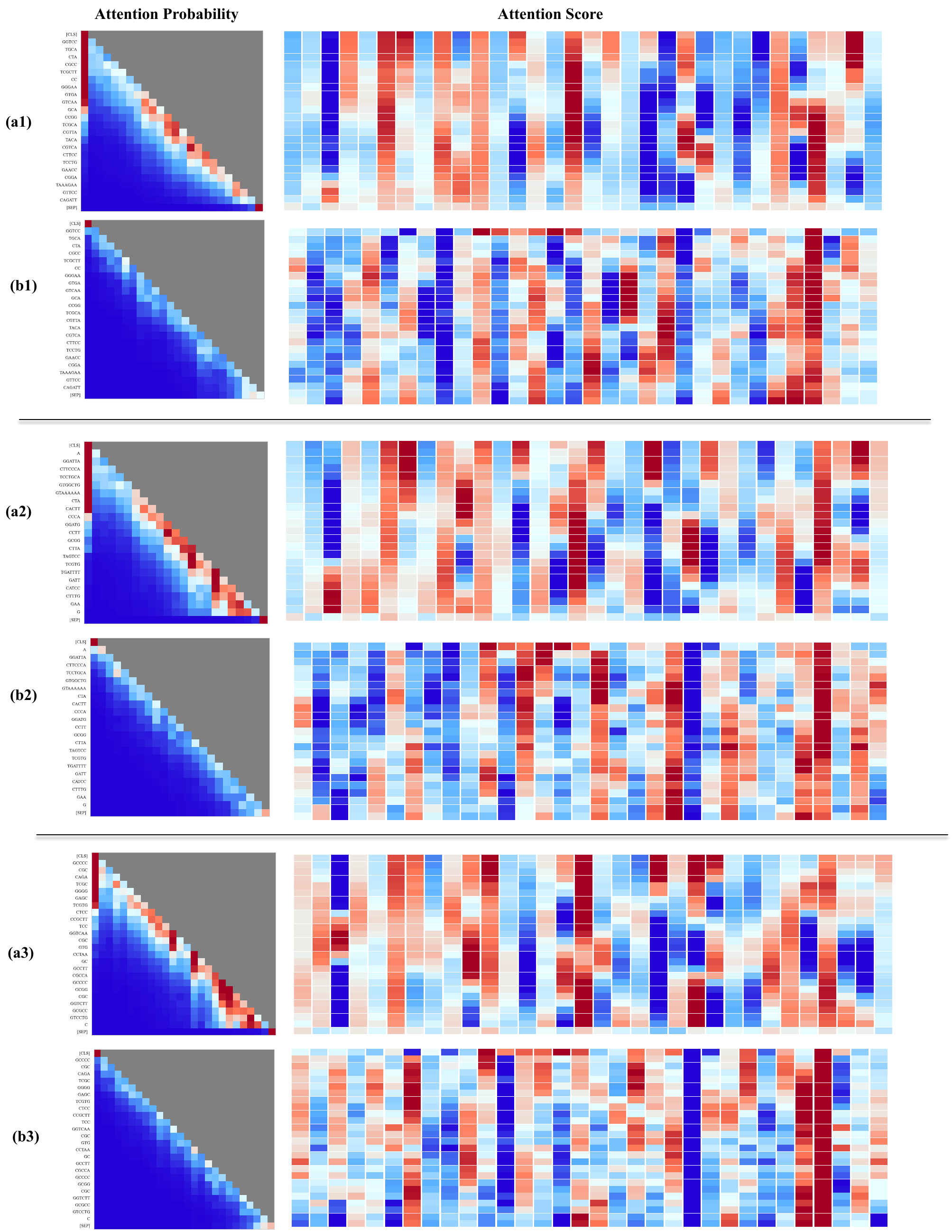}
    \caption{\textbf{Attention Distribution Visualization in \sys.} Comparison of attention probabilities and outputs for a genomic sample between DNABERT-2 and NT-500M-human. Heatmaps from the final hidden layers are scaled from 0 (blue) to 1 (red). In the figure, all rows labeled (a) correspond to the vanilla DNABERT-2, while all rows labeled (b) represent the \sys version. The vanilla model exhibits a broad attention spread, which dilutes focus across tokens, whereas \sys concentrates attention on key tokens, enhancing both efficiency and interpretability.}
    \label{fig:attention}
\end{figure}

\section{Additional Discussion and Limitation}
\label{ap:limitation}
In this section, we provide additional discussion on the limitations of \syt. As a trade-off approach, \syt is designed as a practical extension of GERM for continual low-resource adaptation, minimizing recomputation by reusing pre-trained checkpoints and applying small-step updates. While it achieves a balance between performance and computational cost under 8-bit quantization, one key limitation is that \syt performs worse in certain quantization and low-rank adaptation settings, particularly under low-bit quantization scenarios. This performance degradation results primarily from the restricted optimization scope imposed by small-step fine-tuning and the accumulation of approximation errors. Comparing with \sys, these errors prevent \syt from fully mitigating inherent outliers, leading to increased \textit{average kurtosis} and \textit{maximum infinity norm}. In future work, we explore the underlying causes of this degradation in greater depth and develop a more robust quantization-aware training (QAT) approach to better manage these trade-offs.

\section{Definition of Outlier in Our Paper}
\label{ap:definition}
In this section, we provide more detailed deification of outlier in our paper. 
In our work, we define \textbf{outliers} as \textit{tokens or activations that disproportionately influence the attention mechanism}, despite containing little or no meaningful information. These outliers emerge when the softmax function amplifies the attention probabilities of tokens that ideally should receive minimal or zero focus.
We use the following attention mechanism to analyze this behavior:
\begin{align*}
 \text{Output} = \text{Residual} \left( \text{Softmax} \left( \frac{QK^\top}{\sqrt{d}} \right) V + X \right).   
\end{align*}

As shown in~\citet{hu2024outlier}, if the attention input $X$ already contains sufficient information, the attention mechanism within the residual connection should ideally behave like an \textit{identity transform}, producing near-zero attention outputs:
\begin{align*}
    \text{Softmax} \left( \frac{QK^\top}{\sqrt{d}} \right) V \approx 0.
\end{align*}

In such cases, tokens with high values in $V$---which may represent biologically significant features---should still receive near-zero attention probabilities.

\paragraph{Why Classic Softmax Fails.} 
The problem arises from how the softmax function normalizes probabilities. Softmax enforces that all probabilities sum to one, which inherently magnifies the attention probabilities assigned to \textit{low-value tokens}. This unwanted amplification broadens the attention score distribution and introduces \textbf{outliers}---tokens that exert disproportionate influence despite their low informational value.

These outliers are particularly problematic in genomic models like DNABERT-2, where certain regions of genomic sequences---such as repetitive patterns, low-complexity sequences, or non-coding regions---resemble no-op tokens. While these regions carry minimal biological relevance, the classic softmax mechanism may assign them higher-than-expected attention scores, diverting focus away from meaningful genomic features.

\paragraph{Intuition Behind Outliers in Genomic Models.} 
Outliers in genomic models typically arise from sequence patterns that produce anomalous query-key interactions in the attention mechanism. While genomic data lacks traditional ``words'' as in language models, certain biological patterns exhibit similar behavior. Key examples include:

\begin{itemize}
    \item \textbf{Low-Complexity Regions (e.g., Poly-A or Poly-T Sequences):} 
    Genomic sequences often include regions with runs of identical bases (e.g., \texttt{AAAAA...}, \texttt{TTTTT...}). These sequences contain minimal unique information yet can produce large, uniform dot-product values in the attention mechanism. This causes softmax to assign exaggerated probabilities to these low-information tokens, effectively making them outliers.

    \item \textbf{Repetitive Motifs and Tandem Repeats:} 
    Certain genomic regions, such as microsatellites and tandem repeats, involve recurring nucleotide patterns that behave similarly to low-value tokens in the attention mechanism. These patterns exhibit strong internal correlations, often resulting in softmax overemphasizing them as if they were biologically significant.

    \item \textbf{Boundary and Spacer Elements (e.g., Alignment Padding or Non-coding Spacer Sequences):} 
    In genomic datasets, artificial padding sequences, non-coding segments, or spacer sequences are sometimes introduced to ensure proper sequence alignment. These tokens are intended to have no biological relevance, yet softmax’s behavior inadvertently amplifies their attention scores, creating noise that distorts meaningful patterns.
\end{itemize}

}

\end{document}